\documentclass[11pt]{article}
\usepackage{mystyle}
\usepackage{fullpage}
\allowdisplaybreaks[4]

\usepackage{amsmath}
\usepackage{amssymb}
\usepackage{mathtools}
\usepackage{mathrsfs}
\usepackage{amsthm}

\usepackage{color}
\usepackage{algorithmic}
\usepackage{algorithm}
\usepackage{titletoc}
\usepackage{pdfpages}


\usepackage{hyperref,etoolbox}
\hypersetup{hidelinks}
\hypersetup{
colorlinks=true,
linkcolor=blue,
citecolor=blue
}


\def\##1\#{\begin{align}#1\end{align}}
\def\$#1\${\begin{align*}#1\end{align*}}


\usepackage{thm-restate}

\usepackage{xcolor}


\usepackage{enumitem}

\usepackage{bbding}
\usepackage{makecell}

\usepackage[utf8]{inputenc} 
\usepackage[T1]{fontenc}    
\usepackage{hyperref}       
\usepackage{url}            
\usepackage{booktabs}       
\usepackage{amsfonts}       
\usepackage{nicefrac}       
\usepackage{microtype}      
\usepackage{xcolor}         
\usepackage{graphicx}
\usepackage{titlesec}
\usepackage{subfigure}

\usepackage{titletoc}

\usepackage{booktabs} 
\usepackage{caption}
\theoremstyle{plain}

\usepackage[textsize=tiny]{todonotes}

\let\tilde\widetilde

\newcommand{\compilefullversion}{true}
\ifthenelse{\equal{\compilefullversion}{false}}{%
	\newcommand{\OnlyInFull}[1]{}
	\newcommand{\OnlyInShort}[1]{#1}
}{%
	\newcommand{\OnlyInFull}[1]{#1}%
	\newcommand{\OnlyInShort}[1]{}%
}%

\usepackage{color,xcolor}
\usepackage{colortbl}
\usepackage{bbm}
\usepackage{tcolorbox}

\definecolor{red1}{HTML}{f47983}
\definecolor{blue1}{HTML}{3eede7}
\definecolor{yellow1}{HTML}{f5dd6f}

\title{\LARGE A General Framework for Sequential Decision-Making under Adaptivity Constraints}
\author{Nuoya Xiong\thanks{IIIS, Tsinghua University. Email: \texttt{xiongny20@mails.tsinghua.edu.cn}.} \qquad\quad    Zhaoran Wang\thanks{Northwestern University. Email: \texttt{zhaoranwang@gmail.com}.}
\qquad \quad Zhuoran Yang\thanks{Yale University. Email: \texttt{zhuoran.yang@yale.edu}.} }
\date{}
\begin{document}

\maketitle


\begin{abstract} 
     We take the first step in studying general sequential decision-making under two adaptivity constraints:  rare policy switch and batch learning.  First, we provide a general class called the Eluder Condition class, which includes a wide range of  reinforcement learning classes. 
    Then, for the rare policy switch constraint, we provide a generic algorithm to achieve a $\widetilde{\cO}(\log K) $ switching cost with a $\widetilde{\cO}(\sqrt{K})$ regret on the EC class. For the batch learning constraint, we provide an algorithm that provides a $\widetilde{\cO}(\sqrt{K}+K/B)$ regret with the number of batches $B.$
    This paper is the first work considering rare policy switch and  batch learning  under general function classes, which covers nearly all the models studied in the previous works such as  tabular MDP \citep{baiyu2019lowswitchcosttabular,zhang2020lowswitchcosttabular}, linear MDP \citep{quanquangu2021linearlowcost,gao2021lowcostlinear}, low eluder dimension MDP \citep{kong2021eluderlowcost,zhuoranyang2022lowcosteluder}, generalized linear function approximation \citep{qiao2023logarithmic}, and also some new classes such as the low $D_\Delta$-type Bellman eluder dimension problem, linear mixture MDP, kernelized nonlinear regulator and undercomplete partially observed Markov decision process (POMDP). 
\end{abstract}

\section{Introduction}

Reinforcement Learning (RL) provides a systematic  framework for  solving large-scale sequential decision-making problems and has demonstrated striking empirical successes across various domains \cite{li2017deep}, including  games \citep{silver2016mastering,vinyals2019grandmaster},  robotic control \citep{akkaya2019solving},  healthcare   \citep{yu2021reinforcement},  hardware device placement \citep{mirhoseini2017device}, recommender systems \citep{zou2020pseudo}, and so on.

In the online setting, an RL algorithm iteratively finds the optimal policy of the sequential decision-making problem by (i) deploying the current policy to gather data and (ii) using the collected data to learn an improved policy.  
Most of  provably sample efficient algorithms  in the existing literature consider  an ideal setting where policy updates can be \emph{fully adaptive}, i.e., the policy can be updated after each episode, using the data sampled from this newly finished  episode. 
From a practical perspective, however, updating the policy after each episode can be unrealistic,  
especially  when  computation  resources are limited, or  the cost of policy switching is prohibitively high, or the data is not fully serial. 
For example, in recommender systems, it is unrealistic to change the policy after each instantaneous data such as a click of one of the customers. 
Moreover, the customers might not come in a serial manner -- it is possible that multiple customers arrive at the same time and we need to 
make simultaneous decisions. 
Similarly, when the RL  algorithms are deployed  on the large-scale hardwares,  changing a policy may need recompiling the code or changing the physical placement for devices, incurring considerable switching costs. 
Thus, when it comes to designing RL algorithms in  these scenarios, in addition to achieving sample efficiency, 
we also aim to  reduce  or limit the number of policy switches.

Such an additional restriction is known as 
the \textit{adaptivity constraints} \citep{quanquangu2021linearlowcost}. 
There are two common types of adaptivity constraints: the rare policy switch constraint \citep{perchet2016batched,gu2021batched} and the batch learning constraint~\citep{abbasi2011improved}. With the rare policy switch constraint, the agent adaptively decides when to update the policy during the course of the online reinforcement learning, 
and the goal is 
to achieve the sample efficiency that is comparable to the fully adaptive setting, while minimizing the number of policy switches. 
With the batch learning constraint, the total number of  batches $B$ is pre-determined, and the agent has to follow the same policy within each batch. 
In other words, the number of policy switches is limited by the number of batches. 
In addition to designing the policies used in each batch, the agent additionally needs to decide 
how to split the total $K$ episodes in to $B$ batches before interacting with the environment.

Moreover, in many real applications of RL such as recommender systems, the state space can be extremely large or even infinite \citep{chen2019top}. 
Function approximation is an effective tool for handling 
such a  challenge 
and has been extensively studied in the literature under the fully adaptive setting  \citep{jiang@2017,sun2019model,foster2021statistical, jin2021bellman,zhong2022gec, chen2022abc}. 
Accordingly, a few previous works provide provably sample-efficient RL algorithms under adaptivity constraints for MDPs with  linear and generalized linear structures \cite{gao2021lowcostlinear, quanquangu2021linearlowcost,qiao2023logarithmic} and 
low-eluder-dimension MDPs \citep{kong2021eluderlowcost, zhuoranyang2022lowcosteluder}. 
However, it remains open when considering more general classes such as Bellman Eluder dimension \citep{jin2021bellman,qiao2023logarithmic}. This motivates us to consider the following question:
 
 \begin{center} {\bf Can we design  sample-efficient RLs algorithm under adaptivity constraints in the context of general function approximation?}
 \end{center}

In this work, we establish the first algorithmic framework under adaptivity constraints for a general function  class named \textbf{E}luder-\textbf{C}ondition (EC) class. 
Our framework  applies to both single-agent MDP and one player in a zero-sum Markov game. 
Besides,  EC class contains many popular MDP and Markov game models studied in the previous literature.  
Some examples contained in our framework and the comparison of previous works are shown in Table~\ref{table:comparison}.  
We also provide some additional examples in \S\ref{sec:Examples} and \S \OnlyInFull{\ref{appendix:additional examples}}\OnlyInShort{F}.
 
 For the rare policy switch problem, motivated by the optimistic  algorithms such as  GOLF \citep{jin2021bellman} and OPERA \citep{chen2022abc}, our algorithm  constructs an optimal confidence set and computes the optimal policy via optimistic planning based on the confidence set.  
 Rather than updating the confidence set at each episode, we use a more delicate strategy to reduce the number of policy switches. 
As we employ optimistic planning, switching a policy essentially means that we update the confidence set that contains the true hypothesis. 
To reduce the number of policy switches, we update the confidence set only when the update provide considerable improvement in terms of the estimation. 
In particular, 
in each episode, we 
first estimate the  
improvement provided by the new confidence set, and  then only  decide to update the confidence set and optimal policy when the estimated improvement exceeds a certain threshold. 
Our lazy switching strategy  can reduce the number of policy switches from $\cO(K)$ to $\cO(\mbox{poly}(\log K))$, leading to an exponential improvement in terms of policy switches. We also refer to the number of policy switches as \textit{the switching cost}.  Meanwhile, for the batch learning problem, we use a fixed uniform grid that divides $K$ episodes into $B$ batches. While the uniform grid is an intuitive and common  choice \citep{quanquangu2021linearlowcost, han2020sequential}, analyzing the regret of this approach under general function classes requires new techniques.
 Our work takes the first step in studying the rare policy switch problem and the batch learning problem with general function approximation.
 \begin{table}\footnotesize\centering
 \caption{Comparison of previous representative work for adaptivity constraints and our works. For function classes with low eluder dimension, \cite{kong2021eluderlowcost} needs a strong value-closeness assumption, while we do not need that assumption.}
\begin{tabular}{ccccc}

\midrule[1.5pt]
 &  \makecell{\cite{quanquangu2021linearlowcost}} &  \makecell{\cite{kong2021eluderlowcost}}& \cite{qiao2023logarithmic}  & Ours \\ \hline
\makecell{Tabular MDP}    & \tiny\Checkmark & \tiny\Checkmark & \tiny\Checkmark & \tiny\Checkmark\\ \hline
\makecell{Linear MDP\vspace{-0.3em}\\ \tiny\citep{jin2020provably}}      & \tiny\Checkmark  & \tiny\Checkmark  & \tiny\Checkmark  & \tiny\Checkmark  \\ \hline
\makecell{Low Eluder Dimension\vspace{-0.3em}\\\tiny\citep{russo2013eluder}} &   \tiny\XSolidBrush   &   \makecell{\tiny\Checkmark}   &   \tiny\XSolidBrush   &     \makecell{\tiny\Checkmark} \\ \hline
\makecell{Low Inherent Bellman Error\vspace{-0.3em}\\\tiny\citep{zanette2020learning}} & \tiny\XSolidBrush &   \tiny\XSolidBrush   &  \tiny\Checkmark    &   \tiny\XSolidBrush   \\ \hline
\makecell{Low $D_\Delta$-type BE Dimension\vspace{-0.3em}\\\tiny\citep{jin2021bellman}} & \tiny\XSolidBrush  & \tiny\XSolidBrush  & \tiny\XSolidBrush  & \tiny\Checkmark \\ \hline
\makecell{Linear Mixture MDP \vspace{-0.3em}\\\tiny\citep{ayoub2020model}}  &   \tiny\XSolidBrush   &   \tiny\XSolidBrush   &  \tiny\XSolidBrush    &   \tiny\Checkmark   \\ \hline
\makecell{Kernelized Nonlinear Regulator\vspace{-0.3em}\\\tiny\citep{kakade2020information}}     &    \tiny\XSolidBrush  &   \tiny\XSolidBrush   &   \tiny\XSolidBrush   &   \tiny\Checkmark   \\ \hline
\makecell{SAIL Condition\vspace{-0.3em}\\\tiny\citep{liu2022optimistic}}   &  \tiny\XSolidBrush    &  \tiny\XSolidBrush    &    \tiny\XSolidBrush  &     \tiny\Checkmark \\ \hline
\makecell{Undercomplete POMDP   \vspace{-0.3em}\\\tiny\citep{liu2022partially}} &    \tiny\XSolidBrush &   \tiny\XSolidBrush   &  \tiny\XSolidBrush    &    \tiny\Checkmark  \\ \hline
\makecell{Zero-Sum Markov Games with\\Low Minimax BE Dimension\vspace{-0.3em}\\\tiny\citep{huang2021markovgamegeneral}}   &  \tiny\XSolidBrush    &  \tiny\XSolidBrush    &  \tiny\XSolidBrush    &   \tiny\Checkmark   \\ \bottomrule[1.5pt]
\end{tabular}
\vspace{0.6em}
\label{table:comparison}
\end{table}

In summary, we make the following contributions:
  
  $\bullet$   We provide a general function classes called the $\ell_2$-Eluder Condition (EC) class and the $\ell_1$-EC class, and then show that the EC class contains a wide range of previous RL models with general function approximation, such as low $D_\Delta$-type Bellman eluder dimension model, linear mixture MDP, KNR and  Generalized Linear Bellman Complete MDP.  

   $\bullet$   We develop a generic algorithm $\ell_2$-EC-Rare Switch (RS) for the $\ell_2$-type EC class. The algorithm uses optimistic estimation to achieve a  $\widetilde{\cO}(H\sqrt{dK})$ regret,  updates the confidence set, and changes the policy by a delicate strategy to achieve a $\widetilde{\cO}(dH \log K)$ switching cost, where $d$ is the parameter in the EC class and $\widetilde{\cO}$ contains the logarithmic term except for $\log K$.   In Appendix \OnlyInFull{\ref{appendix:l1 EC class}}\OnlyInShort{C}, we also provide $\ell_1$-EC-RS algorithm for the $\ell_1$-type EC class. 
    We apply our results to some specific examples, showing that our method is sample-efficient with a low switching cost. 

    $\bullet$  For batch learning problems, we also develop an intuitive 
 and generic algorithm $\ell_2$-EC-Batch that achieves a $\widetilde{\cO}(\sqrt{dK}+dK/B)$ regret, where $B$ is the number of batches. Our regret is comparable to the existing works for batch learning in the linear MDP \citep{quanquangu2021linearlowcost} and also matches their lower bound.

{\noindent \bf Related Works.} Our paper is closely related to the prior research on RL with general function approximation and RL with adaptivity constraints. A comprehensive summary of the related literature can be found in \S \ref{appendix: related work}.

\section{Preliminaries}\label{sec:background}

\paragraph{Episodic  MDP}

A finite-horizon, episodic Markov decision process (MDP) is represented by a tuple $(\cS, \cA, H, \PP, r)$, where $\cS$ and $\cA$ denote the state space and action space; $H$ is the length of each episode, 
$\PP = \{\PP_h\}_{h \in [H]}$ is the transition kernel, where $\PP_h(s_{h+1} \mid s_h,a_h)$ represents the probability to arrive state $s_{h+1}$ when taking action $a_h$ on state $s_h$ at step $h$; $r = \{r_h(s,a)\}_{h \in [H]}$ denotes the deterministic reward function after taking action $a$ at state $s$ and step $h$.  We assume $\sum_{h=1}^H r_h(s_h,a_h) \in [0,1]$ for all possible sequences $\{s_h,a_h\}_{h=1}^H $.
A deterministic Markov policy $\pi = \{\pi_h\}_{h \in [H]}$ is a set of $H$ functions, where $\pi_h: \cS \rightarrow \Delta_{\cA}$ is a mapping from state to an action. 
For any policy $\pi$, its action value function $Q^{\pi}_h(s,a)$ and state value function $V_h^\pi(s)$ are defined as 
\$
Q^\pi_h(s,a) &= \EE_{\pi} \Bigg[\sum_{h' = h}^H r_{h'}(s_{h'}, \pi_{h'}(s_{h'})) \Bigg|s_{h}=s, a_{h}=a \Bigg],\\
V^\pi_h(s) &= \EE_{\pi} \Bigg[\sum_{h' = h}^H r_{h'}(s_{h'}, \pi_{h'}(s_{h'})) \Bigg|s_{h}=s \Bigg].
\$
To simplify the presentation, without loss of generality, we assume  the initial state is fixed at $s_1$. The optimal policy $\pi^*$ maximizes the value function, i.e., $\pi^* = \argmax_{\pi \in \Pi} V_1^{\pi}(s_1)$. We also denote $V^* = V^{\pi^*}$ and $Q^* = Q^{\pi^*}$. 
Note that the function $Q^*$ is the unique solution to  the Bellman equations $Q_h^*(s,a)=\cT_hQ_{h+1}^*(s,a) $, where the Bellman operator $\cT$ is defined by
\begin{align}
(\cT_hQ_{h+1})(s,a) = r_h(s,a) + \EE_{s'\sim \PP_h(s'\mid s,a)}\max_{a \in \cA}Q_ {h+1} (s,a).\label{eq:bellman operator for MDP}
\end{align}
We also study the low switching-cost problems under zero-sum Markov Games, and we put the definitions,  learning objective, algorithm and results into \S\ref{def of zero-sum MG}.

\paragraph{Function Approximation}
Generally speaking, under the function approximation setting, we can access to a hypothesis class $\cF$ which captures the key feature of the value functions (in the model-free setting) or the transition kernels and the reward functions (in the model-based setting) of the RL problem. In specific, 
let $M$ denote the MDP instance, which is clear from the context. We assume that we have access to a hypothesis class $\cF = \cF_1\times \cdots \times \cF_H$, where a  hypothesis function $f = \{ f_1, \ldots,  f_H \}  \in \cF $ 
either represents an  action-value function $Q_f = \{ Q_{h,f}\}_{h\in [H]}$ in the model-free setting, or the environment model of MDP $M_f = \{ \PP_{h,f} ,  r_{h,f} \}_{h\in [H]}$ in the model-based setting. 
Moreover, for any $f \in \cF$, let $\pi_f$ denote the optimal policy corresponding to the hypothesis $f$. 
That is, 
under the model-free setting, $\pi_f$ is the greedy policy with respect to $Q_f$, i.e.,   $\pi_{h,f}(s) = \arg\max_{a \in \cA}Q_{h,f}(s,a)$. 
Moreover, given $Q_f$ and $\pi_f$, we define state-value function $V_f$ by letting 
$V_{h,f}(s) =  \EE_{a \sim \pi_{h,f}(s)}[Q_{h,f}(s,a)]$ in the MDP. 
Furthermore, under the model-based setting, let $M_f=(\PP_f, r_f)$ be the transition kernel and reward function associated with the hypothesis $f$. 



Similar to the previous works  \citep{chen2022abc}, we impose the  following realizability assumption to make sure the true MDP or MG model $M$  is captured by  the hypothesis class $\cF$.
\begin{assumption}[Realizability]\label{assum:realizability}
    A hypothesis class $\cF$ satisfies the realizability condition  if there exists a hypothesis function $f^* \in \cF$ such that  $Q_{h,f^*} = Q_h^*, V_{h,f^*} = V_h^*$  for all $h\in [H]$.
\end{assumption}


\paragraph{Learning Goal}
In this paper, we aim to design online reinforcement learning algorithms for the  rare policy switches problem  and the
batched learning problem.  Assume the agent executes the policy $\pi^k$ in the $k$-th episode for all  $k \in [K]$, the regret of the agent is defined as
    \#\label{eq:define_regret}
    R(K) = \sum_{t= 1}^K  (V^*_1(s_1) - V^{\pi^k}_1(s_1) ) .
    \#

 The switching cost is the number of policy switches during the interactive process. Assume the agent uses the policy $\pi^t$ in $t$-th episode, the switch cost at $T$-th episodes is:
\$
    N_{\mbox{switch}}(K) = \sum_{k=1}^K \II\{\pi^k \neq \pi^{k+1}\}.
\$
In this paper, for the rare policy switch problem, we aim to achieve a logarithmic switching cost and maintain a $\widetilde{\cO}(\sqrt{K})$ regret.
In other words, 
we aim to design an algorithm such that $ R(K) = \tilde \cO(\sqrt{K})$ while $N_\mathrm{switch}(K) = \mathrm{poly}\log(K)$, where we ignore problem dependent quantities and $\tilde \cO(\cdot ) $ omits logarithmic terms.

For the batch learning problem, let $B \in [K] $ be a fixed integer. The agent of the MDP or the max-player in the zero-sum MG \emph{pre-determines} a grid $1= k_1<k_2<\cdots<k_{B+1}=K+1$ with $B+1$ points that split the $K$ episodes into $B$ batches 
$\{ k_1, \ldots,  k_2 -1 \} , \ldots, \{ k_2, \ldots, k_3 -1 \} , \ldots, \{ k_{B-1}, \ldots 
k_{B} -1 \} ,  \{ k_B, \ldots, K\}$. 
In an MDP, the agent can only execute the same policy within a batch and change the policy only at the end of a batch. 
In a zero-sum MG, batch learning requires that 
the max-player can only change her policies at the end of each batch. 
Meanwhile, the min-player is free to change the policy after each episode. 
Similarly, the agent (max-player) aims  to minimize the regret in \eqref{eq:define_regret} by (a) selecting the batching grid at the beginning of the algorithm and (b) designing the $B$ policies that are executed in each batch. 
Furthermore, in the case of $B= K$, the problem is reduced to a standard online reinforcement learning  problem.  

The difference between the 
rare policy switch setting and batch learning setting is that, 
in the former case, the algorithm can adaptively  decide when to switch the policy based on the data, whereas in the latter case, the episodes where the agent adopts a new policy are deterministically decided before the first episode.  
In other words, 
in reinforcement learning with rare policy switches, we are confident to achieve a sublinear $\tilde \cO(\sqrt{K})$ regret, e.g., using an online reinforcement learning algorithm that switches the policy after each episode. 
The goal is to attain the desired regret with a small  number of policy switches. 
In contrast, 
in the batch learning setting, with $B$ fixed, we aim to minimize the regret, under the restriction that the number of policy switches is no more than $B$.

\section{Eluder-Condition Class} \label{sec:ET}

To handle the RL problems with adaptivity constraints, 
we propose a general class called Eluder-Condition (EC) class, which has a stronger eluder assumption and thus helps us to control the adaptivity constraints.
 There are two types of EC class: $\ell_2$-EC class and $\ell_1$-EC class. 
 We mainly discuss the $\ell_2$-EC class in the main text, and introduce the $\ell_1$-EC class in \S \OnlyInFull{\ref{appendix:l1 EC class}}\OnlyInShort{C}. 
 We first consider the function class with low $D_\Delta$-type BE dimension \citep{jin2021bellman} as a primary example to show our stronger eluder assumption. Define the Bellman residual $\cE_h(f)(s_h,a_h) = (f_h-\cT(f_{h+1}))(s_h,a_h)$ for all $h \in [H].$ In the eluder argument (Lemma 17) of \cite{jin2021bellman}, it is proven  that for any sequence $\{f^k\}_{k=1}^K$, if the Bellman error of $f^k$ and historical data $\{s_h^i,a_h^i\}_{i=1}^{k-1}$ satisfy 
 $
     \sum_{i=1}^{k-1} \left(\cE(f_h^k)(s_h^i,a_h^i)\right)^2 \le \beta,
 $
 then the in-sample error can be bounded by $\widetilde{\cO}(\sqrt{K})$, i.e. 
 \begin{align}\label{intuition: i,i}
     \sum_{i=1}^{k} \left|\cE(f_h^i)(s_h^i,a_h^i)\right| \le \cO(\sqrt{d\beta k}), \ \forall k \in [K].
 \end{align}
 However, \eqref{intuition: i,i} is not enough to control the adaptivity constraints such as the switching cost. 
 Instead, we find that a slightly stronger assumption in  \eqref{intuition:stronger} below helps us reduce the switching cost.
\begin{align}\label{intuition:stronger}
     \sum_{i=1}^{k} \left(\cE(f_h^i)(s_h^i,a_h^i)\right)^2 \le \cO(d\beta \log k), \ \forall k \in [K].
 \end{align}
 It is easy to show that \eqref{intuition:stronger} is slightly stronger than  \eqref{intuition: i,i} by Cauchy's inequality. However, this stronger assumption  enables  help us to achieve a low switching cost through some additional analyses.
Moreover, in Section \ref{sec:Examples}, we show that  \eqref{intuition:stronger} also holds for a wide range of tractable RL problems studied in the previous work such as linear mixture MDP, $D_\Delta$-type BE dimension and KNR. 
Now we provide the formal definition of the $\ell_2$-EC class.
To provide a unified treatment for both MDP and MG, we let $\{ \zeta_h , \eta_h\}_{h\in [H]} $ be subsets of the trajectory.  
In particular, we let $\eta_h = \{ s_h, a_h\}$ and $\zeta _h = \{s_{h+1} \}$ in a single-agent MDP, and let $\eta_{h} = \{ s_h, a_h, b_h \}$ and $\zeta_h = \{ s_{h+1}\}$ in a two-player zero-sum MG.

\begin{definition}[$\ell_2$-type EC Class]
    Given a   MDP or MG instance $M$, let $\cF$ and $\cG$ be  two hypothesis function classes   satisfying the realizability Assumption~\ref{assum:realizability} with  $\mathcal{F}\subset \mathcal{G}$. 
    For any $h\in [H]$ and $f' \in \cF$, let
    $\ell_{h,f'}(\zeta_h, \eta_h, f,g)$ be a vector-valued and bounded   loss function which serves as a proxy of the Bellman error at step $h$, where $f, f'\in \cF, g \in \cG$, and $\zeta_h, \eta_h$ are subsets of trajectory defined above.
    Moreover, we assume that 
    $\Vert\ell_{h,f'}(\zeta_h, \eta_h, f,g)\Vert_2$ is upper bounded by a constant $R$ for all $h$, $(f', f,g)$, and $(\zeta_h, \eta_h)$. 
    For parameters $d$ and $\kappa$, we say that $(M, \mathcal{F}, \mathcal{G}, \ell,  d,\kappa)$ is a $\ell_2$-type  EC class if the following two conditions hold for any $\beta\ge R^2$ and $h \in [H]$:


    (i). ($\ell_2$-type Eluder Condition) 
    For any $K$ hypotheses $f^1, \ldots, f^K \in \cF$, 
    if
    \begin{align}\sum_{i=1}^{k-1}\Big\Vert\mathbb{E}_{\zeta_h}\Big[\ell_{h,f^i}(\zeta_h, \eta_h^i, f^k,f^k)\Big] \Big\Vert_2^2 \le \beta\label{eq:l2 condition precondition}\end{align} holds for any $k \in [K]$, then  we have
    \begin{align}
        \sum_{i=1}^{k} \Big\Vert\mathbb{E}_{\zeta_h}\Big[\ell_{h,f^i}(\zeta_h, \eta_h^i, f^i, f^i)\Big]\Big\Vert_2^2 \le \cO(d\beta\log k), \ \ \forall \ k \in [K], \label{eq:l2cond}
    \end{align}
    where we consider $R$ as a constant and ignore it in $\cO(\cdot)$.
    

    (ii). ($\kappa$-Dominance) There exists a parameter $\kappa$ such that, for any $k \in [K]$, with probability at least $1-\delta$,
\begin{align} 
        \sum_{i=1}^k \big (V_{1,f^i}(s_1)-V^{\pi_i}(s_1) \big )
        \le \kappa \cdot \left(\sum_{h=1}^H \sum_{i=1}^{k} \EE_{\eta_h \sim \pi_i} \Bigl [ \Big\Vert\mathbb{E}_{\zeta_h}[ \ell_{h,f^i}(\zeta_h, \eta_h, f^i,f^i)] \Big\Vert_2 \Bigr] \right)\label{def:loss dominance}. 
    \end{align}
\end{definition}


In this definition, 
the $\kappa$-dominance property \eqref{def:loss dominance} shows that the final regret is upper bounded by the cumulative expectation of in-sample loss, which is standard in many previous works~\citep{du2021bilinear,chen2022abc}. The $\ell_2$-type eluder condition is a generalized version of  \eqref{intuition:stronger}. 
Indeed, when we choose $\zeta_h = \{s_{h+1}\}, \eta_h = \{s_h,a_h\}$, and $\ell_{h,f'}(\zeta_h, \eta_h, f, g) = Q_{h,g}(s_h,a_h) - r(s_h,a_h) - V_{h+1,f}(s_{h+1})$, the $\ell_2$-type condition in  \eqref{eq:l2cond} can be regarded as the condition involving the Bellman error, as shown in  \eqref{intuition: i,i}. 
Intuitively, the term $\sum_{i=1}^{k-1} \|\EE_{\zeta_h}[\ell_{h,f^i}(\zeta_h, \eta_h^i,f^k,f^k)]\|^2 $ in Eq.\eqref{eq:l2 condition precondition} represents the discrepancy between the function $f^k$ and the previous data. This term can be regarded as the estimation error after $k-1$ episodes. The term $\sum_{i=1}^{k} \|\mathbb{E}_{\zeta_h}[\ell_{h,f^i}(\zeta_h, \eta_h^i, f^i, f^i)]\|_2^2$ in Eq.\eqref{eq:l2cond} represents the discrepancy between $f^i$ and the data of $i$-th episode, which serves as an upper bound of the regret incurred in the first $k$ episodes due to the $\kappa$-dominance condition Eq.\eqref{def:loss dominance}.
Hence EC class connects these two terms which has the following implication: The regret of an optimistic algorithm is small as long as it generates a sequence of functions $\{f^k\}_{k \in [K]}$ such that the estimation error of $f^k$ on the data given by the previous $k-1$ episodes is small. The parameter $d$
 quantifies the hardness of achieving low regret via a small estimation error.
From the previous discussion, it is easy to show that our assumption is stricter than the previous works, and this stricter assumption can help us to reduce the switching cost by some additional original analyses. 
As we will show later in Section \ref{sec:Examples}, it is satisfied by many previous important models like $D_\Delta$-type BE dimension \citep{jin2021bellman}, which includes low eluder dimension~\citep{kong2021eluderlowcost, zhuoranyang2022lowcosteluder} and linear MDP~\citep{gao2021lowcostlinear, quanquangu2021linearlowcost}.

Moreover, in the $\ell_2$-type EC class, we consider the decomposable loss function \citep{chen2022abc}. The decomposable property generalizes one of the properties of Bellman error and implies the completeness assumption in previous work \citep{jin2021bellman}.
\begin{definition}[Decomposable Loss Function (DLF) \citep{chen2022abc}]\label{def:DLF}
 The loss function $\ell_{h,f'}(\zeta_h, \eta_h, f, g)$ is decomposable if there exists an operator $\mathcal{T}:\mathcal{F}\to \mathcal{G}$, such that 
    \begin{align}
        \ell_{h,f'} (\zeta_h, \eta_h,  f, g) - \mathbb{E}_{\zeta_h}\Big[\ell_{h,f'}(\zeta_h, \eta_h, f, g)\Big] = \ell_{h,f'}(\zeta_h, \eta_h, f,\mathcal{T}(f)).\label{eq:dlf}
    \end{align}
    Also, the operator $\cT$ satisfies that $\mathcal{T}(f^*) = f^*$.
\end{definition} 
The decomposable property claims that for any $f \in \cF, g \in \cG$, there exists a function $\cT(f) \in \cG$ that is independent of  $g$ satisfying  \eqref{eq:dlf}, which can be regarded as a generalized completeness assumption. 
For example, in the function classes with low $D_\Delta$-type BE dimension for single-agent MDP, the operator $\cT$ is selected as the Bellman operator for single-agent MDP, which is given in  \eqref{eq:bellman operator for MDP}.
In this case, we can choose when we choose $\zeta_h = \{s_{h+1}\}, \eta_h = \{s_h,a_h\}$, and $\ell_{h,f'}(\zeta_h, \eta_h, f, g) = Q_{h,g}(s_h,a_h) - r(s_h,a_h) - V_{h+1,f}(s_{h+1})$, then 
we have 
\begin{align*}
& \ell_{h,f'}(\zeta_h, \eta_h,f,g) - \EE_{\zeta_h}\big[\ell_{h,f'}(\zeta_h, \eta_h,f,g)\big] \notag \\
& \qquad   =(\cT_hV_{h+1}(s_h,a_h)-r(s_h,a_h) - V_{h+1,f}(s_{h+1}))  =\ell_{h,f'}(\zeta_h, \eta_h, f, \cT(f)) , 
\end{align*}  
where $\cT$ is the Bellman operator. 
See Section \ref{sec:Examples} for details. 
For some other examples like linear mixture MDP and KNR, the operator $\cT$ is chosen as the optimal operator $\cT(f) = f^*$.

In recent years, the $\ell_1$-eluder argument is proposed in \cite{liu2022partially} and followed by \cite{liu2022optimistic} to provide another way for the sample-efficient algorithm of POMDP. 
In  \S \OnlyInFull{\ref{appendix:l1 EC class}}\OnlyInShort{C}, we also provide a similar EC class named $\ell_1$-type EC class. 
Compared to the $\ell_2$-EC class,  $\ell_1$-EC class replaces the square sum in  the $\ell_2$-type EC property (Eq. \eqref{eq:l2cond}) by a standard  sum. By considering a particular model-based loss function, the $\ell_1$-type EC class can reduce to the assumption in \cite{liu2022optimistic}. 
We provide a sample-efficient algorithm for $\ell_1$-EC class with low switching cost
in \S \OnlyInFull{\ref{appendix:l1 EC class}}\OnlyInShort{B} and a batch learning algorithm in Appendix \ref{appendix:batch l1}.

\section{Rare Policy Switch Problem} \label{sec:alg}


In this section, we propose an algorithm for the $\ell_2$-type EC class that achieves a low switching cost. 
Our algorithm extends the optimistic-based exploration algorithm \citep{jin2021bellman,chen2022abc} with a lazy policy switches strategy.
The optimistic-based exploration algorithm calculates a confidence set using historical data and performs optimistic planning within this set to determine the optimal model $f^k$ and policy $\pi^k$ at each episode $k$.
 Unlike the previous algorithm, we choose to update the confidence set only when a specific condition holds. This modification helps reduce the frequency of policy switches and lowers the associated cost.

\begin{algorithm}[t]
    \begin{algorithmic}[1]
        
	\caption{$\ell_2$-EC-RS}
	\label{alg:ET-Rare switch}
	\STATE {\textbf{Initialize:}} $D_1,D_2,\cdots,D_H=\emptyset,\mathscr{B}_1 = \cF$.
	
	\FOR{$k=1,2,\cdots,K$}
            
	    \STATE Compute $\pi^k = \pi_{f^k}$, where $f^k = \arg\max_{f \in \mathscr{B}^{k-1}}V_{f}^{\pi_f}(s_1)$.\label{line:oracle}
	    \STATE Execute policy $\pi^k$ to collect  the trajectory, update $D_h = D_h\cup\{\zeta_h^k, \eta_h^k\}, \forall h\in [H]$.

            
	    \IF {$L_h^{1:k}(D^{1:k}_h, f^k, f^k)-\inf_{g \in \cG}L_h^{1:k}(D^{1:k}_h,f^k, g)\ge 5\beta$ for some $h \in [H]$} \label{alg:l2beginif}
     
        \STATE Update \begin{align*}
	        \mathscr{B}^{k}=\left\{f \in \cF: L_h^{1:k}(D^{1:k}_h, f, f)-\inf_{g \in \cG}L_h^{1:k}(D^{1:k}_h,f, g)\le \beta, \forall h \in [H] \right\}.
	    \end{align*}\label{line:l2confidenceset}
            \ELSE \STATE $\mathscr{B}^{k}=\mathscr{B}^{k-1}$.\label{line:updateend}    \ENDIF\label{alg:l2endif}
	\ENDFOR
 \end{algorithmic}
\end{algorithm}

\vspace{4pt}
\noindent{\bf Optimistic Exploration with Low-Switching Cost}. 
At episode $k$, the agent first computes the optimal policy $\pi^k  = \pi_{f^k}$, where $f^k$ is the optimal model in confidence set $\rB_k$ and $\pi_{f}$ is the greedy policy with respect to $Q_f.$ Then it executes the policy $\pi^k$ (or $\upsilon^k$ for zero-sum MG) and collects the data $D_h$ for each step $h \in [H]$. Line~\ref{alg:l2beginif} - \ref{alg:l2endif} compute the optimistic confidence set $\rB^{k+1}$ for next episode $k+1$. Define the loss function 
\$
    L^{a:b}_{h}(D_h^{a:b},f,g) =\sum_{i=a}^{b} \Vert \ell_{h,f^i}(\zeta_h^i, \eta_h^i, f, g)\Vert_2^2,
\$
and calculate the confidence sets in Line~\ref{line:l2confidenceset} based on history data. 
Unlike the previous algorithm, our algorithm provides a novel policy switching condition in Line~\ref{alg:l2beginif}, which is the following inequality:
\begin{align}\label{eq:updating rule}
    L_{h}^{1:k}(D_h^{1:k}, f^k, f^k)-\inf_{g \in \cG}L_{h}^{1:k}(D_h^{1:k},f^k, g)\ge 5\beta,\  \mbox{for some}\ h \in [H],
\end{align}
where $\beta$ is a logarithmic confidence parameter. 
In fact, the left-hand side of  \eqref{eq:updating rule} represents the in-sample discrepancy between $f^k$ and the historical    data $D_h^{1:k}$ at step $h$ after the first $k$ episodes. 
When \eqref{eq:updating rule} does not hold, then we have 
\begin{align}\label{eq:not-updating rule}
    L_{h}^{1:k}(D_h^{1:k}, f^k, f^k)-\inf_{g \in \cG}L_{h}^{1:k}(D_h^{1:k},f^k, g)\leq 5\beta,\qquad \forall h \in [H]. 
\end{align}
Moreover, for any $k \in [K]$, 
 we have $f^{k} \subseteq \mathscr{B}^{k-1}$. 
 Moreover, let $t_{k-1}$ be the index of the episode after which $\mathscr{B}^{k-1} $ is constructed. 
 That is, $t_{k-1}$ is the smallest $t$ such that $ \mathscr{B}^{t} = \mathscr{B}^{k-1}$. 
Then by the construction of the confidence set $\mathscr{B}^{k-1}$,  the discrepancy between $f^{k}$ and the historical data $D_h^{1:t_{k-1} }$ satisfies  
\begin{align} \label{eq:update_confidence_set}
       L_h^{1:t_{k-1}}(D_h^{1:t_{k-1}}, f^{k}, f^{k})-\inf_{g \in \cG}L_h^{1:t_{k-1}}(D_h^{1:t_{k-1}}, f^{k},g)\le \beta. 
\end{align}
Comparing \eqref{eq:not-updating rule}  and \eqref{eq:update_confidence_set}, 
we observe that, when adding the  new data from $(t_{k-1} + 1)$-th to the $k$-th episode, the discrepancy between the collected data and $f^k$ remains  relatively small for all steps $h \in [H]$. 
In this case,   the improvement brought from adding new data limited, and thus we choose not to update the policy to save computation. 
Instead, when \eqref{eq:updating rule} holds, 
this means that the discrepancy between $f^k$ and the offline data $D_h^{1:k}$ is significant. 
In light of \eqref{eq:update_confidence_set}, the newly added data 
from $(t_{k-1} + 1)$-th to the $k$-th episode brings considerable 
 new information from newly collected data, and thus we update the confidence set and hence update the policy.  
Furthermore, in the following theorem, 
we prove that such a lazy policy switching scheme achieves both sample efficiency   while incurring a small  switching cost, assuming the underlying model belongs to the  
 $\ell_2$-type EC class.  
The detailed proof of the theorem is provided in  \S \OnlyInFull{\ref{sec:proof}}\OnlyInShort{D}.

\begin{theorem}\label{thm:l2}
    Given an EC class $(M,\cF,\cG,\ell,d,\kappa)$ with two hypothesis classes $\cF, \cG$ and a decomposable loss function $\ell$ satisfying Eq. \eqref{eq:l2cond}, Eq. \eqref{def:loss dominance} and Definition \ref{def:DLF}. Set $\beta = c(R^2\iota+R)$ for a large constant $c$ with $\iota = \log(HK^2\cN_\cL(1/K)/\delta)$, in which $\cN_\cL(1/K)$ is the $1/K$-covering number for DLF class $\cL = \{\ell_{h ,f'}(\cdot,\cdot, f, g): (h,f',f,g) \in [H]\times \cF\times \cF\times \cG\}$ with norm $\Vert\cdot \Vert_\infty$ (Defined in \S \OnlyInFull{\ref{appendix:define covering number}}\OnlyInShort{B}). With probability at least $1-\delta$,  Algorithm~\ref{alg:ET-Rare switch} achieves a sublinear regret
    \begin{align*}
        R(K) \le \widetilde{\cO}(\kappa H\sqrt{d\beta K}\cdot \mathrm{poly}(\log K)),\nonumber
    \end{align*}
    Also, Algorithm~\ref{alg:ET-Rare switch} has a logarithmic switching cost
    \begin{align}
        N_{\mbox{switch}}(K) \le \cO(d H\cdot\log K).
    \end{align}
\end{theorem}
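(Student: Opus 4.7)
My approach rests on three ingredients: a uniform concentration lemma extracted from the decomposable loss property (DLF), the $\ell_2$-type eluder condition \eqref{eq:l2cond}, and the $\kappa$-dominance condition \eqref{def:loss dominance}. The keystone is the concentration. Using the DLF identity \eqref{eq:dlf}, the loss $\ell_{h,f'}(\zeta_h,\eta_h,f,g)$ decomposes into its conditional mean plus a martingale-difference noise $\ell_{h,f'}(\zeta_h,\eta_h,f,\cT(f))$ which is \emph{independent of $g$}. Consequently $L_h^{1:k}(D_h^{1:k},f,f)-L_h^{1:k}(D_h^{1:k},f,\cT(f))$ cancels the noise in expectation; expanding squares and applying a Freedman-type bound over a $1/K$-covering of $\cL$ yields, with probability $1-\delta$ uniformly in $(k,h,f)$, a two-sided sandwich linking the empirical loss gap $L_h^{1:k}(D_h^{1:k},f,f)-\inf_{g\in\cG}L_h^{1:k}(D_h^{1:k},f,g)$ to the cumulative expected squared residual $\sum_{i=1}^{k}\|\EE_{\zeta_h}[\ell_{h,f^i}(\zeta_h,\eta_h^i,f,f)]\|_2^2$, up to an additive $\cO(\beta)$ slack absorbed by $\beta=c(R^2\iota+R)$. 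Specializing the sandwich to $f=f^\ast$, for which $\cT(f^\ast)=f^\ast$ makes both sides trivial, places $f^\ast$ in every $\mathscr{B}^k$ and delivers the optimism $V_{1,f^k}(s_1)\ge V_1^\ast(s_1)$.

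\textbf{Verifying the eluder precondition.} Next I would check the precondition \eqref{eq:l2 condition precondition} for the sequence $\{f^k\}$ produced by the algorithm. For any $k$ and $h$, the chosen $f^k$ is either freshly placed into $\mathscr{B}^{k-1}$ (empirical gap $\le\beta$ by construction) or inherited from $f^{k-1}$ because the trigger in Line~\ref{alg:l2beginif} failed at episode $k-1$ (empirical gap $<5\beta$). Either way the upper half of the concentration sandwich gives $\sum_{i=1}^{k-1}\|\EE_{\zeta_h}[\ell_{h,f^i}(\zeta_h,\eta_h^i,f^k,f^k)]\|_2^2\le\cO(\beta)$, which is the precondition \eqref{eq:l2 condition precondition} with a rescaled constant. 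Plugging into \eqref{eq:l2cond} then produces, uniformly over $k\in[K]$ and $h\in[H]$, the cumulative bound $\sum_{i=1}^k\|\EE_{\zeta_h}[\ell_{h,f^i}(\zeta_h,\eta_h^i,f^i,f^i)]\|_2^2\le\cO(d\beta\log k)$.

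\textbf{Regret.} Optimism combined with $\kappa$-dominance yields $R(K)\le\kappa\sum_{h,k}\EE_{\eta_h\sim\pi^k}\|\EE_{\zeta_h}[\ell_{h,f^k}(\zeta_h,\eta_h,f^k,f^k)]\|_2$. Applying Cauchy--Schwarz across the $K$ episodes, then a standard martingale concentration that exchanges the $\EE_{\eta_h\sim\pi^k}$ for the realized trajectory up to a $\mathrm{polylog}(K)$ slack, and finally the cumulative bound from the previous step, gives $R(K)\le\widetilde{\cO}(\kappa H\sqrt{dK\beta})$, as claimed.

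\textbf{Switching cost and main obstacle.} Every switch is triggered at some $h$; pigeonhole isolates an $h^\star$ responsible for at least $N_{\mathrm{switch}}(K)/H$ of them. Between successive updates at times $\tau_{j-1}$ and $\tau_j$, the active function $\tilde f_j$ is constant and lies in $\mathscr{B}^{\tau_{j-1}}$, so the firing inequality ($\ge 5\beta$) at $\tau_j$ and the confidence-set inequality ($\le\beta$) at $\tau_{j-1}$, routed through the \emph{two} sides of the concentration sandwich with the $5\beta$ threshold chosen comfortably larger than the sandwich's multiplicative slack, force the newly accumulated residual on the $j$-th batch, $\sum_{i=\tau_{j-1}+1}^{\tau_j}\|\EE_{\zeta_{h^\star}}[\ell_{h^\star,f^i}(\zeta_{h^\star},\eta_{h^\star}^i,f^i,f^i)]\|_2^2$ (using $f^i=\tilde f_j$ on the batch), to grow by at least $\Omega(\beta)$. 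Summing across the $M_{h^\star}$ such batches and comparing to the global $\cO(d\beta\log K)$ bound from the eluder step forces $M_{h^\star}\le\cO(d\log K)$, hence $N_{\mathrm{switch}}(K)\le\cO(dH\log K)$. The main technical obstacle is the first step: a uniform two-sided concentration that must simultaneously certify optimism, verify the eluder precondition along the algorithm's trajectory, and underwrite the $\Omega(\beta)$ incremental growth per switch, all while handling the fact that the DLF noise $\ell_{h,f'}(\cdot,\cdot,f,\cT(f))$ depends on $f$ through $\cT$, forcing a covering argument over the loss class $\cL$ rather than merely over $\cF$.
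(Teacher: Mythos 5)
Your proposal follows essentially the same route as the paper's proof: your two-sided concentration ``sandwich'' is precisely the content of Lemmas~\ref{lemma:optimal -beta}, \ref{lemma:Cbeta to C+1beta}, and \ref{lemma:Cbeta to C-1beta}; your precondition check via the two cases (fresh update, gap $\le\beta$, versus inherited, gap $\le 5\beta$) matches Eq.~\eqref{eq:change7beta}; and your regret and switching-cost arguments (optimism plus $\kappa$-dominance plus the eluder condition, and per-batch $\Omega(\beta)$ residual growth against the global $\cO(d\beta\log K)$ budget) are the paper's arguments verbatim. The plan is correct as stated.
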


\vspace{1em}

The theorem above gives us the upper bound for both a $\widetilde{\cO}(\sqrt{K})$ regret and a logarithmic switching cost. When applying to the specific examples such as function class with low $D_\Delta$-type BE dimension $d = d_{BE}(\cF,D_\Delta,1/\sqrt{T})$, Algorithm \ref{alg:ET-Rare switch} achieves a $\widetilde{\cO}(H\sqrt{d\beta K}\log(K)) = \widetilde{\cO}(H\sqrt{d\log(N_\cL(1/K)/\delta) K}\cdot \mathrm{poly}(\log K))$ regret and a $\cO(dH\cdot \log K)$ switching cost. Some specific examples of $\ell_2$-type EC class such as linear mixture MDP, KNR, and $\cD_\Delta$-type BE dimension,  and the corresponding theoretical results are provided in Section \ref{sec:Examples}.


\noindent\textbf{Comparison with Previous Algorithms} The common way to achieve low switching-cost problems is to measure the information gain and change the policy only when the gained information is large enough. However, 
 the previous techniques to represent the gained information cannot apply to more general RL problems. For the tabular MDP and linear MDP, the gain of new information can be explicitly formulated as the determinant of the Hessian matrix of the least-squares loss function. For the function classes with low eluder dimension, their algorithm requires the construction of the bonus function and a sensitivity-based subsampling approach, which cannot be extended beyond their setting. Moreover, they require a value closeness assumption: For each function $V:\cS \to [0,H]$,  they assume the function class $\cF$
 satisfies that $r(s,a) + \sum_{s'}\PP(s'\mid s,a)V(s') \in \cF$ for all $(s,a).$ This assumption is very stringent and is not satisfied by many general classes such as linear mixture MDP and KNR. All of these approaches cannot be applied to our EC class.

The computational complexity mainly depends on the Line~\ref{line:oracle} or \ref{line: oracle MG}  in Algorithm~\ref{alg:ET-Rare switch}. 
Previous works often assume there exists an oracle that approximately solves Line~\ref{line:oracle}, e.g., \citep{jin2021bellman, chen2022abc}. 
Such an oracle is queried in each episode to update the policy. 
Thus, these works incur an $\cO(K)$ oracle complexity.  
In contrast, with the lazy update scheme specified in Lines \ref{line:l2confidenceset}--\ref{line:updateend}, the oracle complexity of Algorithm \ref{alg:ET-Rare switch} is $\cO(\log K )$, which leads to an exponential improvement in terms of the computational cost. In the experiment, the execution time of our algorithm is 20 times faster than the algorithm without lazy policy switches, while maintaining a similar performance.

\section{Batch Learning Problem} \label{sec:batch}

In this section, we provide an algorithm for the batch learning problem. Recall that in the batch learning problem,  the agent selects the batch before the algorithm starts, then she uses the same policy within each batch.
Denote the number of batches as $B$, Algorithm~\ref{alg:EC_batch} try to divide each batch equally and choose the batch as $[k_i, k_{i+1})$, where  $k_i = i\cdot \lfloor K/B \rfloor + 1.$ This selection is intuitive and common in many previous works of batch learning~\citep{han2020sequential,quanquangu2021linearlowcost, gu2021batched}.
After setting the batches, the agent adopts   optimistic planning for policy updates, and only updates the policies in episodes $\{ k_i, i \in [B-1]\}$. 
The details of the algorithm is presented in Algorithm \ref{alg:EC_batch}.

In the following theorem, we provide a regret upper bound for Algorithm \ref{alg:EC_batch}.

\begin{theorem}\label{thm:batchl2}
    Given an  EC class $(M,\cF,\cG,\ell,d,\kappa)$ 
 with two hypothesis classes $\cF, \cG$ and a decomposable loss function $\ell$ satisfying \ref{eq:l2cond} and \ref{def:loss dominance}. Set $\beta = c(R^2\iota+R)$ for a large constant $c$ with $\iota = \log(HK^2\cN_\cL(1/K)/\delta)$, in which $\cN_\cL(1/K)$ is the $1/K$-covering number for DLF class $\cL = \{\ell_{h ,f'}(\cdot,\cdot, f, g): (h,f',f,g) \in [H]\times \cF\times \cF\times \cG\}$ with norm $\Vert\cdot \Vert_\infty$. With probability at least $1-\delta$ the Algorithm~\ref{alg:EC_batch} will achieve a sublinear regret
    \begin{align}
        R(T) \le \tilde{\cO}\left(\kappa H\sqrt{d\beta K}\log K+\kappa \cdot \frac{dHK}{B} \cdot (\log K)^2\right).\nonumber
    \end{align}
\end{theorem}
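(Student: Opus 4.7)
The argument will mirror that of Theorem~\ref{thm:l2}, with the policy-update schedule dictated by the pre-committed uniform batching $k_{b}=(b-1)\lfloor K/B\rfloor+1$ rather than by an adaptive switching rule. I will proceed in four stages: (i) establish high-probability optimism by showing $f^{*}\in\mathscr{B}^{(b)}$ for every batch $b\in[B]$, via Freedman's inequality on the decomposable loss $\ell$ combined with a union bound over a $1/K$-covering of $\cL$, which gives $V_{1,f^{(b)}}(s_{1})\ge V_{1}^{*}(s_{1})$; (ii) apply the $\kappa$-dominance property to obtain
\begin{align*}
R(K) \;\le\; \kappa\sum_{h=1}^{H}\sum_{k=1}^{K}\EE_{\eta_{h}\sim\pi^{k}}\Big[\big\|\EE_{\zeta_{h}}[\ell_{h,f^{k}}(\zeta_{h},\eta_{h},f^{k},f^{k})]\big\|_{2}\Big];
\end{align*}
(iii) replace each expectation by its realized sample via an Azuma/Freedman martingale argument, valid because $f^{k}=f^{(b_{k})}$ and $\pi^{k}=\pi_{f^{(b_{k})}}$ are measurable with respect to $\mathcal{F}_{k_{b_{k}}-1}$; and (iv) invoke the $\ell_{2}$-EC condition.

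The central new step is bounding the EC precondition for any $k$ inside batch $b$. Decomposing according to pre- and within-batch indices,
\begin{align*}
\sum_{j=1}^{k-1}\big\|\EE_{\zeta_{h}}[\ell_{h,f^{j}}(\zeta_{h},\eta_{h}^{j},f^{k},f^{k})]\big\|_{2}^{2} \;\le\; \underbrace{\sum_{j<k_{b}}(\cdots)}_{\le\, O(\beta)} \;+\; \underbrace{\sum_{j=k_{b}}^{k-1}(\cdots)}_{\le\, R^{2}K/B},
\end{align*}
where the pre-batch sum is $O(\beta)$ because the confidence-set constraint on $f^{(b)}=f^{k}$ transfers to a population squared-error bound via Freedman (exactly as in the proof of Theorem~\ref{thm:l2}), and the within-batch sum is at most $R^{2}K/B$ since $f^{j}=f^{(b)}=f^{k}$ for those indices so each summand is bounded by $R^{2}$. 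Feeding $\beta_{k}=O(\beta+R^{2}K/B)$ into the $\ell_{2}$-EC property yields $\sum_{k=1}^{K}\|\EE_{\zeta_{h}}[\ell_{h,f^{k}}(\zeta_{h},\eta_{h}^{k},f^{k},f^{k})]\|_{2}^{2}=\widetilde{\cO}(d(\beta+R^{2}K/B)\log K)$.

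The hardest part will be converting this squared-norm estimate into the stated regret $\kappa H\sqrt{d\beta K}\log K+\kappa dHK/B\cdot(\log K)^{2}$ with the correct $K/B$ (rather than $K/\sqrt{B}$) scaling of the batch penalty: a direct Cauchy--Schwarz on the combined bound only delivers $\widetilde{\cO}(\sqrt{dK\beta}+RK\sqrt{d/B})$. To sharpen this I plan to split the $\ell_{1}$-aggregation into a \emph{batch-head} contribution and a \emph{within-batch} contribution. For the former, the $\ell_{2}$-EC condition is applied to the sub-sequence $\{(f^{(b)},\eta_{h}^{k_{b}})\}_{b=1}^{B}$, whose precondition is still $O(\beta)$ by the subset-sum inclusion into the full in-sample bound; Cauchy--Schwarz over the $B$ batch heads and scaling by the batch length $K/B$ produces the online-like $\sqrt{d\beta K}\log K$ term. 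For the latter, the $K/B$ identical per-episode expected losses $Y_{h,b}=\EE_{\eta_{h}\sim\pi^{(b)}}[\|\cdot\|_{2}]$ aggregate linearly; the $R^{2}K/B$ inflation in $\beta_{k}$ is then absorbed into the $\kappa dHK/B$ term by applying the $\kappa$-dominance batch-wise together with Jensen's inequality $Y_{h,b}^{2}\le\EE[\|\cdot\|_{2}^{2}]$ and the squared-norm bound above. The $(\log K)^{2}$ factor accumulates from the $\log K$ in the $\ell_{2}$-EC conclusion and the $\log K$ coming from Freedman over the $\cL$-covering.
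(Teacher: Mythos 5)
Your setup (optimism via Freedman over a $1/K$-cover, $\kappa$-dominance, martingale transfer from expectations to realized samples) matches the paper, and you correctly diagnose the central difficulty: a uniform inflation of the EC precondition by the within-batch term $R^{2}K/B$, followed by Cauchy--Schwarz, can only deliver a batch penalty of order $RK\sqrt{d/B}$, not $dK/B$. The problem is that your proposed repair does not escape this barrier. For the batch-head subsequence, applying the $\ell_{2}$-EC condition gives $\sum_{b=1}^{B}\|\EE_{\zeta_h}[\ell_{h,f^{(b)}}(\cdot,\eta_h^{k_b},f^{(b)},f^{(b)})]\|_2^2\le \widetilde{\cO}(d\beta\log B)$, so Cauchy--Schwarz yields $\sum_b\|\cdot\|_2\le\sqrt{Bd\beta\log B}$, and multiplying by the batch length $K/B$ produces $(K/B)\sqrt{Bd\beta\log B}=\sqrt{d\beta K\log B}\cdot\sqrt{K/B}$ --- which exceeds the claimed $\sqrt{d\beta K}\log K$ by a factor $\sqrt{K/B}$ (e.g.\ $K^{3/4}$ versus $K^{1/2}$ when $B=\sqrt{K}$ and $d=O(1)$). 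The same arithmetic defeats the Jensen variant: from $\sum_{b}Y_{h,b}^2\le (B/K)\cdot\widetilde{\cO}(d(\beta+R^2K/B)\log K)$ one gets $(K/B)\sum_b Y_{h,b}\le\widetilde{\cO}(\sqrt{d\beta K}+RK\sqrt{d/B})$ again. Any argument that treats all batches symmetrically and aggregates via an $\ell_2$--$\ell_2$ H\"older step is structurally stuck at $K\sqrt{d/B}$.

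The missing idea is an $\ell_0$--$\ell_\infty$ accounting over batches rather than an $\ell_2$--$\ell_2$ one. The paper defines, for each batch $j$, the normalized maximal within-batch in-sample error $c_j=\max_{k\in[k_j,k_{j+1}-1]}\bigl(L_h^{k_j:k}(D_h^{k_j:k},f^{k_j},f^{k_j})-L_h^{k_j:k}(D_h^{k_j:k},f^{k_j},\cT(f^{k_j}))\bigr)/\beta$ and splits batches into ``good'' ($c_j\le 5$) and ``bad'' ($c_j>5$). Good batches satisfy the $O(\beta)$ precondition at every episode, so the standard EC argument over their union yields the $\sqrt{d\beta K}\log K$ term exactly as in Theorem~\ref{thm:l2}. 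For bad batches, a counting argument over dyadic levels $C/2\le c_j\le C$ --- pitting the EC upper bound $\widetilde{\cO}(d(2+2C)\beta\log K)$ on the cumulative in-sample error against the per-batch lower bound $(C/4)\beta$ from Lemma~\ref{lemma:Cbeta to C-1beta} --- shows there are at most $\widetilde{\cO}(d\log K)$ bad batches per level and hence $\widetilde{\cO}(d(\log K)^2)$ in total; each is then charged the trivial worst-case cost $R\cdot\lfloor K/B\rfloor$, giving the $dHK(\log K)^2/B$ term. Without some version of this ``few bad batches'' argument, your plan cannot reach the stated bound.
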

Hence if we choose $B = \Omega(\sqrt{K/d})$, we can get a sublinear regret $\widetilde{\cO}(H\sqrt{d\beta K})$. 
In particular, for the linear MDP with dimension $d_{\mathrm{lin}}$, we have $d = \widetilde{\cO}(d_{\mathrm{lin}})$ and $\beta = \widetilde{\cO}(d_{\mathrm{lin}}\cdot \mathrm{poly}(\log K))$, 
Theorem \ref{thm:batchl2} achieves a $\widetilde{\cO}(Hd_{\mathrm{lin}}\sqrt{K} + d_{\mathrm{lin}}HK/B)$ regret upper bound, which matches the regret lower bound  established  in \cite{gao2021lowcostlinear}. 
More specific examples and the corresponding results are provided in Section \ref{sec:Examples}. We also provide the batch learning results of $\ell_1$-EC class in \S \OnlyInFull{\ref{appendix:algorithm for l1}}\OnlyInShort{C.2}.

Now we provide the intuition about why the algorithm works, and the detailed proof is in \S \OnlyInFull{\ref{appendix:proof of batch}}\OnlyInShort{E}. First, for a batch $j$, we consider the maximum in-sample error brought by this batch: $$\max_{k \in [k_j,k_{j+1}-1]} L_h^{k_j:k}(D_h^{k_j:k}, f^{k_j}, f^{k_j}) - L_h^{k_j:k}(D_h^{k_j:k}, f^{k_j}, \cT(f^{k_j}))\triangleq c_j\beta.$$ Indeed, this term represents the maximum fitting error for the data within the data of this batch and the model $f^{k_j}$.  Then for all  batches $[k_j,k_{j+1}-1]$ with a small in-sample error, namely, $c_j = O(1)$,
 we can still deploy the optimism mechanism and control the regret, thus the final regret can vary in magnitude by at most a constant. Moreover, for these batches with $c_j \le 5$, we call them the "Good" batches, meaning that the regret caused by these batches can still be upper bounded by $\cO(\sqrt{K})$. For batch $j$ with $c_{j}>5$, we called them the "Bad" batches. Then we can show a fact that the number of "Bad" batches is at most $\cO((\log K)^2)$. 
In fact,  we can divide all the $c_j \in [5,K]$ into $\cO(\log K)$ intervals $[5\cdot 2^i, 5\cdot 2^{i+1})_{i\ge 0}$, and use the $\ell_2$-type eluder condition to bound that $|\{j\mid C/2\le c_j\le C\}|\le \cO(\log K)$ for any constant $C$.

Once the fact is proven, the regret can be derived by adding "Good" batches and "Bad" batches. All "Good" batches will lead to at most a $\cO(\sqrt{K})$ regret, and all "Bad" batches will lead to at most $\cO((K/B)\cdot(\log K)^2)$ regret.   Combining two types of batches, we can get Theorem~\ref{thm:batchl2}.

Moreover, we consider another batch learning setting called "the adaptive batch setting" that was studied in \cite{gao2019batched}. In this setting, the agent can select the batch size adaptively during the algorithm. At the end of each batch, the agent observes the reward feedback of this batch, and she can select the next batch size according to the historical information and change the policy. We show that in this setting, $\cO(\mbox{poly}(\log K))$ batches are sufficient for a $\cO(\sqrt{K})$ regret. 
The proof employs an extra  doubling trick performed on the low switching cost 
 Algorithm~\ref{alg:ET-Rare switch} and we discuss it in \S \OnlyInFull{\ref{appendix:adaptive batch}}\OnlyInShort{E.2}.

\section{Experiment}
We experimented in the linear mixture MDP with the same setting as \citep{chen2022abc}. We choose $T=2000$ and $\beta = 0.3\log T$ in the experiment, and the regret and the cumulative reward curves show that our algorithm maintains a sublinear regret which is slightly larger than OPERA algorithm in \citep{chen2022abc}. However, the average number of strategy transitions and calls to the optimization tool decreases from 2000 to 92.8 times over 10 simulations, decreasing the average execution time from 321.6 seconds to 15.9 seconds.
\begin{figure}
    \centering
    \includegraphics[scale=0.6]{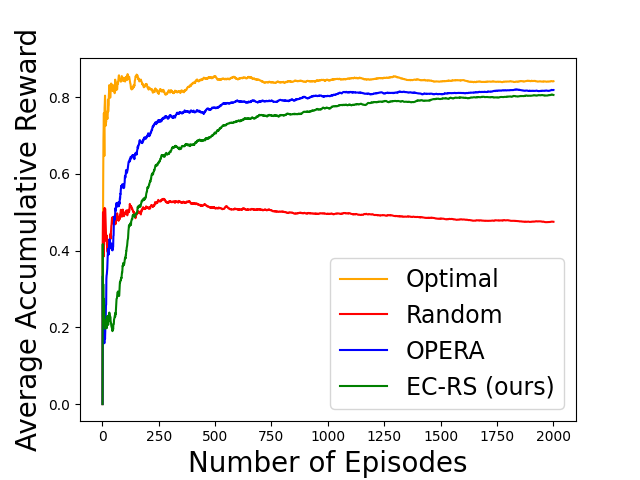}
    \caption{The average accumulative reward for optimal policy, random policy, OPERA algorithm \citep{chen2022abc} and EC-RS (Algorithm \ref{alg:ET-Rare switch})}
    \label{fig:exp}
\end{figure}


\section{Conclusion}
In this paper, we study the general sequential decision-making problem under general function approximation with two adaptivity constraints: the rare policy switch constraint and the batch learning constraint.
 Motivated by the $\ell_2$-eluder argument, we first introduce a general class named EC class that includes various previous RL models, and then provide algorithms for both two adaptivity constraints. For the rare policy switch problem, we propose a lazy policy switch strategy to achieve a low switching cost while maintaining a sublinear regret. For the batch learning problem, we analyze the regret when the batch is a uniform grid and get state-of-the-art results that match the lower bound under the linear MDP \citep{gao2021lowcostlinear}.
To the best of our knowledge, this paper is the first work to systematically investigate these two adaptivity constraints under a general framework that contains a wide range of RL problems.


\OnlyInFull{
\newpage
\appendix

\centerline{\begin{Large} \textbf{Appendix}\end{Large}}

\begin{large} \startcontents
\printcontents{}{1}{}\end{large}


\section{Related Work}\label{appendix: related work}
\paragraph{RL with General Function Approximation}
To solve the large-state RL problems, many works consider capturing the special structures of the MDP models. 
\cite{jiang@2017} consider  the RL problems with low Bellman rank; \cite{jin2020provably} consider  a particular linear structure of MDP models named {\em linear} MDP; 
\cite{wang2020reinforcement} consider RL problems with bounded {\em Eluder dimension} using sensitivity sampling, and \cite{ishfaq2021randomized} use a simpler optimistic reward sampling to combine the optimism principle and Thompson sampling. \cite{jin2021bellman} capture  an extension of Eluder dimension called {\em Bellman Eluder dimension}; \cite{du2021bilinear}  consider  a particular model named \textit{bilinear} model; Very recently, \cite{chen2022abc} consider a more extensive class ABC that contains many previous models. \cite{foster2021statistical,agarwal2022model, zhong2022gec} provide the  posterior-sampling style algorithm for sequential decision-making. Previous works also consider the Markov Games with multiple players under the function approximation setting. \cite{xie2020learning,huang2021markovgamegeneral,qiu2021reward,jin2022power,zhao2022provably,liu2023objective} study the two-player zero-sum Markov Games with linear or general function approximations.  \cite{zhong2021can,ding2022independent,cui2023breaking, wang2023breaking, foster2023complexity} further consider the general-sum Markov Games with function approximations. Among these works, our paper is particularly related to the works that use the eluder dimension to capture the complexity of a function class \citep{jin2021bellman,chen2022abc, liu2022partially,liu2022optimistic}. Compared to them, our EC class requires a property that is slightly stricter than the normal eluder argument, which can help us to deal with the additional adaptivity constraints. More details are provided in Section \ref{sec:ET}. 

\paragraph{RL with Adaptivity Constraints}
The rare policy switch problem and the batch learning problem are the two main adaptivity constraints considered in the previous works.
\cite{abbasi2011improved} first give  an algorithm to achieve a $\cO(\sqrt{K})$ regret and a $\cO(\log K)$ switching cost in the bandit problem. \cite{baiyu2019lowswitchcosttabular}  study the rare policy switch problem for tabular MDP and \cite{zhang2020lowswitchcosttabular}  improve their results. \cite{quanquangu2021linearlowcost,gao2021lowcostlinear} provide  low-switching cost algorithms for linear MDP. \cite{kong2021eluderlowcost} consider this problem for the function classes with low eluder dimension, and \cite{zhuoranyang2022lowcosteluder} extend their results to be gap-dependent. 
 \cite{qiao2023logarithmic} give an algorithm to achieve logarithmic switching cost in the linear complete MDP with low inherent Bellman error and generalized linear function approximation. They leave  the low switching cost problem for the function classes with low BE dimension as an open problem, where our paper solves a part of this open problem ($D_\Delta$-type BE dimension). 

For the batch learning problem,  \cite{perchet2016batched} consider  this problem in 2-armed bandits, and \cite{gao2019batched} further consider this problem in the multi-armed bandit with both fixed batch size and adaptive batch size. \cite{han2020sequential} study batch learning problem in the linear contextual bandits. \cite{gu2021batched} also study this problem in the  neural bandit setting. \cite{quanquangu2021linearlowcost} consider this problem in the linear MDP, and gives a lower bound of the  regret. As we will show later, our results match their lower bound in the linear MDP.

\section{Definition of Covering Number and Bracketing Number}\label{appendix:define covering number}
In the Theorem \ref{thm:l2}, Theorem \ref{thm:batchl2}, Theorem \ref{thm:l1} and Theorem \ref{thm:batchl1}, the regret result contains the logarithmic term of the $1/K$-covering number of the function classes $\cN_\cF(1/K)$ or the $1/K$-bracketing number $\cB_\cF(1/K)$. Both of them can be regarded as a surrogate  cardinality of the function class $\cF$. 

First, we provide the definition of $\rho$-covering number. In this work, we mainly consider the covering number with respect to the distance $\ell_\infty$ norm.

\begin{definition}[$\rho$-Covering Number]
    The $\rho$-covering number of a function class $\cF$ is the minimum integer $t$ that satisfies the following property: There exists $\cF'\subseteq \cF$ with $|\cF'|=t$, and for any $f_1 \in \cF$ we can find $f_2 \in \cF'$ such that $\Vert f_1-f_2\Vert_\infty \le \rho.$
\end{definition}
For $\ell_1$-EC class, Theorem \ref{thm:l1} uses the bracket number to approximate the cardinality as in the previous works \citep{zhan2022pac, zhong2022gec}. 
\begin{definition}[$\rho$-Bracket Number]
A $\rho$-bracket with size $N$ contains $2N$ functions $\{f_1^i, f_2^i\}_{i=1}^N $ that maps a policy $\pi$ and a trajectory $\tau$ to a real value such that $\Vert f_1^i(\pi,\cdot)-f_2^i(\pi,\cdot)\Vert_1 \le \rho$. Moreover, for any $f \in \cF$, there exists $i \in [N]$ such that $f_1^i(\pi,\tau) \le \PP_f^\pi(\tau) \le f_2^i(\pi,\tau)$. The $\rho$-bracket number of a function class $\cF$, denoted as $\cB_\cF(1/K)$, is the minimum size $N$ of a $\rho$-bracket.
\end{definition}
As shown in \cite{zhan2022pac}, the logarithm of the $1/K$-bracket number $\log(\cB_\cF(1/K))$ usually scales polynomially with respect to the parameters of the problem. 

\section{Pseudo-code of Algorithm for Batch Learning}
In this subsection, we provide the pseudo-code of our algorithm $\ell_2$-EC-batch for batch learning, which divides the entire episode into a uniform grid. The pseudo-code of the algorithm is provided in Algorithm \ref{alg:EC_batch}.
\begin{algorithm}[t]
    \begin{algorithmic}[1]
        
	\caption{$\ell_2$-EC-Batch}
	\label{alg:EC_batch}
	\STATE {\textbf{Input}} $D_1,D_2,\cdots,D_H=\emptyset,\mathscr{B}_1 = \cF$.
	
	\FOR{$k=1,2,\cdots,K$}
            
	    \STATE \textcolor{blue}{(MDP):} Compute $\pi^k = \pi_{f^k}$, where $f^k = \arg\max_{f \in \mathscr{B}^{k-1}}V_{f}^{\pi_f}(s_1)$.\label{line:batch_oracle}
	    \STATE \textcolor{blue}{(Zero-Sum MG):} Compute $\upsilon^k = \upsilon_{f^k}$, where $f^k = \arg\max_{f \in \mathscr{B}_{k-1}}V_{f}^{\upsilon_f, \mu_f}(s_1)$. The adversary chooses strategy $\mu^k$, then we let $\pi^k = (\upsilon^k, \mu^k).$\label{line: oracle MG}
	    \STATE Execute policy $\pi^k$ to collect  the trajectory, update $D_h = D_h\cup\{\zeta_h^k, \eta_h^k\}, \forall h\in [H]$.

            
	    \IF {$k = i\cdot \lfloor K/B\rfloor + 1$ for some $i \ge 0$} \label{alg:batch_l2beginif}
     
        \STATE Update \begin{align*}
	        \mathscr{B}^{k}=\left\{f \in \cF: L_h^{1:k}(D^{1:k}_h, f, f)-\inf_{g \in \cG}L_h^{1:k}(D^{1:k}_h,f, g)\le \beta, \forall h \in [H] \right\}.
	    \end{align*}\label{line:batch_l2confidenceset}
            \ELSE \STATE $\mathscr{B}^{k}=\mathscr{B}^{k-1}$.    \ENDIF\label{alg:batchl2endif}
	\ENDFOR
 \end{algorithmic}
\end{algorithm}

\section{Concrete Examples and Theoretical Results of EC Class}\label{sec:Examples}
Now we provide a large amount of RL problems that are contained in the  EC class with a decomposable loss function and provide corresponding theoretical results for them. Additional examples are provided in \S \OnlyInFull{\ref{appendix:additional examples}}\OnlyInShort{F}.
\subsection{Bellman Eluder Dimension}

\begin{example}[$D_\Delta$-type Bellman Eluder Dimension]
$D_\Delta$-type Bellman eluder dimension \citep{jin2021bellman} can be regarded as a general extension of the eluder dimension. The class with low $D_\Delta$-type Bellman eluder dimension subsumes a wide range of classical RL problems such as linear MDP and MDP with low eluder dimension. 
If we choose $\zeta_h = \{s_{h+1}\}, \eta_h = \{s_h,a_h\}$, operator $\cT=\{\cT_h\}_{h \in [H]}$ as the Bellman operator in \eqref{eq:bellman operator for MDP}, and the loss function \begin{align}\label{eq:DLF selection be dimension}
    \ell_{h,f'}(\zeta_h, \eta_h, f, g) = Q_{h,g}(s_h,a_h) - r(s_h,a_h) - V_{h+1,f}(s_{h+1}),
\end{align} the loss function satisfies the decomposable property:
\begin{align*}
    &\ell_{h,f'}(\zeta_h, \eta_h,f,g) - \EE_{\zeta_h}\Big[\ell_{h,f'}(\zeta_h, \eta_h,f,g)\Big]\\ & \qquad = (Q_{h,g}(s_h,a_h) - r(s_h,a_h) - V_{h+1,f}(s_{h+1}))-(Q_{h,g}(s_h,a_h) - \cT_hV_{h+1}(s_h,a_h))\\
    &\qquad =(\cT_hV_{h+1}(s_h,a_h)-r(s_h,a_h) - V_{h+1,f}(s_{h+1}))\\
    &\qquad=\ell_{h,f'}(\zeta_h, \eta_h, f, \cT(f)),
\end{align*}
and $\cT(f^*) = f^*$ by Bellman equations. 
The following lemma shows that $D_\Delta$-type Bellman eluder dimension model belongs to $\ell_2$-type EC class with parameter $d = d_{BE}(\cF, D_\Delta, 1/\sqrt{T})$.
\begin{lemma}[Low $D_\Delta$-type Bellman Eluder Dimension $\subset$ $\ell_2-$type EC class]\label{lemma:BEdim l2}
    Suppose the function class $\cF$ with a low $D_\Delta$-type Bellman eluder dimension with auxiliary function class $\cG$ \citep{jin2021bellman}, then for any MDP model $M$, choose $\zeta_h = \{s_{h+1}\}, \eta_h = \{s_h,a_h\}$ and $(M,\cF,\cG,)$, $\kappa=1$,  and DLF $\ell$ as in \ref{eq:DLF selection be dimension}, then $(M,\cF,\cG,\ell,d_{BE}(\cF, D_\Delta, 1/\sqrt{T}),\kappa)$ is a  $\ell_2$-type EC class by $\ell_2$ condition:
    If $\sum_{i=1}^{k-1}[\cE(f^k,s_h^i,a_h^i)^2]\le \beta$  holds for any $k \in [K]$ and $\beta \ge 9$,  then for any $k \in [K]$ we have \begin{align}\label{eq:BEdiml2}\sum_{i=1}^k [\cE(f^i,s_h^i,a_h^i)^2]\le \cO(d\beta\log K),
    \end{align}
    where $d = d_{BE}(\cF, D_\Delta, 1/\sqrt{T})$ and we choose the upper bound of the Bellman residual $\cE$ as $R = 3$.
    The dominance can be derived by Lemma 1 in \cite{jiang@2017}.
\end{lemma}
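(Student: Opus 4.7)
My plan is to reduce the claim to the classical eluder-dimension pigeonhole argument applied to the Bellman-residual class, and then to sharpen the usual $\ell_1$-style conclusion into the required $\ell_2$-style bound by integrating the resulting level-set counts directly. \textbf{Reduction to Bellman residuals.} With the specified choice of $\zeta_h$, $\eta_h$, and $\ell$, together with the Bellman operator in \eqref{eq:bellman operator for MDP}, the inner expectation unpacks to $\EE_{\zeta_h}[\ell_{h,f^i}(\zeta_h,\eta_h^i,f^k,f^k)] = \cE_h(f^k)(s_h^i,a_h^i)$, so the precondition \eqref{eq:l2 condition precondition} becomes $\sum_{i<k}\cE_h(f^k)(s_h^i,a_h^i)^2\le\beta$ and \eqref{eq:BEdiml2} becomes $\sum_{i\le k}\cE_h(f^i)(s_h^i,a_h^i)^2\le\cO(d\beta\log k)$. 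Because $D_\Delta$ is the set of Dirac measures on $\cS\times\cA$, the $D_\Delta$-Bellman eluder dimension is exactly the ordinary eluder dimension $d$ of the Bellman-residual class $\{\cE_h(f):f\in\cF\}$.

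\textbf{Level-set counting.} The workhorse is a standard corollary of the Russo--Van Roy pigeonhole argument (Proposition 3 of \cite{russo2013eluder}; see also Lemma 41 of \cite{jin2021bellman}): under the precondition above, for every $\epsilon>0$,
\begin{align*}
\bigl|\{i\in[k]:\;|\cE_h(f^i)(s_h^i,a_h^i)|>\epsilon\}\bigr| \;\le\; \Bigl(\frac{\beta}{\epsilon^2}+1\Bigr)\,d_E(\epsilon),
\end{align*}
where $d_E(\epsilon)$ denotes the $\epsilon$-eluder dimension of the Bellman-residual class. Since $d_E(\cdot)$ is non-increasing in $\epsilon$, one has $d_E(\epsilon)\le d$ for all $\epsilon\ge 1/\sqrt{T}$.

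\textbf{Sort-and-integrate.} Arrange $\{|\cE_h(f^i)(s_h^i,a_h^i)|\}_{i=1}^k$ in decreasing order as $w_1\ge w_2\ge\cdots\ge w_k$. Applying the counting bound with $\epsilon=w_j$ (for $w_j\ge 1/\sqrt{T}$) gives $j\le(\beta/w_j^2+1)d$, so $w_j^2\le\beta d/(j-d)$ for $d<j\le k$. Splitting according to whether $w_j\ge 1/\sqrt{T}$,
\begin{align*}
\sum_{j=1}^k w_j^2 \;\le\; dR^2 \;+\; \sum_{j=d+1}^k\frac{\beta d}{j-d} \;+\; \frac{k}{T} \;=\; \cO(d\beta\log k),
\end{align*}
using $\beta\ge R^2=9$ and $k\le T$. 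This establishes \eqref{eq:BEdiml2}. The $\kappa=1$ dominance property \eqref{def:loss dominance} then follows from the classical performance-difference identity of \cite{jiang@2017}, which rewrites $V_{1,f}(s_1)-V_1^{\pi_f}(s_1)$ as the expected sum of Bellman residuals along rollouts of $\pi_f$, matching the right-hand side of \eqref{def:loss dominance} exactly.

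\textbf{The delicate point} is the squared integration in the third step. The classical eluder argument, as most commonly invoked, integrates $w_j\lesssim\sqrt{\beta d/(j-d)}$ to get the $\ell_1$-style bound $\sum_i|\cE_h(f^i)(s_h^i,a_h^i)|\le\widetilde\cO(\sqrt{d\beta k})$ of \eqref{intuition: i,i}; applying Cauchy--Schwarz afterwards yields only a sum-of-squares bound linear in $k$, which is too weak for the low-switching analysis. Integrating $w_j^2$ directly against the harmonic tail is what upgrades $\sqrt{k}$ to $\log k$, and is precisely why \S\ref{sec:ET} is built around an $\ell_2$-eluder condition rather than the classical $\ell_1$ one.
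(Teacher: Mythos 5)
Your proof is correct and follows essentially the same route as the paper's: both reduce the loss to the Bellman residual, invoke the pigeonhole/level-set counting bound (the paper's restatement of Proposition 43 of \cite{jin2021bellman}, your Russo--Van Roy corollary), sort the residuals to get the decay $w_j^2\le d\beta/(j-d)$, and integrate the harmonic tail plus a small-threshold truncation and a $dR^2$ head term to obtain the $\cO(d\beta\log k)$ bound, deferring dominance to \cite{jiang@2017}. The only differences are cosmetic (you sort $|\cE_h|$ where the paper sorts $\cE_h^2$, and you truncate at $1/\sqrt{T}$ where the paper uses $\omega=1/K$).
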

Combining Theorem~\ref{thm:l2}, Theorem~\ref{thm:batchl2} and Lemma~\ref{lemma:BEdim l2}, for the rare policy switch problem, Algorithm \ref{alg:ET-Rare switch} achieves a 
$\widetilde{\cO}(H\sqrt{d_{\mathrm{BE}}(\cF,D_\Delta, 1/\sqrt{T})\beta\cdot \log K})$ regret and a  $\widetilde{\cO}(d_{\mathrm{BE}}(\cF,D_\Delta, 1/\sqrt{T})H\cdot \log K)$ switching cost, where $\beta$ is defined in Theorem \ref{thm:l2}. For the batch learning problem with $B$ batches, 
Algorithm \ref{alg:EC_batch} provides a $\widetilde{\cO}(H\sqrt{d_{\mathrm{BE}}(\cF,D_\Delta, 1/\sqrt{T})\beta K }\log K + d_{\mathrm{BE}}(\cF,D_\Delta, 1/\sqrt{T})HK(\log K)^2/B)$ regret upper bound. Since low $D_\Delta$ type BE dimension implies low eluder dimension, compared to \cite{kong2021eluderlowcost}, we can derive a $\sqrt{K}$ regret and $\cO((\log K)^2)$ switching cost \textit{without a restrictive value-closeness assumption}.
\end{example}
\subsection{Linear Mixture MDP}

\begin{example}[Linear Mixture MDP] The transition kernel of the Linear Mixture MDP~\citep{ayoub2020model} is a linear combination of several basis kernels. In this model, the transition kernel can be represented by $\PP_h(s'\mid s,a) = \langle \phi(s,a,s'), \theta_h\rangle$, where the feature $\phi(s,a,s') \in \RR^d$ is known and the weight $\theta_h \in \RR^d$ is unknown. We assume $\Vert \theta_h \Vert_2 \le 1$ and $\Vert \phi(s,a,s')\Vert_2 \le 1$. The reward function can be written as $r_h = \langle\psi(s,a), \theta_h\rangle$ with mapping $\psi(s,a):\cS\times \cA\mapsto \RR^d$. The following lemma shows that $\ell_2$-type EC class contains the linear mixture MDP as a special case.
\begin{lemma}[Linear Mixture MDP $\subset \ell_2$-type EC Class]\label{lemma:linear mixture l2}
    The linear mixture model belongs to $\ell_2$-type EC class. Indeed, if we choose the model-based function approximation $\cF = \cG = \{\theta_h\}_{h=1}^H$, $\zeta_h = \{s_{h+1}\}, \eta_h = \{s_h,a_h\}$,  $\kappa=1$, and $$\ell_{h,f'}(\zeta_h, \eta_h, f, g) = \theta_{h,g}^T\left[\psi(s_h,a_h)+ \sum_{s'}\phi(s_h, a_h, s')V_{h+1, f'}(s')\right]-r_h-V_{h+1,f'}(s_{h+1}),$$
    then we have
    \begin{align}
    \EE_{\zeta_h}\Big[\ell_{h,f'}(\zeta_h, \eta_h, f, g)\Big] = (\theta_{h,g}-\theta_h^*)^T \left[\psi(s_h,a_h)+ \sum_{s'}\phi(s_h, a_h, s')V_{h+1,f'}(s')\right],\label{eq:linear mixture expectation loss}
\end{align}
    and the loss function satisfies the dominance, decomposable property with $\cT(f) = f^*$ and $\ell_2$-type condition. Hence $(M,\cF,\cG,\ell,d,\kappa)$ is a $\ell_2$-type EC class.
\end{lemma}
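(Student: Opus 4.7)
The plan is to verify each of the four requirements separately: the expectation identity \eqref{eq:linear mixture expectation loss}, decomposability in the sense of Definition~\ref{def:DLF} with $\cT(f):=f^*$, the $\kappa$-dominance with $\kappa=1$, and the $\ell_2$-type eluder condition. The first three are direct linear-algebraic reductions, while the eluder condition is the genuine obstacle and will be handled via the elliptical potential lemma.

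For \eqref{eq:linear mixture expectation loss} I would condition on $\eta_h=(s_h,a_h)$ so that only $r_h+V_{h+1,f'}(s_{h+1})$ is random in $\zeta_h$. Substituting the linear-mixture forms $\PP_h^*(s'\mid s,a)=\langle\phi(s,a,s'),\theta_h^*\rangle$ and $r_h=\langle\psi(s,a),\theta_h^*\rangle$ gives $\EE_{\zeta_h}[r_h+V_{h+1,f'}(s_{h+1})]=(\theta_h^*)^\top(\psi(s_h,a_h)+\sum_{s'}\phi(s_h,a_h,s')V_{h+1,f'}(s'))$, and subtracting from the deterministic $\theta_{h,g}^\top$-term produces \eqref{eq:linear mixture expectation loss}. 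Decomposability then follows by direct subtraction:
\begin{align*}
\ell_{h,f'}(\zeta_h,\eta_h,f,g) - \EE_{\zeta_h}\big[\ell_{h,f'}(\zeta_h,\eta_h,f,g)\big] = (\theta_h^*)^\top\big(\psi + \textstyle\sum_{s'}\phi\, V_{h+1,f'}\big) - r_h - V_{h+1,f'}(s_{h+1}),
\end{align*}
which is exactly $\ell_{h,f'}(\zeta_h,\eta_h,f,f^*)$; with $\cT(f):=f^*\in\cG$ the boundary condition $\cT(f^*)=f^*$ is trivial. For $\kappa=1$ dominance I would invoke the model-based simulation lemma
\begin{align*}
V_{1,f}(s_1) - V^{\pi_f}(s_1) = \sum_{h=1}^H \EE_{\pi_f,M^*}\big[(r_{h,f}-r_h^*) + (\PP_{h,f}-\PP_h^*)V_{h+1,f}\big](s_h,a_h),
\end{align*}
and observe that each summand, under the linear-mixture parameterization, equals $(\theta_{h,f}-\theta_h^*)^\top(\psi(s_h,a_h)+\sum_{s'}\phi(s_h,a_h,s')V_{h+1,f}(s'))$, which by \eqref{eq:linear mixture expectation loss} coincides with $\EE_{\zeta_h}[\ell_{h,f}(\zeta_h,\eta_h,f,f)]$. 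Summing over episodes and bounding by absolute value delivers \eqref{def:loss dominance} with $\kappa=1$.

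The main obstacle is the $\ell_2$-type eluder condition. Setting $x_{h,i}:=\psi(s_h^i,a_h^i)+\sum_{s'}\phi(s_h^i,a_h^i,s')V_{h+1,f^i}(s')\in\RR^d$ and $v_{h,j}:=\theta_{h,f^j}-\theta_h^*\in\RR^d$, the expectation identity rewrites the loss expectation as $v_{h,j}^\top x_{h,i}$, with $\Vert x_{h,i}\Vert_2\le 1+H$ and $\Vert v_{h,j}\Vert_2\le 2$. The premise \eqref{eq:l2 condition precondition} is then $\sum_{i<k}(v_{h,k}^\top x_{h,i})^2\le\beta$ for every $k$, and the desired conclusion \eqref{eq:l2cond} becomes $\sum_{i\le k}(v_{h,i}^\top x_{h,i})^2=O(d\beta\log k)$. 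Fixing $\lambda:=1$ and $\widehat\Lambda_i:=\lambda I+\sum_{j<i}x_{h,j}x_{h,j}^\top$, Cauchy--Schwarz gives
\begin{align*}
(v_{h,i}^\top x_{h,i})^2 \le \Vert v_{h,i}\Vert_{\widehat\Lambda_i}^2\cdot\Vert x_{h,i}\Vert_{\widehat\Lambda_i^{-1}}^2 \le (4\lambda+\beta)\cdot \Vert x_{h,i}\Vert_{\widehat\Lambda_i^{-1}}^2,
\end{align*}
where the second inequality uses the premise applied at index $i$. Combining with the crude bound $(v_{h,i}^\top x_{h,i})^2\le 4(H+1)^2=O(R^2)=O(\beta)$ gives $(v_{h,i}^\top x_{h,i})^2\le O(\beta)\cdot\min(1,\Vert x_{h,i}\Vert_{\widehat\Lambda_i^{-1}}^2)$, and the standard elliptical-potential lemma then yields $\sum_{i\le k}\min(1,\Vert x_{h,i}\Vert_{\widehat\Lambda_i^{-1}}^2)\le 2d\log(1+k(1+H)^2/(d\lambda))$, so the total is $O(d\beta\log k)$ as required. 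The essential difficulty is precisely this step: \eqref{eq:l2 condition precondition} controls $v_{h,k}$ only in the \emph{unregularized} Gram matrix, so one must introduce a regularizer $\lambda I$ at the price of an extra $\lambda\Vert v\Vert^2$ term, and it is the assumption $\beta\ge R^2$ that allows the clipping needed to apply the elliptical-potential lemma cleanly.
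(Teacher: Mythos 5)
Your proposal is correct and follows essentially the same route as the paper: the expectation identity and decomposability by direct substitution of the linear-mixture parameterization, dominance via the standard value-difference/simulation decomposition, and the $\ell_2$-type eluder condition via Cauchy--Schwarz in the regularized Gram matrix $\lambda I+\sum_{j<i}x_{h,j}x_{h,j}^\top$ followed by the elliptical potential lemma (the paper's $\Sigma_i$, $W_{h,f^i}$, $X_h(f^i)$ are your $\widehat\Lambda_i$, $x_{h,i}$, $v_{h,i}$ with $\lambda=1$). The only differences are cosmetic constants in the norm bounds, which are absorbed into the $\cO(\cdot)$ exactly as the paper does.
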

Combining Theorem \ref{thm:l2}, Theorem \ref{thm:batchl2} and Lemma \ref{lemma:linear mixture l2}, Algorithm \ref{alg:ET-Rare switch} provides a $\widetilde{\cO}(H\sqrt{d\beta \log K})$ regret and a $\widetilde{\cO}(dH\log K)$ switching cost. 
Also, Algorithm \ref{alg:EC_batch} satisfies  a $\widetilde{\cO}(H\sqrt{d\beta K}\log K + dHK(\log K)^2/B)$ regret upper bound, where  $B$ is the  number of batches. 
\end{example}

\subsection{Kernelized Nonlinear Regulator}
Kernelized Nonlinear Regulator \citep{kakade2020information} (KNR) models a nonlinear control system as an unknown function in a RKHS.   When we consider the finite dimension RKHS, the model represents the dynamic as $
s_{h+1}=U_{h}^*\phi(s_h,a_h)+\varepsilon_{h+1}$, where $\phi(s_h,a_h):\cS\times \cA\to \RR^{d_\phi}$ is a mapping from a state-action pair to a feature with dimension $d_\phi$ and $\Vert \phi(\cdot,\cdot)\Vert_2 \le 1$. $U_h^* \in \RR^{d_s}\times \RR^{d_\phi}$ is a linear mapping such that $\Vert U_h^*\Vert_2 \le R$ and $\varepsilon_{h+1}\sim\cN(0,\sigma^2I)$ is a normal distribution noise.
Following \cite{chen2022abc}, we choose $\cF_h = \cG_h = \{U \in \RR^{d_s}\times \RR^{d_\phi}:\Vert \cU\Vert_2 \le R\}$ and $\ell_{h,f'}(\zeta_h, \eta_h, f, g) =U_{h,g}\phi(s_h,a_h)-s_{h+1}$\footnote{Note that the loss function can be arbitrary large because of normal distribution noise $\varepsilon$, so it does not satisfy the bounded requirement. However, we can regard it as a bounded loss function because it can be upper bounded by $\widetilde{\cO}(\sigma)$ with high probability \citep{chen2022abc}.} is a DLF and satisfies the $\ell_2$-type condition. Denote $\EE_{\zeta_h}[\ell_{h,f'}(\zeta_h, \eta_h, f,g)] = (U_{h,g}-U^*)\phi(s_h,a_h)$.
\begin{lemma}[KNR $\subset$ $\ell_2$-type EC class]\label{lemma:KNRl2}
    If we choose $\cF, \cG, \ell$ as defined above, $(M,\cF,\cG,\ell,d,\kappa)$ is a $\ell_2$-type EC class, where  $d = \tilde{\cO}(d_\phi)$ is a parameter and $\kappa = 2H/\sigma.$
     Indeed, fix a parameter $\beta \ge R^2$, the KNR model belongs to $\ell_2$-type EC class by:
    \begin{align}
        \sum_{i=1}^{k-1}&\Vert(U_{h,f^k}-U^*)\phi(s_h^i,a_h^i)\Vert_2^2\le \beta, \ \ \forall \ k \in [K],\\&\Rightarrow
        \sum_{i=1}^k \Vert(U_{h,f^i}-U^*)\phi(s_h^i,a_h^i)\Vert_2^2 \le \cO(d\beta\log K),\ \ \forall \ k \in [K]. \label{KNR}
    \end{align}
    The proof of Eq.\eqref{KNR}, decomposable property and the dominance property with $\kappa = 2H / \sigma$ are provided in Proposition 11 of \cite{chen2022abc}. Also, if we choose $X_h(f^k) = (U_{h,f^k}-U^*)$ and $W_{h,f^k}(s_h,a_h) = \phi(s_h,a_h)$, we can apply the similar argument in Section \ref{sec:proof of linear mixture} to prove the $\ell_2$-type condition.
\end{lemma}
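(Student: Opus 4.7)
The plan is to verify the three defining properties of the $\ell_2$-type EC class. Two of them --- the decomposable property with $\cT(f)=f^*$ and the $\kappa$-dominance bound with $\kappa = 2H/\sigma$ --- I would import directly from Proposition 11 of \cite{chen2022abc}, since that result treats the exact KNR loss used here and the parameter $\kappa = 2H/\sigma$ appears there through the noise variance. The only substantive work is therefore the $\ell_2$-type eluder condition \eqref{KNR}, which I would prove by lifting the linear-mixture argument of \S\ref{sec:proof of linear mixture} from scalar to matrix-valued parameters, following the hint in the lemma statement that identifies $X_h(f^k) = U_{h,f^k} - U^*$ and $W_{h,f^k}(s_h, a_h) = \phi(s_h, a_h)$.

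First I would set $X_i := U_{h, f^i} - U_h^* \in \RR^{d_s \times d_\phi}$ and $\phi_i := \phi(s_h^i, a_h^i) \in \RR^{d_\phi}$, so that the in-sample expected loss equals $\EE_{\zeta_h}[\ell_{h, f^i}(\zeta_h, \eta_h^i, f^k, f^k)] = X_k \phi_i$. The hypothesis \eqref{eq:l2 condition precondition} becomes $\sum_{i < k} \|X_k \phi_i\|_2^2 \le \beta$, and the target is $\sum_{i \le k} \|X_i \phi_i\|_2^2 \lesssim d_\phi \beta \log k$. Introduce the regularized feature covariance $\Lambda_k := I_{d_\phi} + \sum_{i < k} \phi_i \phi_i^\top$ and the matrix-Mahalanobis norm $\|X\|_\Lambda^2 := \mathrm{tr}(X \Lambda X^\top)$. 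A row-wise application of the scalar Cauchy--Schwarz inequality gives the key estimate $\|X \phi\|_2^2 \le \|X\|_\Lambda^2 \cdot \|\phi\|_{\Lambda^{-1}}^2$. Combined with the hypothesis and $\|U\|_2 \le R$ this yields $\|X_k\|_{\Lambda_k}^2 = \|X_k\|_F^2 + \sum_{i < k} \|X_k \phi_i\|_2^2 \le \cO(d_s R^2 + \beta) = \cO(\beta)$ under the logarithmic choice of $\beta$ in Theorem \ref{thm:l2}.

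Finally I would split each summand on whether $\|\phi_i\|_{\Lambda_i^{-1}}^2 \le 1$ (apply the Cauchy--Schwarz bound just derived) or $> 1$ (use the trivial bound $\|X_i \phi_i\|_2^2 \le 4 R^2$), and invoke the standard elliptical-potential lemma $\sum_{i=1}^{k} \min(1, \|\phi_i\|_{\Lambda_i^{-1}}^2) \le 2 d_\phi \log(1 + k)$ to conclude $\sum_{i \le k} \|X_i \phi_i\|_2^2 \le \cO(d_\phi \beta \log k)$, giving $d = \tilde{\cO}(d_\phi)$ as claimed. The main obstacle is the bookkeeping that keeps the output dimension $d_s$ out of the leading parameter $d$: this is resolved precisely by the matrix Cauchy--Schwarz, since the elliptical potential acts only on the feature covariance in $\RR^{d_\phi}$, while any $d_s$-dependence is absorbed into $\|X_k\|_{\Lambda_k}^2$ and thence into $\beta$. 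Everything else is a direct transcription of the scalar linear-mixture proof with the substitutions above.
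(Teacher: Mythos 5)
Your proposal is essentially the paper's own proof: the ``matrix Cauchy--Schwarz'' via $\|X\phi\|_2^2 \le \mathrm{tr}(X\Lambda X^\top)\,\|\phi\|_{\Lambda^{-1}}^2$ is exactly the paper's row-wise decomposition $\|(U_{h,f}-U_h^*)\phi\|_2^2 = \sum_{j=1}^{d_s}((U_{h,f,j}-U_{h,j}^*)^\top\phi)^2$ followed by scalar Cauchy--Schwarz per row, and the truncation-plus-elliptical-potential finish is identical, as is importing decomposability and dominance from Proposition 11 of \cite{chen2022abc}. The one substantive difference is the regularizer: you take $\Lambda_k = I + \sum_{i<k}\phi_i\phi_i^\top$, which leaves a $\|X_k\|_F^2 = \cO(d_sR^2)$ term that you must absorb by assuming $\beta \gtrsim d_s R^2$ (justified only by appeal to the specific $\beta$ of Theorem~\ref{thm:l2}), whereas the paper sets $\lambda = 4/d_s$ so the regularization contributes only $\cO(R^2) \le \cO(\beta)$ and the eluder condition holds for every $\beta \ge R^2$, as the definition of the $\ell_2$-type EC class actually requires. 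This is a cosmetic tightening rather than a gap in your argument, but if you keep $\lambda = 1$ you should state the condition you prove as holding for $\beta \ge d_sR^2$ rather than $\beta \ge R^2$.
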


In \S \OnlyInFull{\ref{appendix:additional examples}}\OnlyInShort{F}, we also provide some additional examples including linear $Q^*/V^*$, Linear Quadratic Regulator (LQR), generalized linear Bellman complete class.

\section{Switching-Cost for Zero-Sum Markov Games}\label{def of zero-sum MG}
\subsection{Definition}
\paragraph{Markov Games}
A zero-sum Markov Game (MG) consists of two players, while max-player P1 wants to maximize the reward and min-player P2 wants to minimize it. The model is represented by a tuple $(\cS, \cA, \cB, H, \PP(\cdot \mid s,a,b), r(s,a,b))$, where $\cA$ and $\cB$ denote the action space of player P1 and P2 respectively. Similar to the episode MDP, we also assume $\sum_{h=1}^H r_h(s_h,a_h,b_h)\in [0,1]$ for all possible sequences $\{s_h,a_h,b_h\}_{h=1}^H$ in this paper. The policy pair $(\upsilon, \mu) = \{\upsilon_h, \mu_h\}_{h \in [H]}$ consists of $2H$ functions $\upsilon_h : \cS\to \Delta_{\cA}, \mu_h:\cS\to\Delta_{\cB}$.
For any policy $(\upsilon, \mu)$, the action value function and state value function can be represented by 
\begin{gather*}
    Q^{\upsilon, \mu}_h(s,a,b) := \EE_{\upsilon,\mu} \Bigg[\sum_{h' = h}^H r_{h'}(s_{h'}, a_{h'}, b_{h'}) \Bigg|s_{h}=s, a_{h}=a, b_h = b \Bigg],\\
    V^{\upsilon, \mu}_h(s) := \EE_{\upsilon,\mu} \Bigg[\sum_{h' = h}^H r_{h'}(s_{h'}, a_{h'}, b_{h'}) \Bigg|s_{h}=s \Bigg],
\end{gather*}
Given the policy of P1 $\upsilon$, the {\em best response policy} of P2 is $\mu_{\upsilon}^* = \arg\min_{\mu} V_1^{\upsilon, \mu}(s_1)$. Similarly, the best response of P1 given $\mu$ is $\upsilon_{\mu}^* = \arg\max_{\upsilon}V_1^{\upsilon, \mu}(s_1)$. The Nash Equilibrium (NE) of an MG is a policy pair $(\upsilon^*, \mu^*)$ such that 
\begin{align*}
    V_1^{\upsilon^*, \mu_{\upsilon^*}^*}(s_1) = V_1^{\upsilon^*, \mu^*}(s_1) = V_1^{\upsilon_{\mu^*}^*, \mu^*}(s_1).
\end{align*}
We denote $V_h^{\upsilon^*, \mu^*}$ and  $Q_h^{\upsilon^*, \mu^*}$ by  $V_h^* $ and $Q_h^*$ respectively in the sequel. In addition, to simplify the notation, we let $\pi = (\upsilon, \mu)$ denote the joint policy of the two players. 
Then we write 
$V_h^\pi(s_1) = V_h^{\upsilon ,  \mu}(s_1)$.
Similar to the MDP, we can define the  Bellman operator $\cT$ for a MG  by letting  \begin{align}( \cT_h Q_{h+1} )(s_h,a_h,b_h)= r_h(s_h,a_h,b_h) + \EE_{s_{h+1}} \bigl [  \max_{\upsilon}\min_{\mu} Q_{h+1} (s_{h+1}, \upsilon, \mu) \bigr ]  ,\label{eq:bellman operator for MG}\end{align}where we denote $Q _{h+1} (s,\upsilon,\mu)= \EE_{a\sim \upsilon, b\sim\mu} [ Q _{h+1} (s,a,b) ] $.
By definition, $Q^* = \{Q_h^* \}_{h\in [H]}$ is the unique fixed point of $\cT$, i.e., $Q_h^* = \cT_h Q_{h+1}^*$ for all $h \in [H]$.

\paragraph{Function Approximation of Zero-Sum Markov Games}
In specific, 
let $M$ denote the MG instance, which is clear from the context. We assume that we have access to a hypothesis class $\cF = \cF_1\times \cdots \times \cF_H$, where a  hypothesis function $f = \{ f_1, \ldots,  f_H \}  \in \cF $ 
either represents an action-value function $Q_f = \{ Q_{h,f}\}_{h\in [H]}$ in the model-free setting, or the environment model of zero-sum MG $M_f = \{ \PP_{h,f} ,  r_{h,f} \}_{h\in [H]}$ in the model-based setting. 
Similar to the function approximation of the single-agent MDP, under model-free zero-sum  MG setting,  we denote  $\pi_f = (\upsilon_f, \mu_f)$, where $(\upsilon_f, \mu_f)$ is a NE policy pair with respect to $Q_{f}(s,\cdot,\cdot)$. 
Moreover, given $Q_f$ and $\pi_f$, we define state-value function $V_f$ by letting 
$V_{h,f}(s) =  \EE_{a \sim \pi_{h,f}(s)}[Q_{h,f}(s,a)]$ in the MDP and $V_{h,f}(s) = \EE_{a,b \sim \pi_{h,f}(s)}[Q_{h,f}(s,a,b)]$ in the MG. 

\subsection{Learning Goal of Zero-Sum Markov Games}
For zero-sum MGs, we aim to design online reinforcement learning algorithms for the player P1 (max-player). 
In other words, we only control P1 and let P2 play arbitrarily. 
The goal is to design an RL algorithm such that P1's expected total return is close to the value of the game, namely $V^*_1 (s_1)$. 
For any $k \in [K]$, in the $k$-th episode,  players  P1 and P2 executes policy pair $\pi^k = (\nu^k, \mu^k) $ and P1's expected total return is given by $V_1^{\pi^k} (s_1)$. The regret of P1 is also given by \eqref{eq:define_regret}.

For MGs with the decoupled setting, since we can only control the player P1, the switching cost is only defined on the action of player P1 $\upsilon$, i.e. 
   $$ N_{\mbox{switch}}(K) = \sum_{k=1}^K \II\{\upsilon^k \neq \upsilon^{k+1}\}.$$ 

\subsection{Algorithms and Theoretical Results of Zero-Sum Markov Games}
The algorithm is provided in Algorithm \ref{alg:ET-Rare switch MG}. 
\begin{algorithm}[H]
    \begin{algorithmic}[1]
        
	\caption{$\ell_2$-EC-RS}
	\label{alg:ET-Rare switch MG}
	\STATE {\textbf{Initialize:}} $D_1,D_2,\cdots,D_H=\emptyset,\mathscr{B}_1 = \cF$.
	
	\FOR{$k=1,2,\cdots,K$}
            
	    \STATE \textcolor{blue}{(Zero-Sum MG):} Compute $\upsilon^k = \upsilon_{f^k}$, where $f^k = \arg\max_{f \in \mathscr{B}_{k-1}}V_{f}^{\upsilon_f, \mu_f}(s_1)$. The adversary chooses strategy $\mu^k$, then we let $\pi^k = (\upsilon^k, \mu^k).$
	    \STATE Execute policy $\pi^k$ to collect  the trajectory, update $D_h = D_h\cup\{\zeta_h^k, \eta_h^k\}, \forall h\in [H]$.

            
	    \IF {$L_h^{1:k}(D^{1:k}_h, f^k, f^k)-\inf_{g \in \cG}L_h^{1:k}(D^{1:k}_h,f^k, g)\ge 5\beta$ for some $h \in [H]$} 
     
        \STATE Update \begin{align*}
	        \mathscr{B}^{k}=\left\{f \in \cF: L_h^{1:k}(D^{1:k}_h, f, f)-\inf_{g \in \cG}L_h^{1:k}(D^{1:k}_h,f, g)\le \beta, \forall h \in [H] \right\}.
	    \end{align*}
            \ELSE \STATE $\mathscr{B}^{k}=\mathscr{B}^{k-1}$.
            \ENDIF
	\ENDFOR
 \end{algorithmic}
\end{algorithm}
An example of zero-sum MGs is the decoupled zero-sum MGs with a low minimax BE dimension. We formulate it in \ref{example:decoupled MG}, and prove that it belongs to $\ell_2$-type EC class.
\begin{example}[Decoupled Zero-Sum Markov Games with Low Minimax BE Dimension]\label{example:decoupled MG}
    The $\ell_2$-type EC class can be extended to the multi-agent setting. We consider the zero-sum MGs with the decoupled setting \citep{huang2021markovgamegeneral}, which means that the agent can only control the max-player P1, while an adversary can control the min-player P2. In this case, we let $\eta_h = \{s_h, a_h, b_h\}$ and $\zeta_h = \{s_{h+1}\}$, choose the loss function as 
    \begin{align}\label{eq:loss Markov}
        \ell_{h,f'}(\zeta_h, \eta_h, f, g) &= Q_{h,g}(s_h,a_h,b_h) - r_h(s_h,a_h,b_h)-V_{h+1,f}(s_{h+1})\\
        &=Q_{h,g}(s_h,a_h,b_h) - r_h(s_h,a_h,b_h) - \max_{\upsilon}\min_{\mu}Q_{h+1,f}(s_{h+1},\upsilon,\mu)\nonumber\\
        &\triangleq \cE(f, s_h,a_h,b_h)\nonumber.
    \end{align}
    The Bellman operator for MGs is defined as $$\cT_h'f(s_h,a_h,b_h)= r_h(s_h,a_h,b_h) + \EE_{s_{h+1}}\max_{\upsilon}\min_{\mu} f(s_{h+1}, \upsilon, \mu),$$ where we denote $f(s,\upsilon,\mu)= \EE_{a\sim \upsilon, b\sim\mu}[f(s,a,b)]$. We prove the decoupled MG belongs to the $\ell_2$-type EC class.
    \begin{lemma}[Decoupled Zero-Sum MGs $\subseteq$ $\ell_2$-type EC Class]\label{lemma:decouped markov l2 type}
        If we choose $\cF, \cG $ such that $\cT_h\cF \subseteq \cG$, then for any two-player zero-sum MG $M$, $(M,\cF,\cG,\ell, d,\kappa)$ is a $\ell_2$-type EC class, where $\eta_h = \{s_h, a_h, b_h\}$, $\zeta_h = \{s_{h+1}\}$, $\ell$ is chosen as in \ref{eq:loss Markov}, $\kappa=1$, and the parameter $d$ is  the minimax BE dimension $d_{\mathrm{ME}}(\cF, 1/\sqrt{T})$.
        The dominance and decomposable property of the loss function (Eq. \eqref{eq:loss Markov}) with the Bellman operator $\cT$ for MGs are provided in \cite{huang2021markovgamegeneral}. The $\ell_2$-eluder condition holds by replacing $a_h^i$ in Lemma~\ref{lemma:BEdim l2} to $(a_h^i,b_h^i)$: If $\sum_{i=1}^{k-1}[\cE(f^k,s_h^i,a_h^i, b_h^i)^2]\le \beta$ holds for any $k \in [K]$ and $\beta \ge R^2$, then for any $k \in [K]$ we have 
\begin{align}\label{eq:markovdiml2}\sum_{i=1}^k \Big[\cE(f^i,s_h^i,a_h^i, b_h^i)^2\Big]\le \cO(d\beta\log K).
    \end{align}
    \end{lemma}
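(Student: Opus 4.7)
The plan is to establish the three required properties of the $\ell_2$-type EC class separately: the $\kappa$-dominance bound with $\kappa = 1$, the decomposable property of the loss $\ell$ in \eqref{eq:loss Markov}, and the $\ell_2$-eluder condition \eqref{eq:markovdiml2}. For the first two properties, the strategy is to directly invoke the corresponding results from \cite{huang2021markovgamegeneral}: the dominance follows from the standard value-difference decomposition for decoupled zero-sum MGs applied to the optimistic hypothesis $f^i$ (together with the fact that $\upsilon^i$ is the max-player of $f^i$), and the decomposability holds with $\cT(f) = \cT'_{h} f$ given that $\cT'_h \cF \subseteq \cG$, where $\cT'_h$ is the minimax Bellman operator defined in the paragraph preceding the lemma. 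The substantive part of the proof is therefore the $\ell_2$-eluder condition.

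To prove \eqref{eq:markovdiml2}, the plan is to mirror the argument used for Lemma~\ref{lemma:BEdim l2} in the single-agent setting verbatim, with the triple $(s_h, a_h, b_h)$ playing the role of the state-action pair $(s_h, a_h)$. I would fix $h \in [H]$ and observe that the family of Bellman residuals $\{\cE(f,\cdot,\cdot,\cdot): f \in \cF\}$, viewed as real-valued functions of the triple $(s,a,b)$, has eluder dimension at most $d_{\mathrm{ME}}(\cF, 1/\sqrt{T})$ by the definition of minimax BE dimension. I would then apply a dyadic peeling argument: for each level $j \ge 0$, use the precondition \eqref{eq:l2 condition precondition} together with the definition of the eluder dimension to bound the number of indices $i \le k$ for which $|\cE(f^i, s_h^i, a_h^i, b_h^i)| \in [2^{-j}\sqrt{\beta/k},\, 2^{-j+1}\sqrt{\beta/k}]$. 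Summing the contributions over the logarithmically many relevant levels $j$ (the range of $j$ being determined by the boundedness $|\cE| \le R$ that follows from the assumption $\sum_h r_h \in [0,1]$) yields the $\cO(d\beta \log k)$ bound.

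The main obstacle I anticipate is confirming that the eluder-type independence argument used in the MDP proof transfers cleanly to the MG setting, since the minimax Bellman operator $\cT'_h$ involves a $\max_{\upsilon}\min_{\mu}$ that depends nonlinearly on $f$ and thus yields a different class of Bellman residuals than in the MDP case. However, the minimax BE dimension is defined precisely to capture this complexity with respect to the class of distributions induced by the observed triples $(s_h^i, a_h^i, b_h^i)$, so once one identifies $\eta_h^i = (s_h^i, a_h^i, b_h^i)$ as the "input" variable, the combinatorial pigeonhole step behind the eluder argument goes through without structural modification. A small additional check is the boundedness constant $R$: since $r_h \in [0,1]$, any hypothesis value function satisfies $|V_{h,f}| \le H$ and hence $|\cE(f, s_h, a_h, b_h)| \le 2H + 1$, which can be absorbed into the constant in \eqref{eq:markovdiml2} exactly as in the single-agent proof.
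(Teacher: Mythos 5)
Your proposal is correct and takes essentially the same route as the paper: both handle dominance via the standard value-difference decomposition for the decoupled MG (using that $\upsilon^i$ is the max-player's greedy/NE policy for $f^i$) and decomposability by a direct computation with the minimax Bellman operator under $\cT_h\cF\subseteq\cG$, and both prove the eluder condition by porting the Lemma~\ref{lemma:BEdim l2} argument with $(s_h,a_h)$ replaced by $(s_h,a_h,b_h)$ and the single-agent Bellman operator replaced by the minimax one. Your dyadic peeling of the error magnitudes is only a cosmetic variant of the paper's sorted harmonic-sum bound; both rest on the identical eluder-dimension pigeonhole count (Proposition 43 of \cite{jin2021bellman}) and yield the same $\cO(d\beta\log K)$ bound.
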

\end{example}
Similarly, the previous theorems (Theorem \ref{thm:l2} and Theorem \ref{thm:batchl2}) and Lemma \ref{lemma:decouped markov l2 type} give a  $\widetilde{\cO}(H\sqrt{d\beta \log K})$ regret and $\widetilde{\cO}(dH\log K)$ switching cost for the decoupled zero-sum MG, where $d$ is the minimax BE dimension $d_{\mathrm{ME}}(\cF, 1/\sqrt{T})$. Also Algorithm \ref{alg:EC_batch} gives a $\widetilde{\cO}(H\sqrt{d\beta K}\log K + dHK(\log K)^2/B)$ regret.
We mainly consider the decoupled setting because it can be naturally contained in our $\ell_2$-type EC class.

\section{\texorpdfstring{$\ell_1$}{}-type EC Class}\label{appendix:l1 EC class}
\subsection{Definition of \texorpdfstring{$\ell_1$}{}-type EC Class}
In recent years, the $\ell_1$-eluder argument has been proposed in \citep{liu2022partially} for the sample-efficient algorithm of POMDP, and \citep{liu2022optimistic} generalize it to the more general classes.
Similar to $\ell_2$-type EC class, we provide the definition of the $\ell_1$-type EC class based on \citep{liu2022optimistic}. The $\ell_1$-type class has two assumptions, which are similar to the $\ell_2$-type EC class. 
To provide a consistent treatment of $\ell_2$-type EC class, we let $\{ \zeta_h , \eta_h\}_{h\in [H]} $ be subsets of the trajectory.  
In particular, we let $\eta_h = \{ \cT_H\}$ and $\zeta _h = \emptyset$, and only consider the single-agent MDP in the $\ell_1$-type EC class.
\begin{definition}
    
Given an MDP or POMDP instance (Example \ref{pomdp}) $M$, let $\cF$ and $\cG$ be two hypothesis function classes satisfying the realizability Assumption \ref{assum:realizability} with $\cF \subseteq \cG$. For any $h \in [H]$ and $f' \in \cF$, let $\ell_{h,f'}(\zeta_h,\eta_h,f,g)$ be a vector-valued loss function at step $h$, where $\zeta_h,\eta_h$ are subsets of trajectory that defined above. For parameters $d$ and $\kappa$, we say that $(M,\cF,\cG,\ell,d,\kappa)$ is a $\ell_1$-type EC class if the following two conditions hold for any $\beta$ and $h \in [H]$:

(i). ($\ell_1$-type Condition) For any $K$ hypotheses $f^1,\cdots,f^K \in \cF$, if \begin{align}\sum_{i=1}^{k-1} \mathbb{E}_{\eta_h\sim \pi^i, \zeta_h}\Big[\ell_{h,f^i}(\zeta_h, \eta_h, f^k,f^k)\Big] \le \sqrt{\beta k}\label{l1:precondition}\end{align} holds for any $k \in [K]$, then for any $k \in [K]$, we have \begin{align}\label{eq:l1-type condition}
          \sum_{i=1}^{k} \mathbb{E}_{\eta_h\sim \pi^i, \zeta_h}\Big[\ell_{h,f^i}(\zeta_h, \eta_h, f^i, f^i)\Big] \le \widetilde{\cO}(\mathrm{poly}\log (k)(\sqrt{d\beta k} + d\cdot \mathrm{poly}(H))).
    \end{align}
    When we choose $\beta \ge 1$, the right side of Eq.\eqref{eq:l1-type condition} can be simplified as $\widetilde{\cO}\left(\sqrt{d\beta k}\cdot \mathrm{poly}\log (k)\right)$
    
    (ii). ($\kappa$-Dominance) For any fixed $k \in [K]$, with probability at least $1-\delta$,
\begin{align} 
        \sum_{i=1}^k (V_{1,f^i}(s_1)-V^{\pi_i}(s_1))
        \le \kappa\cdot \left(\sum_{h=1}^H \sum_{i=1}^{k} \mathbb{E}_{\eta_h \sim \pi^i, \zeta_h}\Big[\ell_{h,f^i}(\zeta_h, \eta_h, f^i,f^i)\Big]\right).
    \end{align}
\end{definition}
Moreover, in this work we only consider a particular loss function 
\begin{align}\ell_{h,f'}(\zeta_h, \eta_h, f,g) = \ell_{h,f'}(\tau_H, f,g) = | \mathbb{P}_{f}(\tau_H)/\mathbb{P}_{f^*}(\tau_H)-1|\label{l1:loss}\end{align}
in the $\ell_1$-type EC class,
where $f^*$ is the true model in realizability Assumption~\ref{assum:realizability}, and \begin{align*}
    \PP_{f}(\tau_H) = \prod_{h=1}^H \PP_f(s_h\mid \tau_{h-1})
\end{align*} 
is the product of transition probability in $\tau_H$  under the model $f$.
Then \begin{align*}&\mathbb{E}_{\eta_h\sim \pi, \zeta_h}[\ell_{h,f'}(\zeta_h, \eta_h, f,g)] \\&\quad=  \mathbb{E}_{\eta_h\sim \pi, \zeta_h}\left(\frac{\prod_{h=1}^H \PP_f(s_h\mid \tau_{h-1})}{\prod_{h=1}^H \PP_{f^*}(s_h\mid \tau_{h-1})}-1\right)\\&\quad=\EE_{\tau_H\sim \pi}\left[\frac{\prod_{h=1}^H (\PP_f(s_h\mid \tau_{h-1}) \pi(a_h\mid s_h,\tau_{h-1}))}{\prod_{h=1}^H (\PP_{f^*}(s_h\mid \tau_{h-1})\pi(a_h\mid s_h,\tau_{h-1}))}-1\right]\\
&\quad=\sum_{\tau_H}\left[\prod_{h=1}^H (\PP_f(s_h\mid \tau_{h-1}) \pi(a_h\mid s_h,\tau_{h-1}))-\prod_{h=1}^H (\PP_{f^*}(s_h\mid \tau_{h-1})\pi(a_h\mid s_h,\tau_{h-1}))\right]s
\\&\quad=d_{\mathrm{TV}}(\mathbb{P}_{f}^{\pi}, \mathbb{P}_{f^*}^{\pi}),
\end{align*} which is the total variation difference between the trajectory distribution under model $f$ and the true model $f^*$ with policy $\pi$. By this particular selection of loss function, the $\kappa$-Dominance property is satisfied by $\kappa=1/H$ and the following inequality:
\begin{align*}
    \sum_{i=1}^k (V_{1,f^i}(s_1)-V^{\pi_i}(s_1)) \le  \sum_{i=1}^k  d_{\mathrm{TV}}(\PP_{f^i}^{\pi_i}, \PP_{f^*}^{\pi_i}).
\end{align*}

Compared to the $\ell_2$-type condition, the primary difference is that the precondition of $\ell_1$-type condition (Eq.\eqref{l1:precondition}) requires the sum of $\ell_1$ norm of the loss function can be controlled by $\cO(\sqrt{k})$, while the precondition of $\ell_2$-type condition (Eq.\eqref{eq:l2 condition precondition}) requires the square sum of the loss function is controlled by $\cO(\beta)$. Second, the selection of $\zeta_h$ and $\eta_h$ are different to the $\ell_2$-type EC class, and the left side of Eq.\eqref{eq:l1-type condition} contains an extra expectation on $\eta_h = \tau_H\sim \pi^i$. Moreover, since we consider a particular scalar loss function $\ell_{h,f'}(\zeta_h, \eta_h, f, g) = |\PP_f(\tau_H)/\PP_{f^*}(\tau_H)-1|$, we do not use the norm on the loss function like $\ell_2$-type condition. 
With this selection of loss function, the $\ell_1$-type Condition Eq.\eqref{eq:l1-type condition} is similar to the generalized eluder-type condition (Condition 3.1) in \cite{liu2022optimistic}.
We provide two examples in the $\ell_1$-type EC class, which are also introduced in the previous works \citep{liu2022partially, liu2022optimistic}. 
\begin{example}[Undercomplete POMDP \citep{liu2022partially}]\label{pomdp}
 
A partially observed Markov decision process (POMDP) is represented by a tuple $$(\cS, \rO, \cA, H, s_1, \TT = \{\TT_{h,a}\}_{(h,a) \in [H]\times \cA}, \OO = \{\OO_h\}_{h \in [H]}, r = \{r_h\}_{h \in [H]}),$$
where $\TT_{h,a}\in \RR^{|\cS| \times |\cS|}$ represents the transition matrix for latent state of the action $a$ at step $h$, $\OO_h:\cS\times \rO \mapsto \RR$ denotes the probability of generating the observation $o\in \rO$ conditioning on the latent state $s \in \cS$.  $r_h:\rO \mapsto \RR^+$ is the reward function at step $h$ with observation. We assume $\sum_{h=1}^H r_h(s_h,a_h)\in [0,1]$ for all possible sequences $\{s_h,a_h\}_{h \in [H]}.$ 
During the interactive process, at each step, the agent can only receive the observation and reward without information about the latent state. 
In POMDP, we consider the general policy $\pi = \{\pi_h\}_{h \in [H]}$, where $\pi_h:\tau_{h-1}\times \cS\to \Delta_{\cA}$, which can be history-dependent. At step $h$, the agent can only see her observations $o_h$ with probability $\OO_h(s_h,o_h)$, take her action $a_h$ with policy $\pi(\tau_{h-1}\times s_h)$,  and receive the reward $r_h(s_h,a_h)$. Then the agent arrives to the next state $s'$ with probability $\TT_{h,a_h}(\cdot \mid s_h).$ For POMDP, the transition kernel $\PP_f$ consists of $\{\TT_f,\OO_f\}$, and the model is represented by  $M_f = \{\TT_f, \OO_f,r_f\}$.

Undercomplete POMDP \cite{liu2022partially} is a special case of POMDP such that $S=|\cS|\le |\rO|$ and there exists a constant $\alpha>0$ with $\min_h \sigma_S(\OO_h)\ge \alpha.$ This assumption implies that the observation contains enough information to distinguish two states. In this paper, we only consider undercomplete POMDP because only in this setting we can have a sublinear regret result.\footnote{In the previous works studying sample-efficient POMDP, they only provide sample complexity or "pseudo-regret" (defined in \cite{liu2022optimistic}, \cite{zhong2022gec}) for overcomplete POMDP.}
    The undercomplete POMDP with the model classes $\cF$ and  $\min_{h}\sigma_S(\OO_h) \ge \alpha$ belongs to the $\ell_1$-type EC class by 
    \begin{align}\label{l1typecondition}
        \sum_{i=1}^{k-1} d_{\mathrm{TV}}&(\PP_{f^k}^{\pi^i}, \PP_{f^*}^{\pi^i}) \le \sqrt{\beta k}, \ \forall \ k \in [K],   \nonumber\\&\Rightarrow  \sum_{i=1}^{k} d_{\mathrm{TV}}(\PP_{f^k}^{\pi^i}, \PP_{f^*}^{\pi^i})\le \tilde{\cO}(\mathrm{poly}\log (k)(\sqrt{d\beta k}+\sqrt{d})),\ \ \forall \ k \in [K],
    \end{align}
    where $d = S^4A^2O^2H^6 \cdot \alpha ^{-4} $. The proof is provided at  step E.$1$, step E.$2$ and E.$3$ of Theorem 24 in \cite{liu2022partially}. 
\end{example}
\begin{example}[Q-type SAIL condition \cite{liu2022optimistic}]
    Q-type SAIL condition provided in \cite{liu2022optimistic} is satisfied by many RL models such as witness condition, factor MDPs and sparse linear bandits. A model class $\cF$ satisfies the Q-type $(d,c,B)$-SAIL condition if there exists two sets of mapping functions $\{p_{h,i}:\cF\to \RR^{d_{\cF}}\}_{(h,i) \times [H]\times[m]}$ and $\{q_{h,i}:\cF\to \RR^{d_{\cF}}\}_{(h,i) \times [H]\times[n]}$ such that for $f, f' \in \cF$ with optimal policy $\pi^f, \pi^{f'}$, we have 
    \begin{gather*}
        d_{\mathrm{TV}}(\PP^{\pi^f}_{f'}, \PP^{\pi^f}_{f^*})\ge c^{-1} \sum_{h=1}^H \sum_{i=1}^m \sum_{j=1}^n |\langle p_{h,i}(f) , q_{h,i}(f')\rangle|\\
        d_{\mathrm{TV}}(\PP^{\pi^f}_{f}, \PP^{\pi^f}_{f^*})\le \sum_{h=1}^H \sum_{i=1}^m \sum_{j=1}^n |\langle p_{h,i}(f) , q_{h,i}(f)\rangle|\\
        \left(\sum_{i=1}^m \Vert p_{h,i}(f)\Vert_1\right)\cdot \left(\sum_{j=1}^n \Vert q_{h,i}(f')\Vert_\infty\right)\le B.
    \end{gather*}
    From Lemma 6.3 in \cite{liu2022optimistic}, the Q-type SAIL condition also satisfies the Eq.~\eqref{l1typecondition} if we choose $d = \mbox{poly}(H)\cdot \max\{c^2,B^2\}\cdot d_\cF^2$.
\end{example}

\subsection{Rare Policy Switch Algorithm for \texorpdfstring{$\ell_1$}{}-type EC Class}\label{appendix:algorithm for l1}

In this subsection, we provide an algorithm for the $\ell_1$-type EC class with the particular loss function Eq.\eqref{l1:loss}. We only consider the MDP model for $\ell_1$-type EC class, and leave the zero-sum MG or multi-player general-sum MG as the future work. Our algorithm achieves a logarithmic switching cost while still maintaining a $\widetilde{\cO}(\sqrt{K})$ regret. The pseudo-code of the algorithm is in Algorithm \ref{alg:ET-Rare switch l1}.

In Algorithm \ref{alg:ET-Rare switch l1}, the discrepancy function $L$ is selected as the negative log-likelihood function $$L^{1:k-1}(D^{1:k-1},f) = -\sum_{i=1}^{k-1}\log \PP_{f}(\tau_H^i),$$ then the Line~\ref{line:l1confidenceset} in Algorithm~\ref{alg:ET-Rare switch l1} is equivalent  to the OMLE algorithm \cite{liu2022optimistic}. Unlike OMLE, we change the policy only when the TV distance between $f^k$ and estimated optimal policy $g^k = \inf_{g \in \cF}L^{1:k}(D_{1:k},g)$ is relatively large.  Intuitively, this distance measures the possible improvement based on the historical data. Only when we can get enough new information from the data, we recompute the confidence set and switch the policy. 


\begin{algorithm}[H]
    \begin{algorithmic}[1]
        
	\caption{Modified $\ell_1$ ABC-Rare switch}
	\label{alg:ET-Rare switch l1}
	\STATE {\textbf{Input}} $D=\emptyset,\mathscr{B}_1 = \cF$, constant $c$ in Lemma~\ref{lemma:cbetak}.
	
	\FOR{$k=1,2,\cdots,K$}
	    \STATE Compute $\pi^k = \pi_{f^k}$, where $f^k = \arg\max_{f \in \mathscr{B}^{k-1}}V_{f}^{\pi_f}(s_1)$.
	    
	    \STATE Execute policy $\pi^k$ to collect $\tau^k$, update $D = D\cup\{\tau_H\}$.

        \STATE Calculate $g^k = \inf_{g \in \cF}L^{1:k}(D^{1:k},g)$.
            
	    \IF {
        $\sum_{i=1}^{k} d_{\mathrm{TV}}(\PP_{f^k}^{\pi_i}, \PP_{g^k}^{\pi_i})\le 5c\sqrt{\beta k}$
     } \label{alg:l1beginif}
     
        \STATE Update \begin{align*}
	        \mathscr{B}^{k}=\left\{f \in \cF: L^{1:k}(D^{1:k}, f)-L^{1:k}(D^{1:k}, g_h^k)\le \beta\right\}.
        \end{align*}\label{line:l1confidenceset}
            \ELSE \STATE $\mathscr{B}^{k}=\mathscr{B}^{k-1}$.	    \ENDIF\label{alg:l1endif}
	\ENDFOR
 \end{algorithmic}
\end{algorithm}

Now we state our results for $\ell_1$-type EC class under both the rare policy switch problem and the batch learning problem.
\begin{theorem}\label{thm:l1}
    Given the hypothesis class $\cF$, we choose $\eta_h = \tau_H, \zeta_h = \emptyset$ and the loss function $\ell_{h,f'}(\zeta_h,\eta_h,f,g) = \ell_{h,f'}(\tau_H,f,g)= |\PP_f(\tau_H)/\PP_{f^*}(\tau_H)-1| $. Denote $\cB_\cF(\rho)$ as the $\rho$-bracketing number for hypothesis class $\cF$ that defined in \S \ref{appendix:define covering number}. 
    By setting $\beta = c\log(TB_{\cF}(1/K)/\delta)\ge 1$, in which  with probability at least $1-\delta$ the Algorithm~\ref{alg:ET-Rare switch} will achieve sublinear regret
    \begin{align*}
        R(K) =\tilde{\cO}(H\kappa\sqrt{d\beta K}\cdot \mathrm{poly}\log (K))
    \end{align*}
    with switch cost
    \begin{align*}
        N_{\mbox{switch}}(K)=\cO\left(\sqrt{d}\cdot\mathrm{poly}\log (K)\right).
    \end{align*}
\end{theorem}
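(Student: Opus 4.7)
The plan is to mirror the $\ell_2$ analysis of Theorem~\ref{thm:l2} while replacing the quadratic in-sample loss with the log-likelihood and Hellinger machinery native to the $\ell_1$ setting. The skeleton has three pieces: MLE concentration and optimism, verification of the precondition of the $\ell_1$-EC condition at every episode despite lazy updates, and an eluder-style counting argument for the number of updates.

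For MLE concentration, I would invoke the standard bracketing-number bound (e.g., Proposition~14 of \cite{liu2022optimistic}) applied to the negative log-likelihood loss. With $\beta=c\log(T\cB_\cF(1/K)/\delta)$, on a $(1-\delta)$-probability event, for every $k$ one obtains three facts: (a) $f^*\in\mathscr{B}^k$, so that optimism $V_{1,f^k}(s_1)\ge V_1^*(s_1)$ holds; (b) for any $f\in\mathscr{B}^k$, the squared Hellinger sum $\sum_{i\le k}D_H^2(\PP_f^{\pi^i},\PP_{f^*}^{\pi^i})$ is $O(\beta)$; and (c) the MLE $g^k$ also obeys $\sum_{i\le k}D_H^2(\PP_{g^k}^{\pi^i},\PP_{f^*}^{\pi^i})\le O(\beta)$. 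Translating via $d_{\mathrm{TV}}\le\sqrt{2}\,D_H$ and Cauchy--Schwarz yields the in-sample TV bound $\sum_{i\le k}d_{\mathrm{TV}}(\PP_f^{\pi^i},\PP_{f^*}^{\pi^i})\le c_0\sqrt{\beta k}$ for every $f\in\mathscr{B}^k$ and for $g^k$.

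For the regret, let $t_{j-1}$ be the most recent update time before episode $k$, so that $f^k=f^{(j)}$ is the optimistic maximizer over $\mathscr{B}^{t_{j-1}}$. The MLE concentration at time $t_{j-1}$ controls the tail up to $t_{j-1}$, and the non-switching condition on $(t_{j-1},k-1]$ together with an MLE bound on $g^{k-1}$ and a triangle inequality controls the remaining tail. Combining these yields $\sum_{i\le k-1}d_{\mathrm{TV}}(\PP_{f^k}^{\pi^i},\PP_{f^*}^{\pi^i})\le O(\sqrt{\beta(k-1)})$, which is precisely the precondition \eqref{l1:precondition}. Invoking the $\ell_1$-EC property \eqref{eq:l1-type condition} then gives $\sum_{k=1}^K d_{\mathrm{TV}}(\PP_{f^k}^{\pi^k},\PP_{f^*}^{\pi^k})\le\tilde{\cO}(\sqrt{d\beta K}\cdot\mathrm{poly}\log K)$, and optimism together with the $\kappa$-dominance property deliver the claimed regret bound.

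The main obstacle is the switching-cost bound. At every update time $t_j$, the triggering condition combined with the MLE guarantee for $g^{t_j}$ forces $\sum_{i\le t_j}d_{\mathrm{TV}}(\PP_{f^{(j)}}^{\pi^i},\PP_{f^*}^{\pi^i})\ge c_1\sqrt{\beta t_j}$, and because $f^{(j)}\in\mathscr{B}^{t_{j-1}}$ contributes only $O(\sqrt{\beta t_{j-1}})$ on the old data, the ``fresh'' tail on the epoch $(t_{j-1},t_j]$ must be $\Omega(\sqrt{\beta t_j}-\sqrt{\beta t_{j-1}})$. Summing over updates, the total fresh contributions lower-bound $\sum_k d_{\mathrm{TV}}(\PP_{f^k}^{\pi^k},\PP_{f^*}^{\pi^k})$, which is itself upper-bounded by $\tilde{\cO}(\sqrt{d\beta K}\cdot\mathrm{poly}\log K)$ via the $\ell_1$-EC condition. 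Extracting $M\le\cO(\sqrt{d}\cdot\mathrm{poly}\log K)$ from this chain requires a peeling of update times into $\cO(\log K)$ geometric scales $t_j\in[2^{\ell-1},2^\ell)$ and, within each scale, arguing via Cauchy--Schwarz that the number of updates is at most $\tilde{\cO}(\sqrt{d})$, since on a fixed scale each fresh contribution is of order $\sqrt{\beta\cdot 2^\ell}$. Handling the dependence of $f^{(j)}$ on the history, together with the boundary terms when $t_j/t_{j-1}$ is only mildly larger than $1$, is the main technical subtlety of the proof.
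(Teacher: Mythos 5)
Your proposal follows essentially the same route as the paper: MLE concentration at the level of total variation (the paper's Lemma~\ref{lemma:cbetak}, via Proposition~14 of \cite{liu2022partially} plus Cauchy--Schwarz), a two-case verification of the precondition \eqref{l1:precondition} depending on whether the confidence set was refreshed at episode $k-1$, the $\ell_1$-eluder condition to bound $\sum_i d_{\mathrm{TV}}(\PP_{f^i}^{\pi^i},\PP_{f^*}^{\pi^i})$, optimism plus dominance for the regret, and a per-epoch ``fresh contribution'' lower bound with dyadic peeling for the switching cost.

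The one place your write-up is too weak is the fresh-tail lower bound at an update time $t_j$: you state it as $\Omega(\sqrt{\beta t_j}-\sqrt{\beta t_{j-1}})$, which telescopes to only $\cO(\sqrt{\beta K})$ over all updates and cannot yield $M\le\cO(\sqrt{d}\cdot\mathrm{poly}\log K)$; you then flag the case $t_j/t_{j-1}\approx 1$ as an unresolved subtlety. In the paper this subtlety does not arise because the constants are arranged so that the subtraction disappears at leading order: the trigger gives $\sum_{i\le t_j}d_{\mathrm{TV}}(\PP_{f^{t_j}}^{\pi^i},\PP_{g^{t_j}}^{\pi^i})\ge 5c\sqrt{\beta t_j}$, the triangle inequality with Lemma~\ref{lemma:cbetak} turns this into $\ge 4c\sqrt{\beta t_j}$ against $f^*$, and the old-data contribution of $f^{t_j}=f^{t_{j-1}+1}\in\mathscr{B}^{t_{j-1}}$ is at most $c\sqrt{\beta t_{j-1}}\le c\sqrt{\beta t_j}$, so the fresh tail on $(t_{j-1},t_j]$ is at least $3c\sqrt{\beta t_j}$ unconditionally (Eq.~\eqref{eq:3c}). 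Your peeling step already implicitly assumes each fresh contribution is of order $\sqrt{\beta\cdot 2^\ell}$ on scale $\ell$, which is exactly this stronger bound, so the fix is just to carry the constants through the triangle inequality rather than subtracting the two square roots. With that correction the argument matches the paper's.
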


\begin{theorem}\label{thm:batchl1}
Under the same condition as \ref{thm:l1}, if we choose the position of batches as $[k_j, k_{j+1})$ with $k_j = j\cdot \lfloor K/B\rfloor + 1$, then with probability at least $1-\delta$ we can get the following regret
    \begin{align*}
        R(K) = \widetilde{\cO}\left(\mathrm{poly}\log (K)\left(\sqrt{d}\cdot \frac{K}{B}+\sqrt{d\beta K}\right)\right).
    \end{align*}
\end{theorem}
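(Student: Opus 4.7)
My plan is to adapt the good/bad batch decomposition of Theorem~\ref{thm:batchl2} to the $\ell_1$ setting, replacing the $\ell_2$-eluder step with the $\ell_1$-type condition~\eqref{eq:l1-type condition} and MLE-based concentration for the log-likelihood loss used to construct the confidence sets. First, I would establish optimism at the boundary of each batch: by a standard MLE concentration argument, with probability at least $1-\delta$ the confidence set contains $f^*$ for every batch, so the optimistic choice $f^{k_j}$ satisfies $V_1^*(s_1) \le V_{1,f^{k_j}}(s_1)$. Combining this with the $\kappa$-Dominance property (with $\kappa = 1/H$ and the TV loss), the regret telescopes into
\[
R(K) \;\le\; \sum_{i=1}^K d_{\mathrm{TV}}\!\bigl(\PP_{f^i}^{\pi^i},\PP_{f^*}^{\pi^i}\bigr) \;=\; \sum_{j=1}^B T_B\,e_j,
\]
where $T_B = \lfloor K/B \rfloor$ and $e_j := d_{\mathrm{TV}}(\PP_{f^{k_j}}^{\pi^{k_j}},\PP_{f^*}^{\pi^{k_j}})$ (the second equality uses that within a batch $f^i$ and $\pi^i$ are constant).

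Next, I would classify the batches via a badness parameter $c_j$ defined by $c_j \sqrt{\beta\,T_B} = T_B e_j$, calling batch $j$ \emph{good} if $c_j \le C_0$ for a fixed constant $C_0$ and \emph{bad} otherwise. For every $k \in [k_j,k_{j+1})$ in a good batch, the precondition $\sum_{i=1}^{k-1} d_{\mathrm{TV}}(\PP_{f^k}^{\pi^i},\PP_{f^*}^{\pi^i}) \le \sqrt{\beta k}$ of the $\ell_1$-type condition holds: the contribution from earlier batches is controlled by the confidence-set guarantee (applying Cauchy--Schwarz to the MLE-induced squared-Hellinger or $\mathrm{TV}^2$ bound), and the within-batch contribution is at most $C_0\sqrt{\beta T_B} \le C_0\sqrt{\beta k}$ by the good-batch condition. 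Applying~\eqref{eq:l1-type condition} to the resulting sequence then yields a $\widetilde{\cO}(\mathrm{poly}\log(K)\sqrt{d\beta K})$ bound on the good-batch contribution to the regret.

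For the bad batches, I would bound their count by a dyadic argument modelled on the $\ell_2$ case: partition $\{j : c_j > C_0\}$ into $\cO(\log K)$ shells $\{j : c_j \in [2^s C_0, 2^{s+1} C_0)\}$ and use the $\ell_1$-type condition on a sub-sequence restricted to each shell to bound its cardinality. Since $e_j \le 1$, every bad batch contributes at most $T_B = K/B$ to the regret, and the dyadic accounting produces the $\sqrt{d}\cdot K/B$ factor in the statement. Summing the good- and bad-batch contributions then recovers the bound claimed in the theorem.

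The main obstacle is the dyadic count for bad batches. The $\ell_2$-type condition directly bounds a sum of squared residuals, which buckets naturally into shells of constant size; the $\ell_1$-type condition only controls a first-order sum, so one must apply it to a sub-sequence chosen per shell and exploit that a bad batch in shell $s$ contributes at least $2^s C_0\sqrt{\beta T_B}$ to the in-sample TV sum. A secondary subtlety is that the MLE-based confidence radius naturally yields a squared-Hellinger inequality, and Cauchy--Schwarz must be applied carefully to convert this into the linear $\sqrt{\beta k}$ precondition required by \eqref{eq:l1-type condition} without losing the $\sqrt{d}$ factor or the $\mathrm{poly}\log(K)$ dependence.
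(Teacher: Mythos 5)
Your plan diverges from the paper's proof in a way that leaves a real gap. The paper does \emph{not} use a good/bad batch decomposition for the $\ell_1$ case at all: it works entirely at the batch level. Because the confidence set is refit at every boundary $k_j$ and both $f^i$ and $\pi^i$ are constant within a batch, the episode-level guarantee \eqref{eq:TV6betak} divides by the batch length $\lfloor K/B\rfloor$ to give $\sum_{i=0}^{j-1} d_{\mathrm{TV}}(\PP^{\pi^{k_i}}_{f^{k_j}},\PP^{\pi^{k_i}}_{f^*}) \le \cO(\sqrt{(\beta B/K)\, j})$ for \emph{every} batch $j$, i.e.\ the precondition \eqref{l1:precondition} holds for the full batch-indexed sequence with rescaled parameter $\beta' = \cO(\beta B/K)$. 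One application of \eqref{eq:l1-type condition} to this sequence with $j=B$, multiplied by the batch length $K/B$, yields both terms of the theorem; in particular the $\sqrt{d}\cdot K/B$ term comes from the additive $\sqrt{d}$ term in the conclusion of \eqref{eq:l1-type condition} (which cannot be absorbed since $\beta'$ may be far below $1$) times $K/B$, not from counting bad batches.

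The concrete gap in your proposal is the dyadic count of bad batches, which you yourself flag as the main obstacle but do not resolve. The $\ell_1$-type condition cannot be ``applied to a sub-sequence restricted to each shell'': its precondition $\sum_{i=1}^{k-1}\EE[\ell] \le \sqrt{\beta k}$ is position-dependent, so passing to a subsequence shrinks the right-hand side $\sqrt{\beta k}$ (the subsequence position can be far smaller than the original index) while the left-hand side is not correspondingly controlled; and at the episode level the precondition genuinely \emph{fails} inside bad batches --- that is precisely what makes them bad --- so the global episode-level bound $\sum_i d_{\mathrm{TV}}(\PP^{\pi^i}_{f^i},\PP^{\pi^i}_{f^*}) \le \widetilde\cO(\sqrt{d\beta k})$ that powers the analogous switch-counting argument in Theorem~\ref{thm:l1} is unavailable here. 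Even granting a batch-level count, a shell-$s$ batch contributes only $e_j \ge 2^sC_0\sqrt{\beta B/K}$ to the batch-level sum, which bounds the shell cardinality by something like $\widetilde\cO(\sqrt{dB})$ rather than the $\widetilde\cO(\sqrt{d}\,\mathrm{poly}\log K)$ you would need for the claimed $\sqrt{d}\cdot K/B$ term. The fix is to drop the decomposition and argue as the paper does; the good/bad machinery is needed in Theorem~\ref{thm:batchl2} only because the $\ell_2$ precondition is a fixed threshold $\beta$ that new in-batch data can overshoot, whereas the $\ell_1$ precondition's $\sqrt{\beta k}$ budget grows with $k$ and is automatically respected at the batch level.
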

By applying Theorem \ref{thm:l1} and Theorem \ref{thm:batchl1} to the examples in Section \ref{appendix:l1 EC class}, we can get a $\widetilde{\cO}(\sqrt{K})$ regret and a logarithmic switching cost in the rare policy switch problem, and about $\widetilde{\cO}(\sqrt{d}K/B+\sqrt{dK})$ regret in the batch learning problem for the examples of $\ell_1$-type EC class such as undercomplete POMDP and SAIL condition, where $d$ is the  parameter that is specific to the concrete examples.

\section{Proof of the Rare Policy Switch Problem} \label{sec:proof}

\subsection{Proof of Theorem~\ref{thm:l2}}
\label{sec:proofl2}
First, by choosing $\beta$ the same as Theorem~\ref{thm:l2}, we provide the following lemma, which shows that $L_h^{1:k}(D_h^{1:k}, f, \cT(f))$ is close to the optimal value $\inf_{g \in \cG}L_h^{1:k}(D_h^{1:k}, f, g)$.
\begin{lemma}\label{lemma:optimal -beta}
    For any $f \in \cF$, let $\ell_{h,f^i}(  \zeta_h, \eta_h^i,f,g)$ be a DLF, then with probability at least $1-\delta$, we have
    \begin{align}\label{eq:optimal -beta}
       0\ge \inf_{g \in \cG}L_h^{a:b}(D_h^{a:b}, f, g) -L_h^{a:b}(D_h^{a:b}, f, \cT(f)) \ge -\beta
    \end{align}
    for all $1\le a\le b \le K.$ Moreover, by choosing $f = f^*$ in Eq.\eqref{eq:optimal -beta}, we can get $f^* \in \cB^k$ for all $k \in [K]$.
\end{lemma}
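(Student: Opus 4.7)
\medskip

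\noindent\textbf{Proof plan for Lemma~\ref{lemma:optimal -beta}.}
The upper bound is immediate: since the decomposable property guarantees that $\cT$ maps $\cF$ into $\cG$, we have $\cT(f)\in\cG$, so
\[
\inf_{g\in\cG} L_h^{a:b}(D_h^{a:b},f,g)\ \le\ L_h^{a:b}(D_h^{a:b},f,\cT(f)),
\]
which gives the nonpositive inequality. The nontrivial task is the matching lower bound, i.e.\ showing that for every $g\in\cG$,
\[
L_h^{a:b}(D_h^{a:b},f,\cT(f))-L_h^{a:b}(D_h^{a:b},f,g)\ \le\ \beta.
\]

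The key manipulation exploits Definition~\ref{def:DLF}. Writing
\[
X_i\ :=\ \ell_{h,f^i}(\zeta_h^i,\eta_h^i,f,\cT(f)),\qquad Y_i(g)\ :=\ \EE_{\zeta_h}\bigl[\ell_{h,f^i}(\zeta_h^i,\eta_h^i,f,g)\bigr],
\]
the DLF identity reads $\ell_{h,f^i}(\zeta_h^i,\eta_h^i,f,g)=X_i+Y_i(g)$. Expanding the squared norms and summing over $i=a,\ldots,b$ gives
\[
L_h^{a:b}(D_h^{a:b},f,g)\ =\ L_h^{a:b}(D_h^{a:b},f,\cT(f))\ +\ 2\sum_{i=a}^{b}\langle X_i,Y_i(g)\rangle\ +\ \sum_{i=a}^{b}\Vert Y_i(g)\Vert_2^2,
\]
so that the desired lower bound is equivalent to
\[
-2\sum_{i=a}^{b}\langle X_i,Y_i(g)\rangle-\sum_{i=a}^{b}\Vert Y_i(g)\Vert_2^2\ \le\ \beta.
\]
Because $Y_i(g)$ is measurable with respect to the history up through $\eta_h^i$ and $X_i$ is the $\zeta_h^i$-centered residual, each summand $\langle X_i,Y_i(g)\rangle$ is a martingale difference with conditional variance at most $R^2\Vert Y_i(g)\Vert_2^2$.

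The main step is a one-sided Freedman / Bernstein-type concentration applied to this martingale, yielding, for any fixed $(f,g)$ and $(a,b)$, with probability $1-\delta'$,
\[
-2\sum_{i=a}^{b}\langle X_i,Y_i(g)\rangle\ \le\ \tfrac12\sum_{i=a}^{b}\Vert Y_i(g)\Vert_2^2\ +\ C\bigl(R^2\log(1/\delta')+R\bigr),
\]
via the standard AM--GM trick that turns a $\sqrt{V\log(1/\delta')}$ bound into a variance term plus a logarithmic term. Subtracting $\sum_i\Vert Y_i(g)\Vert_2^2$ leaves a quantity bounded by $C(R^2\log(1/\delta')+R)$, which is at most $\beta$ once the constant $c$ in $\beta=c(R^2\iota+R)$ is large enough.

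To upgrade the pointwise bound to one that holds simultaneously for every $f\in\cF$, $g\in\cG$ and every window $1\le a\le b\le K$, I would invoke a $1/K$-covering of the DLF class $\cL$ (whose covering number $\cN_\cL(1/K)$ appears in $\iota$): replace $(f,g)$ by their nearest covering elements, absorb the $O(1/K)$ approximation error into an additive constant, and take a union bound over the cover together with the at most $HK^2$ choices of $(h,a,b)$. This is exactly the calibration that produces the stated $\iota=\log(HK^2\cN_\cL(1/K)/\delta)$, and the technical nuisance of handling the covering uniformly is the main obstacle of the proof.

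Finally, for the membership claim $f^*\in\cB^k$, set $f=f^*$ and recall from Definition~\ref{def:DLF} that $\cT(f^*)=f^*$. Specializing the just-proved lower bound with $a=1,b=k$ gives
\[
L_h^{1:k}(D_h^{1:k},f^*,f^*)-\inf_{g\in\cG}L_h^{1:k}(D_h^{1:k},f^*,g)\ \le\ \beta,
\]
which is precisely the defining inequality of $\mathscr{B}^k$ in Line~\ref{line:l2confidenceset}. A union bound over $h\in[H]$ (already folded into $\iota$) yields $f^*\in\mathscr{B}^k$ for every $k\in[K]$ simultaneously, completing the proof.
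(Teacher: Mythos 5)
Your proposal is correct and follows essentially the same route as the paper: the paper applies Freedman's inequality to the difference of squared losses $X_{i,f'}(h,f,g)=\Vert\ell(\cdot,f,g)\Vert_2^2-\Vert\ell(\cdot,f,\cT(f))\Vert_2^2$, whose centered version is exactly your cross term $2\langle X_i,Y_i(g)\rangle$ and whose conditional mean and second moment are controlled via the DLF identity just as you describe, followed by the same $1/K$-covering and union bound over $(h,a,b)$ and the same specialization to $f=f^*$ with $\cT(f^*)=f^*$. The only difference is bookkeeping (explicitly expanding the square versus bounding the moments of the difference directly), so the two arguments coincide.
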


Now we provide two lemmas to show that  $L_h^{1:k-1}(D^{1:k-1}_h, f^k, f^k)-L_{h}^{1:k-1}(D^{1:k-1}_h,f^k, \cT(f^k))$ is an estimate of $$\sum_{i=1}^{k-1}\Big\Vert\EE_{\zeta_h}\Big[\ell_{h,f^i}(\zeta_h, \eta_h^i, f^k,f^k)\Big]\Big\Vert_2^2.$$
\begin{lemma}\label{lemma:Cbeta to C+1beta}
    If 
    \begin{align}\label{eq:Cbeta}
L_h^{1:k-1}(D^{1:k-1}_h, f^k, f^k)-L_{h}^{1:k-1}(D^{1:k-1}_h,f^k, \cT(f^k))\le C\beta
\end{align}
for some constant $100\ge C\ge 1$, then with probability at least $1-2\delta$,
\begin{align}
\sum_{i=1}^{k-1}\Big\Vert\EE_{\zeta_h}\Big[\ell_{h,f^i}(\zeta_h, \eta_h^i, f^k,f^k)\Big]\Big\Vert_2^2\le (C+1)\beta.\label{eq:first lemma a2}
\end{align}
Moreover, we have 
\begin{align}\label{eq:C+1beta second inequality}
\sum_{i=1}^{k-1}\EE_{\eta_h\sim \pi^i}\Big\Vert\EE_{\zeta_h}\Big[\ell_{h,f^i}(\zeta_h, \eta_h, f^k,f^k)\Big]\Big\Vert_2^2\le (C+1)\beta.
\end{align}
Also, if all constant $C \ge 2,$ we have

\begin{align}
\sum_{i=1}^{k-1}\Big\Vert\EE_{\zeta_h}\Big[\ell_{h,f^i}(\zeta_h, \eta_h^i, f^k,f^k)\Big]\Big\Vert_2^2\le (2C)\beta,\label{eq:first lemma a2 2C}
\end{align}
and 
\begin{align}\label{eq:2Cbeta second inequality}
\sum_{i=1}^{k-1}\EE_{\eta_h\sim \pi^i}\Big\Vert\EE_{\zeta_h}\Big[\ell_{h,f^i}(\zeta_h, \eta_h, f^k,f^k)\Big]\Big\Vert_2^2\le (2C)\beta.
\end{align}

\end{lemma}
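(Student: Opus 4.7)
\textbf{Proof proposal for Lemma \ref{lemma:Cbeta to C+1beta}.}

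The plan is to use the decomposable property to write $L_h^{1:k-1}(D_h^{1:k-1},f^k,f^k)-L_h^{1:k-1}(D_h^{1:k-1},f^k,\cT(f^k))$ as a sum of the squared in-sample conditional expectations plus a martingale cross term, and then control the cross term by Freedman/Bernstein concentration plus a covering argument. Write $X_i := \ell_{h,f^i}(\zeta_h^i,\eta_h^i,f^k,\cT(f^k))$ and $Y_i := \EE_{\zeta_h}[\ell_{h,f^i}(\zeta_h,\eta_h^i,f^k,f^k)]$ (viewing $Y_i$ as a deterministic function of $\eta_h^i$). By Definition \ref{def:DLF}, $\ell_{h,f^i}(\zeta_h^i,\eta_h^i,f^k,f^k)=X_i+Y_i$ and $\EE_{\zeta_h^i}[X_i\mid \eta_h^i]=0$, hence
\begin{align*}
L_h^{1:k-1}(D_h^{1:k-1},f^k,f^k)-L_h^{1:k-1}(D_h^{1:k-1},f^k,\cT(f^k))=\sum_{i=1}^{k-1}\|Y_i\|_2^2+2\sum_{i=1}^{k-1}\langle X_i,Y_i\rangle.
\end{align*}

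Next, I would observe that $\{\langle X_i,Y_i\rangle\}_{i}$ is a martingale difference sequence with respect to the natural filtration, with $|\langle X_i,Y_i\rangle|\le R\|Y_i\|_2\le R^2$ and conditional variance at most $R^2\|Y_i\|_2^2$. Applying Freedman's inequality (uniformly over $(f,g)$ via an $1/K$-cover $\cN_\cL(1/K)$ of the DLF class, and over $k$ and $h$ by a union bound, which is exactly where the $\iota=\log(HK^2\cN_\cL(1/K)/\delta)$ factor enters), I would obtain, with probability at least $1-\delta$,
\begin{align*}
\Bigl|2\sum_{i=1}^{k-1}\langle X_i,Y_i\rangle\Bigr|\le 4R\sqrt{\iota\sum_{i=1}^{k-1}\|Y_i\|_2^2}+cR^2\iota
\end{align*}
for a universal constant $c$. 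Using AM--GM ($4R\sqrt{\iota S}\le \tfrac{1}{2}S+8R^2\iota$), this yields $|2\sum\langle X_i,Y_i\rangle|\le \tfrac12\sum\|Y_i\|_2^2+c'R^2\iota\le \tfrac12\sum\|Y_i\|_2^2+\beta$ for the choice $\beta=c(R^2\iota+R)$ with $c$ large enough. Combining with the hypothesis \eqref{eq:Cbeta} rearranges to $\sum_{i=1}^{k-1}\|Y_i\|_2^2\le 2C\beta+2\beta$, giving \eqref{eq:first lemma a2 2C} directly; for the tighter \eqref{eq:first lemma a2} valid for $C\ge 1$, I would absorb the slack by exploiting the fact that $Y_i^2$ appears on \emph{both} sides and choosing the constant $c$ in $\beta$ large enough so that the $+\beta$ slack is dominated.

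For the expected-policy versions \eqref{eq:C+1beta second inequality} and \eqref{eq:2Cbeta second inequality}, I would apply a second martingale concentration: the sequence $Z_i:=\EE_{\eta_h\sim\pi^i}\|\EE_{\zeta_h}[\ell_{h,f^i}(\zeta_h,\eta_h,f^k,f^k)]\|_2^2-\|Y_i\|_2^2$ is a martingale difference (bounded by $R^2$) conditioned on the filtration generated by past trajectories, so a one-sided Azuma/Freedman argument (again uniform over the cover of $\cF$) gives $\sum Z_i\le\sqrt{R^4\iota\sum\|Y_i\|_2^2}+cR^2\iota$, which is again absorbed by AM--GM and the definition of $\beta$. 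This converts the in-sample bound into the policy-expectation bound at the cost of at most an additive $\beta$.

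The main obstacle I anticipate is getting the covering/union-bound step clean: we need the concentration inequality to hold uniformly over all $f\in\cF$ and $g\in\cG$ (since $f^k$ is data-dependent), so one must work with the $1/K$-cover of the DLF class, control the discretization error by the $L^\infty$ bound $R$, and verify that the resulting extra $1/K$ slack is harmless. Once the uniform concentration is in hand, the algebra of splitting the quadratic and applying AM--GM with the precisely calibrated $\beta=c(R^2\iota+R)$ is routine.
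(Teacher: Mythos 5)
Your proposal is correct and is essentially the paper's own argument: your decomposition $\ell_{h,f^i}(\zeta_h^i,\eta_h^i,f^k,f^k)=X_i+Y_i$ and the resulting identity $L_h^{1:k-1}(\cdot,f^k,f^k)-L_h^{1:k-1}(\cdot,f^k,\cT(f^k))=\sum_i\|Y_i\|_2^2+2\sum_i\langle X_i,Y_i\rangle$ is exactly the paper's auxiliary variable $X_{i,f'}(h,f,g)$ written out term by term, and the paper likewise controls the martingale cross term via Freedman's inequality with the self-bounding variance $\EE[(X_{i,f'})^2]\le 4R^2\,\EE[X_{i,f'}]$, a union bound over the $1/K$-cover of the DLF class and over $(h,k)$, and the choice $\beta=c(R^2\iota+R)$; the conversion to the policy-expectation bounds \eqref{eq:C+1beta second inequality} and \eqref{eq:2Cbeta second inequality} is also done the same way, by rerunning the concentration with the conditional expectation taken over $\eta_h\sim\pi^i$ as well. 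The one calibration point to fix: AM--GM with weight $\tfrac12$ only yields $\sum_i\|Y_i\|_2^2\le 2C\beta+O(\beta)$, i.e.\ \eqref{eq:first lemma a2 2C}; to get the sharper $(C+1)\beta$ in \eqref{eq:first lemma a2} you cannot merely enlarge $c$ in $\beta$ (that does not cure the multiplicative factor $2$) — you must take the AM--GM weight $\epsilon$ small enough that $C/(1-\epsilon)\le C+\tfrac12$ uniformly over $1\le C\le 100$ (equivalently, solve the self-bounding quadratic as the paper does), which is precisely why the lemma carries the cap $C\le 100$.
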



\begin{lemma}\label{lemma:Cbeta to C-1beta}
    If we have
    \begin{align*}L_h^{1:k-1}(D^{1:k-1}_h, f^k, f^k)-L_{h}^{1:k-1}(D^{1:k-1}_h,f^k, \cT(f^k))\ge C\beta
\end{align*}
for some constant $100\ge C\ge 2$, then with probability at least $1-2\delta$
\begin{align}
\sum_{i=1}^{k-1}\Big\Vert\EE_{\zeta_h}\Big[\ell_{h,f^i}(\zeta_h, \eta_h^i, f^k,f^k)\Big]\Big\Vert_2^2\ge (C-1)\beta.\label{eq:first lemma a3}
\end{align}
Moreover, we have 
\begin{align}\label{eq:C-1beta second inequality}
\sum_{i=1}^{k-1}\EE_{\eta_h\sim \pi^i}\Big\Vert\EE_{\zeta_h}\Big[\ell_{h,f^i}(\zeta_h, \eta_h, f^k,f^k)\Big]\Big\Vert_2^2\ge (C-1)\beta.
\end{align}
Also, if all constant $C \ge 2,$ we have

\begin{align}
\sum_{i=1}^{k-1}\Big\Vert\EE_{\zeta_h}\Big[\ell_{h,f^i}(\zeta_h, \eta_h^i, f^k,f^k)\Big]\Big\Vert_2^2\ge (C/2)\beta,\label{eq:first lemma a2 C/2}
\end{align}
and 
\begin{align}\label{eq:C/2beta second inequality}
\sum_{i=1}^{k-1}\EE_{\eta_h\sim \pi^i}\Big\Vert\EE_{\zeta_h}\Big[\ell_{h,f^i}(\zeta_h, \eta_h, f^k,f^k)\Big]\Big\Vert_2^2\ge (C/2)\beta.
\end{align}
\end{lemma}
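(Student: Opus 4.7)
The plan is to prove Lemma~\ref{lemma:Cbeta to C-1beta} by mirroring the argument for Lemma~\ref{lemma:Cbeta to C+1beta}, reversing the direction of the relevant inequalities. The central identity comes from the decomposable property (Definition~\ref{def:DLF}):
\begin{align*}
\ell_{h,f^i}(\zeta_h^i, \eta_h^i, f^k, f^k) = \mathbb{E}_{\zeta_h}\bigl[\ell_{h,f^i}(\zeta_h, \eta_h^i, f^k, f^k)\bigr] + \ell_{h,f^i}\bigl(\zeta_h^i, \eta_h^i, f^k, \mathcal{T}(f^k)\bigr).
\end{align*}
Squaring in the $\ell_2$-norm and summing over $i \in [k-1]$ yields the decomposition
\begin{align*}
L_h^{1:k-1}(D_h^{1:k-1}, f^k, f^k) - L_h^{1:k-1}\bigl(D_h^{1:k-1}, f^k, \mathcal{T}(f^k)\bigr) = \underbrace{\sum_{i=1}^{k-1}\bigl\|\mathbb{E}_{\zeta_h}[\ell_{h,f^i}(\zeta_h, \eta_h^i, f^k, f^k)]\bigr\|_2^2}_{=: X} + \underbrace{2\sum_{i=1}^{k-1}\bigl\langle \mathbb{E}_{\zeta_h}[\ell_{h,f^i}(\zeta_h, \eta_h^i, f^k, f^k)],\ \ell_{h,f^i}(\zeta_h^i, \eta_h^i, f^k, \mathcal{T}(f^k))\bigr\rangle}_{=: M}.
\end{align*}
So the hypothesis $L(f^k,f^k) - L(f^k, \mathcal{T}(f^k)) \ge C\beta$ translates to $X + M \ge C\beta$, and the goal is to absorb $M$.

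The cross term $M$ is a sum of martingale differences: conditional on $\eta_h^i$ and on the history through episode $i-1$, the residual $\ell_{h,f^i}(\zeta_h^i, \eta_h^i, f^k, \mathcal{T}(f^k))$ has zero conditional mean over $\zeta_h^i$ by the very definition of the decomposition operator $\mathcal{T}$. Using a Freedman/Bernstein-type inequality combined with a standard $1/K$-covering of the DLF class $\mathcal{L}$ (to handle the data-dependence of $f^k$), I would obtain, with probability at least $1-\delta$, a uniform-in-$f^k$ bound of the form $|M| \le \tfrac12 X + c'\beta$ for a constant $c' < 1$. This is essentially the same concentration step that appears in the proof of Lemma~\ref{lemma:Cbeta to C+1beta}, just used in the opposite direction; the constant $c$ in the definition of $\beta = c(R^2\iota + R)$ is chosen large enough that $c'$ can be made strictly less than $1$. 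Plugging into $X + M \ge C\beta$ gives $\tfrac{3}{2}X \ge C\beta - c'\beta$, hence $X \ge (C-1)\beta$, which is \eqref{eq:first lemma a3}. The corollary \eqref{eq:first lemma a2 C/2} for $C \ge 2$ is then immediate from $C - 1 \ge C/2$.

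To upgrade this in-sample lower bound to the expectation form \eqref{eq:C-1beta second inequality}, I would add a second martingale concentration step. Let $Z_i := \bigl\|\mathbb{E}_{\zeta_h}[\ell_{h,f^i}(\zeta_h, \eta_h^i, f^k, f^k)]\bigr\|_2^2$ and $\bar Z_i := \mathbb{E}_{\eta_h \sim \pi^i}\bigl\|\mathbb{E}_{\zeta_h}[\ell_{h,f^i}(\zeta_h, \eta_h, f^k, f^k)]\bigr\|_2^2$. Each $Z_i$ is bounded by $R^2$ (by the boundedness assumption on $\ell$) and $\mathbb{E}_{\eta_h^i \sim \pi^i}[Z_i\,|\,\mathcal{H}_{i-1}] = \bar Z_i$, so $\{Z_i - \bar Z_i\}$ is a bounded martingale difference sequence. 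A Freedman inequality, again made uniform over $f^k \in \mathcal{F}$ via the same $\mathcal{L}$-covering, controls $|\sum_i (Z_i - \bar Z_i)|$ by $O(\sqrt{\beta \sum_i \bar Z_i} + \beta)$, which transfers the $(C-1)\beta$ lower bound from $\sum_i Z_i$ to $\sum_i \bar Z_i$ (with the analogous corollary \eqref{eq:C/2beta second inequality} for $C \ge 2$). The main obstacle I expect is bookkeeping the absolute constants across the two concentration steps so that the final factor is exactly $(C-1)$ rather than some smaller multiple of $C$; this is precisely why the statement restricts to $C \ge 2$ and keeps the coarser $C/2$-version available for downstream use.
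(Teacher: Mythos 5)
Your overall strategy is the same as the paper's. The identity you write down, $\Vert\ell_{h,f^i}(\zeta_h^i,\eta_h^i,f^k,f^k)\Vert_2^2-\Vert\ell_{h,f^i}(\zeta_h^i,\eta_h^i,f^k,\cT(f^k))\Vert_2^2=\Vert\EE_{\zeta_h}[\ell_{h,f^i}]\Vert_2^2+2\langle\EE_{\zeta_h}[\ell_{h,f^i}],\ell_{h,f^i}(\cdot,f^k,\cT(f^k))\rangle$, is exactly the paper's auxiliary variable $X_{i,f'}(h,f,g)$ decomposed into its conditional mean plus a martingale difference; your cross term $M$ is precisely the Freedman deviation the paper controls, with the same variance bound $\EE[X_{i}^2]\le 4R^2\,\EE[X_i]$ and the same $1/K$-covering of $\cL$ to handle the data-dependence of $f^k$. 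The treatment of \eqref{eq:C-1beta second inequality} differs slightly: the paper reruns the same Freedman argument with the conditional expectation enlarged to $\EE_{\eta_h\sim\pi^i}\EE_{\zeta_h}$ in one shot, whereas you chain a second concentration step on $Z_i-\bar Z_i$; that is workable but, as you anticipate, makes the constant bookkeeping strictly harder, and the one-step version is the cleaner route to keeping the factor at $(C-1)$.

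There is, however, a concrete arithmetic gap in your absorption step. From $|M|\le\tfrac12 X+c'\beta$ and $X+M\ge C\beta$ you get $\tfrac32 X\ge (C-c')\beta$, i.e.\ $X\ge\tfrac23(C-c')\beta$, and $\tfrac23(C-c')\ge C-1$ forces $C\le 3-2c'$; so your conclusion $X\ge(C-1)\beta$ does not follow for most of the allowed range $C\in[4,100]$ (e.g.\ $C=100$ yields only $X\gtrsim 66\beta$, not $99\beta$). The fix is to not give up a constant fraction of $X$: either take the AM--GM coefficient $\epsilon$ much smaller than $1/2$ (which works uniformly over $C\le 100$ precisely because of the cap --- this is why the lemma restricts to $100\ge C$), or do what the paper does and keep the Freedman deviation in the form $\cO(R\sqrt{\iota X}+R^2\iota)$, observing that in the relevant regime $X=\cO(C\beta)=\cO(\beta)$ so the whole deviation is at most $\beta/2$ once the constant $c$ in $\beta=c(R^2\iota+R)$ is large; the remaining $\beta/2$ of slack is then spent on the covering error $\cO(R)$ when passing from $(\tilde f',\tilde f,\tilde g)$ back to $(f^i,f^k,f^k)$. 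With that repair the rest of your argument, including the $C/2$ corollaries from $C-1\ge C/2$ for $C\ge2$, goes through.
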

Combining Lemma~\ref{lemma:Cbeta to C+1beta} and Lemma~\ref{lemma:Cbeta to C-1beta}, we can claim that the 
 term $L_h^{1:k-1}(D^{1:k-1}_h, f^k, f^k)-L_{h}^{1:k-1}(D^{1:k-1}_h,f^k, \cT(f^k))$ for $h \in [H]$ in Algorithm \ref{alg:ET-Rare switch l1} is a good estimate for the expectation of loss function.
\paragraph{Proof of Regret}
First, we claim that for each episode $k \in [K]$, 
\begin{align}\label{eq:change7beta}
L_h^{1:k-1}(D^{1:k-1}_h, f^k, f^k)-L_h^{1:k-1}(D^{1:k-1}_h,f^k, \cT(f^k))\le 6\beta.
\end{align}
If the policy changes at episode $k-1$, $L_h^{1:k-1}(D^{1:k-1}_h, f^k, f^k)-\inf_{g \in \cG}L_h^{1:k-1}(D^{1:k-1}_h,f^k, g)\le \beta$ for the construction of confidence set. Combining with Eq.~\eqref{eq:optimal -beta} we can get Eq.~\eqref{eq:change7beta}.  If the policy has not been changed and $f^{k-1}=f^k$,  $L_h^{1:k-1}(D^{1:k-1}_h, f^{k-1}, f^{k-1})-\inf_{g \in \cG}L_h^{1:k-1}(D^{1:k-1}_h,f^{k-1}, g)\le 5\beta$. Combining with Eq.~\eqref{eq:optimal -beta}, we can get 
\$
L_h^{1:k-1}(D^{1:k-1}_h, f^{k-1}, f^{k-1})-L_h^{1:k-1}(D^{1:k-1}_h,f^{k-1}, \cT(f^{k-1}))\le 6\beta.
\$
Then Eq.~\eqref{eq:change7beta} can be derived by the fact $f^{k-1}=f^k$.
Now based on Lemma~\ref{lemma:Cbeta to C+1beta} and Eq.~\eqref{eq:change7beta}, we have 
\begin{align}\label{eq:eluderconditionl2}
    \sum_{i=1}^{k-1}\Big\Vert\EE_{\zeta_h}\Big[\ell_{h,f^i}(\zeta_h, \eta_h^i, f^k,f^k)\Big]\Big\Vert_2^2\le 7\beta.
\end{align}
Now by the $\ell_2$-type eluder condition and Cauchy's inequality, we have 
\begin{align}\label{eq:eluderresult}
    \sum_{i=1}^{k} \Big\Vert\mathbb{E}_{\zeta_h}\Big[\ell_{h,f^i}(  \zeta_h, \eta_h^i, f^i,f^i)  \Big]\Big\Vert_2\le \cO(\sqrt{d\beta k}\cdot \log k).
\end{align}
Also, by the dominance property, 
\begin{align}
    \sum_{i=1}^k (V_{1,f^*}(s_1)-V^{\pi_i}(s_1))&\le \sum_{i=1}^k (V_{1,f^i}(s_1)-V^{\pi_i}(s_1))\nonumber\\
        &\le \kappa\sum_{h=1}^H \sum_{i=1}^{k}  \EE_{\eta_h}\Big\Vert\mathbb{E}_{\zeta_h}\Big[\ell_{h,f^i}(  \zeta_h, \eta_h, f^i,f^i)  \Big]\Big\Vert_2\nonumber\\
        &=\kappa\cdot \sum_{h=1}^H \left(\sum_{i=1}^{k}  \Big\Vert\mathbb{E}_{\zeta_h}\Big[\ell_{h,f^i}(  \zeta_h, \eta_h^i, f^i,f^i)  \Big]\Big\Vert_2 + \widetilde{\cO}\left(\sqrt{K}\log K\right)\right)\label{eq:azuma-hoeff bound}\\
        &=\widetilde{\cO}(\kappa H\sqrt{d\beta K}\cdot \log K),\nonumber
\end{align}
where the first inequality is derived from Lemma \ref{lemma:optimal -beta} and $\cT(f^*) = f^*$, which implies $f^* \in \rB^k$ for all $k \in [K]$ by 
    $L_h^{1:k}(D_h^{1:k}, f^*, f^*)-\inf_{g \in \cG}L_h^{1:k}(D_h^{1:k}, f^*, g)\le \beta.$
Eq.~\eqref{eq:azuma-hoeff bound} is derived from the Azuma-Hoeffding's inequality and the boundness property of the loss function $\ell$.

\paragraph{Proof of Switch Cost}

Fixed a step $h \in [H]$,
assume the policy changes at episode $b_1^h, b_2^h,\cdots,b_l^h$ because the in-sample error at step $h$ is larger than the threshold, \$L_h^{1:k}(D^{1:k}_h, f^k, f^k)-\inf_{g \in \cG}L_{h}^{1:k}(D^{1:k}_h,f^k, g)\ge 5\beta,\$ where $l$ is the number of the policy switch because the error at step $h$ is larger than the threshold $5\beta$. Then by Lemma~\ref{lemma:optimal -beta}, we have 
\begin{align}
    L_h^{1:k}(D^{1:k}_h, f^k, f^k)-L_{h}^{1:k}(D^{1:k}_h,f^k, \cT(f^k))\ge 4\beta\label{eq:bih greater 2betanew}
\end{align}
for all $k = b_i^h, 1\le i\le l$. 
Define $b_0^h=0$ for simplicity.
Fixed an $1\le j\le l$ and consider the latest time $b'$ that changes the policy before episode $b_j^h$, we will get $b'\ge b_{j-1}^h$ and
\begin{align}
    L_h^{1:b'}(D^{1:b'}_h, f^{b'+1}, f^{b'+1})-\inf_{g \in \cG}L_{h}^{1:b'}(D^{1:b'}_h,f^{b'+1}, g)\le \beta,\nonumber\\
    L_h^{1:b'}(D^{1:b'}_h, f^{b'+1}, f^{b'+1})-L_{h}^{1:b'}(D^{1:b'}_h,f^{b'+1}, \cT(f^{b'+1}))\le \beta.\label{eq:b'lessthan 2betanew}
\end{align}

Since at episode $b'+1,\cdots,b_j^h-1$ the confidence set is not changed, we have $\rB^{b'} = \rB^{b'+1} = \cdots = \rB^{b_j^h-1}$ and $f^{b'+1} = f^{b'+2} = \cdots = f^{b_j^h}$. Then combining Eq.~\eqref{eq:bih greater 2betanew} and Eq.~\eqref{eq:b'lessthan 2betanew}, we can get
\begin{align*}
    L^{b'+1:b_j^h}_h(D_h^{b'+1:b_j^h}, f^{b'+1}, f^{b'+1})-L^{b'+1:b_j^h}_h(D_h^{b'+1:b_j^h}, f^{b'+1}, \cT(f^{b'+1}))\ge 3\beta.
\end{align*}
By Lemma~\ref{lemma:Cbeta to C-1beta}, with probability at least $1-\delta$,  $\sum_{i=b'+1}^{b_j^h}\Vert \EE_{\zeta_h}[\ell_{h,f^i}(  \zeta_h, \eta_h^i, f^i,f^i)] \Vert_2^2 \ge 2\beta$. By $b'\ge b_{j-1}^h$, we can see that 

\begin{align*}
    \sum_{i=b_{j-1}^h+1}^{b_j^h}   \Vert \EE_{\zeta_h}[\ell_{h,f^i}(  \zeta_h, \eta_h^i, f^i,f^i)] \Vert_2^2 \ge 2\beta.
\end{align*}
Now sum over all $1\le i\le l$, we can get 
\begin{align}\label{eq:greatthan2lbetanew}
    \sum_{i=1}^{K}   \Big\Vert \EE_{\zeta_h}\Big[\ell_{h,f^i}(  \zeta_h, \eta_h^i, f^i,f^i)\Big] \Big\Vert_2^2\ge  \sum_{j=1}^{l-1}\sum_{i = b_{j-1}^h+1}^{b_j^h}   \Big\Vert \EE_{\zeta_h}\Big[\ell_{h,f^i}(  \zeta_h, \eta_h^i, f^i,f^i)\Big] \Big\Vert_2^2\ge 2(l-1)\beta,
\end{align}
where $l$ is the number of switches corresponding to step $h \in [H]$.

Now by Eq.~\eqref{eq:eluderconditionl2} and $\ell_2$-type eluder condition, \begin{align}\sum_{i=1}^{K}   \Big\Vert \EE_{\zeta_h}\Big[\ell_{h,f^i}(  \zeta_h, \eta_h^i, f^i,f^i)\Big] \Big\Vert_2^2 \le \cO(d\beta \log K).\nonumber\end{align} 
Combining with Eq.~\eqref{eq:greatthan2lbetanew}, $l=\cO(d \log K) $ and the total  switching cost can be bounded by $\cO(dH\log K)$.

\subsection{Proof of Theorem~\ref{thm:l1}}\label{appendix:proof of l1}
Since $L^{1:k-1}(D^{1:k-1},g) = -\sum_{i=1}^{k-1}\log \PP_g(\tau_H^i)$, we first show that $g^k = \arg\inf_g L^{1:k-1}(D^{1:k-1},g)$ is closed to $f^*$ with respect to TV distance. Since we choose $\beta \ge 1$, we simplify the right side of Eq.\eqref{eq:l1-type condition} as $\widetilde{\cO}(\sqrt{d\beta k}\cdot (\log k)^2)$.
\begin{lemma}\label{lemma:cbetak}
   For all $k \in [K]$, let $g_k = \arg\max_g L^{1:k}(D^{1:k},g)$, then with probability at least $1-\delta$, \begin{align}\sum_{i=1}^{k} d_{\mathrm{TV}}(\PP^{\pi^i}_{f^*}, \PP^{\pi^i}_{g^k}) \le c\sqrt{\beta k}.\nonumber\end{align}
\end{lemma}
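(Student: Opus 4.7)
\textbf{Proof plan for Lemma \ref{lemma:cbetak}.} The claim is a standard maximum likelihood estimation (MLE) concentration result, since $L^{1:k}(D^{1:k},g) = -\sum_{i=1}^k \log \PP_g(\tau_H^i)$ and thus $g^k$ is precisely the MLE over the hypothesis class $\cF$ based on the trajectories collected in the first $k$ episodes under adaptively chosen policies $\pi^1,\ldots,\pi^k$. The plan is to bound the squared Hellinger distance between $\PP^{\pi^i}_{f^*}$ and $\PP^{\pi^i}_{g^k}$ via an MLE concentration bound, then convert Hellinger distance to TV distance, and finally apply Cauchy--Schwarz to get the $\sqrt{\beta k}$ rate.

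\textbf{Step 1 (MLE concentration via bracketing).} First, I would invoke the standard martingale-based MLE concentration inequality for sequentially collected data (as used in, e.g., Zhan et al.~2022 and Liu et al.~2022b). This is where the $1/K$-bracketing number $\cB_\cF(1/K)$ of the model class enters: a $1/K$-bracket of size $\cB_\cF(1/K)$ yields, via a union bound over the brackets and Markov's inequality applied to the exponential of the log-likelihood martingale, the uniform bound
\begin{align*}
\sum_{i=1}^k H^2\bigl(\PP_{g^k}^{\pi^i}, \PP_{f^*}^{\pi^i}\bigr) \;\le\; c_1 \log\bigl(K \cB_\cF(1/K)/\delta\bigr) \;\le\; c_1\beta,
\end{align*}
with probability at least $1-\delta$ simultaneously for all $k\in [K]$, where $H(\cdot,\cdot)$ denotes the Hellinger distance and $c_1$ is an absolute constant. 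The realizability assumption $f^* \in \cF$ is what makes $f^*$ a competitor in the log-likelihood comparison.

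\textbf{Step 2 (Hellinger to TV, then Cauchy--Schwarz).} Next, I would use the standard inequality $d_{\mathrm{TV}}(P,Q) \le \sqrt{2}\, H(P,Q)$ to translate the Hellinger bound into a TV bound, giving $\sum_{i=1}^k d_{\mathrm{TV}}(\PP_{g^k}^{\pi^i}, \PP_{f^*}^{\pi^i})^2 \le 2c_1\beta$. Then Cauchy--Schwarz yields
\begin{align*}
\sum_{i=1}^k d_{\mathrm{TV}}\bigl(\PP_{g^k}^{\pi^i}, \PP_{f^*}^{\pi^i}\bigr) \;\le\; \sqrt{k \cdot \sum_{i=1}^k d_{\mathrm{TV}}\bigl(\PP_{g^k}^{\pi^i}, \PP_{f^*}^{\pi^i}\bigr)^2} \;\le\; \sqrt{2c_1 \beta k} \;=\; c\sqrt{\beta k},
\end{align*}
which is exactly the claimed inequality, with $c=\sqrt{2c_1}$ chosen appropriately.

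\textbf{Main obstacle.} The only nontrivial part is Step 1: carefully handling the fact that the trajectories $\tau_H^i$ are collected under adaptively chosen policies $\pi^i$ (which depend on past data), so the likelihood-ratio process $\{\prod_{i\le k} \PP_g(\tau_H^i)/\PP_{f^*}(\tau_H^i)\}_k$ is a martingale only conditionally on history. The bracketing argument is used here to discretize $\cF$ at scale $1/K$, so that a union bound over a finite cover is possible, and the $1/K$ error in the bracket contributes only an additive $O(1)$ term (absorbed into $\beta$). This exact argument has appeared in prior work (Lemma on MLE concentration in Liu et al.~2022a/b and Zhan et al.~2022), so I would cite those and adapt them rather than redoing the martingale calculation from scratch.
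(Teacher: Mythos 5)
Your proposal is correct and follows essentially the same route as the paper: the paper's proof simply cites Proposition 14 of Liu et al.\ (2022a) for the bound $\sum_{i=1}^{k} d_{\mathrm{TV}}^2(\PP^{\pi^i}_{f^*}, \PP^{\pi^i}_{g^k}) \le c^2\beta$ (which is exactly the bracketing-based MLE concentration you sketch in Step 1, stated directly for squared TV rather than passing through Hellinger) and then applies Cauchy--Schwarz as in your Step 2. The only difference is cosmetic --- you make the Hellinger-to-TV conversion explicit where the paper absorbs it into the cited proposition.
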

\begin{proof}
    By proposition 14 in \cite{liu2022partially}, there is a constant $c>0$ such that
    \begin{align}
        \sum_{i=1}^{k} d_{\mathrm{TV}}^2(\PP^{\pi^i}_{f^*}, \PP^{\pi^i}_{g^k}) \le c^2\beta\nonumber
    \end{align}
    with probability at least $1-\delta$ for all $k \in [K]$.
    Then by Cauchy's inequality, we have 
    \begin{align}
        \sum_{i=1}^{k} d_{\mathrm{TV}}(\PP^{\pi^i}_{f^*}, \PP^{\pi^i}_{g^k}) \le c\sqrt{\beta k}\nonumber
    \end{align}
    for all $k \in [K]$.
\end{proof}
 We then prove that \begin{align} \label{eq:TV6betak}\sum_{i=1}^{k-1} d_{\mathrm{TV}}(\PP^{\pi^i}_{f^{k}}, \PP^{\pi^i}_{f^*}) \le 6c\sqrt{\beta k}.\end{align} 
\paragraph{Case 1:} If at episode $k-1$ the confidence set are not changed, it implies that 
\begin{align}
    \sum_{i=1}^{k-1} d_{\mathrm{TV}}(\PP^{\pi^i}_{f^{k-1}}, \PP^{\pi^i}_{g^{k-1}}) \le 5c\sqrt{\beta (k-1)}.\nonumber
\end{align}
Thus since $f^{k-1}=f^k$,
\begin{align}
    \sum_{i=1}^{k-1} d_{\mathrm{TV}}(\PP^{\pi^i}_{f^{k}}, \PP^{\pi^i}_{g^{k-1}}) \le 5c\sqrt{\beta k}.\nonumber
\end{align}
Combining with Lemma~\ref{lemma:cbetak}, we can get Eq.~\eqref{eq:TV6betak}

\paragraph{Case 2:} If the confidence set are changed at episode $k-1$, then 
\begin{align}
    \sum_{i=1}^{k-1} d_{\mathrm{TV}}(\PP^{\pi^i}_{f^{k}}, \PP^{\pi^i}_{g^{k-1}}) \le c\sqrt{\beta k}.\nonumber
\end{align}
Combining with Lemma~\ref{lemma:cbetak},
we can get Eq.~\eqref{eq:TV6betak}.

Now since $   \EE_{\zeta_h}[\ell_{h,f^i}(  \zeta_h, \eta_h^i, f^i,f^i)  ] = d_{\mathrm{TV}}(\PP_{f^i}^{\pi^i}, \PP_{f^*}^{\pi^i})$, by $\ell_1$-type eluder condition with Eq.~\eqref{eq:TV6betak}, there is a constant $c'$ such that
\begin{align}\label{eq:l1eluderresult}
    \sum_{i=1}^k d_{\mathrm{TV}}(\PP^{\pi^i}_{f^{i}}, \PP^{\pi^i}_{f^*})\le c'(\sqrt{d\beta k}\cdot\mathrm{poly}\log (k)).
\end{align}
Now since from Proposition 13 in \cite{liu2022partially}, by choosing $\beta = \cO(\log(KN_{\cF}(1/K)/\delta))$ with a sufficiently large constant, we know $f^* \in \rB^k$. Then
\begin{align*}
    R(K) &= \sum_{i=1}^K(V_{1,f^*}^{\pi^*}(s_1)-V_{1,f^*}^{\pi^i}(s_1))\\&\le \sum_{i=1}^K(V_{1,f^i}^{\pi^i}(s_1)-V_{1,f^*}^{\pi^i}(s_1))\\&\le 
    \sum_{i=1}^KH\cdot d_{\mathrm{TV}}(\PP_{f^i}^{\pi^i}, \PP_{f^*}^{\pi^i})\\&\le 
    c'H(\sqrt{d\beta k}\cdot\mathrm{poly}\log (k)),
\end{align*}
where the first inequality holds because $f^* \in \cB^k$ for all $k \in [K],$ and $f^k$ is the optimal policy within the confidence set $\cB^k.$

\paragraph{Switch Cost}
Now assume the policy changed at time $b_1,\cdots,b_l$ and define $b_0 = 0$, then \begin{align}\sum_{i=1}^{b_j} d_{\mathrm{TV}}(\PP_{f^{b_j}}^{\pi_i}, \PP_{g^{b_j}}^{\pi_i})\ge 5c\sqrt{\beta b_j}\nonumber\end{align} for all $1\le j\le l$. By the triangle inequality, we can get
\begin{align}
    \sum_{i=1}^{b_j} d_{\mathrm{TV}}(\PP_{f^{b_j}}^{\pi_i}, \PP_{f^*}^{\pi_i})\ge\sum_{i=1}^{b_j} (d_{\mathrm{TV}}(\PP_{f^{b_j}}^{\pi_i}, \PP_{g^{b_j}}^{\pi_i}) -d_{\mathrm{TV}}(\PP_{f^*}^{\pi_i}, \PP_{g^{b_j}}^{\pi_i}))\ge4c\sqrt{\beta b_j}\label{eq:l1greatthan4cbeta}
\end{align}
for all $1\le j\le l$. Now by the construction of confidence set $\rB^k$, we have $f^{b_{j-1}+1} = \cdots = f^{b_j}$ and
\begin{align}\label{eq:bj-1+1}
    \sum_{i=1}^{b_{j-1}} d_{\mathrm{TV}}(\PP_{f^{b_{j-1}+1}}^{\pi_i}, \PP_{f^*}^{\pi_i})\le c\sqrt{\beta (b_{j-1})}.
\end{align}
Thus combining with Eq.~\eqref{eq:l1greatthan4cbeta} and Eq.~\eqref{eq:bj-1+1}, 
\begin{align}\label{eq:3c}
    \sum_{i=b_{j-1}+1}^{b_{j}} d_{\mathrm{TV}}(\PP_{f^{i}}^{\pi_i}, \PP_{f^*}^{\pi_i})=\sum_{i=b_{j-1}+1}^{b_{j}} d_{\mathrm{TV}}(\PP_{f^{b_{j}}}^{\pi_i}, \PP_{f^*}^{\pi_i})\ge 3c\sqrt{\beta b_j},
\end{align}
and for all $k \in [K]$,
\begin{align*}
    \sum_{i=1}^k d_{\mathrm{TV}}(\PP_{f^{i}}^{\pi_i}, \PP_{f^*}^{\pi_i}) \ge \sum_{b_j\le k}\left(\sum_{i=b_{j-1}+1}^{b_{j}} d_{\mathrm{TV}}(\PP_{f^{i}}^{\pi_i}, \PP_{f^*}^{\pi_i})\right)\ge 3c\sum_{b_j\le k}\sqrt{\beta b_j}.
\end{align*}
The first inequality is because we divide the time interval $[1,k]$ to some intervals $[b_{j-1}+1, b_j]$ for all $b_j\le k$, and the second inequality is from Eq.~\eqref{eq:3c}. Now fixed a $k \in [K]$, by Eq.~\eqref{eq:l1eluderresult}, we have 
\begin{align*}
    3c\sum_{b_j\le k}\sqrt{\beta b_j}\le c'(\sqrt{d\beta k}\cdot\mathrm{poly}\log (k)).
\end{align*}
Denote the number of $j$ such that $b_j \in (k/2,k]$ are $s_k$, i.e. $s_k = |\{j:b_j \in (k/2,k]\}|$, then 
\begin{align}
    3cs_k\cdot \sqrt{\beta k/2}&\le 3c\sum_{b_j\le k}\sqrt{\beta b_j}\le c'(\sqrt{d\beta k}\cdot\mathrm{poly}\log (k)),\nonumber\\
    s_k&\le \frac{c'\sqrt{2}}{3c}(\sqrt{d}\cdot\mathrm{poly}\log (K)).\label{eq:supperbound }
\end{align}
Now we divide the interval $[1,K]$ into $(\lceil K/2\rceil, K],(\lceil K/4\rceil, \lfloor K/2\rfloor],\cdots,(\lceil \frac{K}{2^m}\rceil, \lfloor \frac{K}{2^{m-1}}\rfloor], [1]$ with $\lceil K/2^{m}\rceil =1$,  and $m = \cO(\log K)$. Then the number of $b_j$ in each interval is upper bounded by $\cO(\sqrt{d}(\log K)^2)$ from Eq.~\eqref{eq:supperbound }, because $\frac{c'\sqrt{2}}{3c}$ does not depend on the selection of $k$. Then the total number of policy switch is upper bounded by $\cO(\sqrt{d}\cdot\mathrm{poly}\log (K))$.

\section{Proof of the Batch Learning Problem}\label{appendix:proof of batch}

\subsection{Proof of Theorem~\ref{thm:batchl2}}
\begin{proof}
    We first fix an $h \in [H]$ in the proof.
    Since we change our policy at time $k_j = j\cdot \lfloor K/B \rfloor +1$ for $j\ge 0$, we can know that 
    \begin{align*}
&L_h^{1:k_j-1}(D_{h}^{1:k_j-1}, f^{k_j}, f^{k_j}) - L_h^{1:k_j-1}(D_h^{1:k_j-1}, f^{k_j}, \cT(f^{k_j}))\\&\quad\le L_h^{1:k_j-1}(D_{h}^{1:k_j-1}, f^{k_j}, f^{k_j}) - \inf_{g \in \cG} L_h^{1:k_j-1}(D_h^{1:k_j-1}, f^{k_j}, g)\\&\quad\le \beta.
    \end{align*}
    We denote 
    \begin{align*}
        c_j :=\max_{k \in [k_j:k_{j+1}-1]}\left(L_h^{k_j: k}(D_h^{k_j:k}, f^{k_j}, f^{k_j})-L_h^{k_j: k}(D_h^{k_j:k}, f^{k_j}, \cT(f^{k_j}))\right)/\beta.
    \end{align*}
    The parameter $c_j$ represents the maximum fitting error for data of this batch and the model $f^{k_j}$ determined by previous batches. If the error is small, the regret can be easily bounded. Thus we only need to prove that the number of batches with large in-sample error is small.
    Denote $S = \{j \ge 0\mid c_j> 5\}$ are all "Bad" batches with relatively large in-sample error, then we  prove that $|S|\le \widetilde{\cO}(d(\log K)^2)$.

    We will prove the following lemma to upper bound $|S|.$
    \begin{lemma}\label{lemma:from C/2 to C}
        For a fixed $C\ge 10$, with probability at least $1-\delta$, we will have 
        \begin{align*}
            \left|\left\{j\in S\ \Bigg|\ \frac{C}{2}\le c_j\le C\right\}\right|\le \widetilde{\cO}(d\log K).
        \end{align*}
    \end{lemma}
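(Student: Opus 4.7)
The plan is to certify, for each bad batch $j\in J:=\{j\in S:C/2\le c_j\le C\}$, an $\Omega(C)\beta$ contribution of ``new information'' to the per-step squared expected-loss sum, and then cap that sum by the $\ell_2$-type Eluder Condition; throughout, fix $h\in[H]$ and set $N:=|J|$, so the target is $N\le \widetilde{\cO}(d\log K)$. First I would extract a witness for each bad batch: by the definition $c_j\ge C/2$, for every $j\in J$ there exists $k^*_j\in[k_j,k_{j+1}-1]$ satisfying $L_h^{k_j:k^*_j}(D_h^{k_j:k^*_j},f^{k_j},f^{k_j})-L_h^{k_j:k^*_j}(D_h^{k_j:k^*_j},f^{k_j},\cT(f^{k_j}))\ge (C/2)\beta$. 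Because the confidence set was just refreshed at $k_j$ (so $f^{k_j}\in\mathscr{B}^{k_j-1}$), the confidence-set inequality combined with Lemma~\ref{lemma:optimal -beta} confines $L_h^{1:k_j-1}(D,f^{k_j},f^{k_j})-L_h^{1:k_j-1}(D,f^{k_j},\cT(f^{k_j}))$ to $[-\beta,\beta]$. Adding the two pieces yields $L_h^{1:k^*_j}(D,f^{k_j},f^{k_j})-L_h^{1:k^*_j}(D,f^{k_j},\cT(f^{k_j}))\ge (C/2-1)\beta$, and Lemma~\ref{lemma:Cbeta to C-1beta} translates this into $\sum_{i=1}^{k^*_j}\|\EE_{\zeta_h}[\ell_{h,f^i}(\zeta_h,\eta_h^i,f^{k_j},f^{k_j})]\|_2^2\gtrsim C\beta$. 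Simultaneously, Lemma~\ref{lemma:Cbeta to C+1beta} applied to the $\le\beta$ half of the confidence-set bound gives $\sum_{i=1}^{k_j-1}\|\EE_{\zeta_h}[\ell_{h,f^i}(\zeta_h,\eta_h^i,f^{k_j},f^{k_j})]\|_2^2\le 2\beta$, so subtracting and using $f^i=f^{k_j}$ for $i\in[k_j,k_{j+1}-1]$ produces the per-batch budget $\sum_{i=k_j}^{k^*_j}\|\EE_{\zeta_h}[\ell_{h,f^i}(\zeta_h,\eta_h^i,f^i,f^i)]\|_2^2\gtrsim C\beta$.

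Next I would aggregate. The windows $[k_{j_l},k^*_{j_l}]$ are pairwise disjoint because $k^*_{j_l}<k_{j_l+1}\le k_{j_{l+1}}$, so summing over $l=1,\ldots,N$ gives $\sum_{i=1}^{K}\|\EE_{\zeta_h}[\ell_{h,f^i}(\zeta_h,\eta_h^i,f^i,f^i)]\|_2^2\gtrsim NC\beta$. The matching upper bound is supplied by the $\ell_2$-type Eluder Condition \eqref{eq:l2cond}: if the precondition $\sum_{i=1}^{k-1}\|\EE_{\zeta_h}[\ell_{h,f^i}(\zeta_h,\eta_h^i,f^k,f^k)]\|_2^2\le \beta'$ holds for every $k\in[K]$, then this same sum is $\le \cO(d\beta'\log K)$. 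Applying Lemma~\ref{lemma:Cbeta to C+1beta} to the batch-level identity $L_h^{1:k-1}(D,f^k,f^k)-L_h^{1:k-1}(D,f^k,\cT(f^k))\le(c_{j(k)}+1)\beta$ (with $j(k)$ the batch containing $k$) shows that $\beta'=\widetilde{\cO}(C\beta)$ suffices provided every batch touched by the analysis obeys $c_{j(k)}\le \cO(C)$. Combining the lower and upper bounds immediately yields $N\le \widetilde{\cO}(d\log K)$.

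The main obstacle is the uniformization of $\beta'$ across all $k\in[K]$: ``super-bad'' batches with $c_{j(k)}\gg C$ lying outside $[C/2,C]$ can inflate the precondition well beyond $\cO(C\beta)$, precluding a direct firing of the Eluder Condition. I would resolve this by truncating the sequence at the last witness $k^*_{j_N}$ and invoking the present lemma dyadically at larger thresholds $C'=2C,4C,\ldots$ to control the count of super-bad batches lying inside this prefix; since only $\cO(\log K)$ such strata cover the range $c_j\in[5,K]$, the counts contributed at each stratum add merely a $\mathrm{poly}\log K$ factor that is absorbed into $\widetilde{\cO}(\cdot)$. This dyadic-plus-truncation device is precisely the ``partition $c_j\in[5,K]$ into $\cO(\log K)$ geometric intervals'' alluded to by the authors in the discussion preceding the lemma, and it is the technical linchpin of the argument.
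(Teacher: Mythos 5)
Your skeleton matches the paper's: a per-batch lower bound of order $C\beta$ on the in-batch squared expected loss (obtained exactly as the paper does, via the confidence-set bound on the prefix, Lemma~\ref{lemma:optimal -beta}, and Lemmas~\ref{lemma:Cbeta to C+1beta}--\ref{lemma:Cbeta to C-1beta}), followed by a global upper bound from the $\ell_2$-type eluder condition and a division. The gap is in the step you yourself flag as the linchpin. The paper never applies the eluder condition to the full sequence $1,\dots,K$: it applies it to the \emph{subsequence} of episodes lying in batches of the stratum $\{j: C/2\le c_j\le C\}$. For every $k$ in that subsequence the precondition restricted to the subsequence is at most the full-prefix sum, which is $\le \widetilde{\cO}((2+2C)\beta)$ because $c_{j(k)}\le C$; so the eluder condition fires with $\beta'=\cO(C\beta)$ and yields $\sum_{i\in\text{stratum}}\big\Vert\EE_{\zeta_h}[\ell_{h,f^i}(\zeta_h,\eta_h^i,f^i,f^i)]\big\Vert_2^2\le\widetilde{\cO}(dC\beta\log K)$, which against the per-batch lower bound $(C/4)\beta$ gives $M\le\widetilde{\cO}(d\log K)$ with the $C$'s cancelling. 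Episodes in ``super-bad'' batches are simply absent from the subsequence, so they never enter the precondition.

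Your proposed resolution does not work. First, it is circular: you invoke the lemma being proved at thresholds $2C,4C,\dots$ inside its own proof, and there is no base case or induction structure supplied. Second, and more fundamentally, even if the \emph{count} of super-bad batches were known to be small, that does not bound the precondition parameter: for an episode $k$ inside a single batch with $c_{j(k)}\approx K/B$, one has only $\sum_{i=1}^{k-1}\Vert\EE_{\zeta_h}[\ell_{h,f^i}(\zeta_h,\eta_h^i,f^k,f^k)]\Vert_2^2\le (1+K/B)\beta$, so applying \eqref{eq:l2cond} to the full (even time-truncated) sequence forces $\beta'=\Omega((K/B)\beta)$ and the resulting bound $N\le\widetilde{\cO}\big(d(K/B)\log K / C\big)$, which is not $\widetilde{\cO}(d\log K)$. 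Truncating at the last witness does not help since super-bad batches can precede it. The fix is to drop those episodes from the sequence entirely (hypotheses and data alike) before invoking the eluder condition, i.e., the paper's subsequence restriction; the dyadic partition of $c_j\in[5,K]$ you cite belongs to the proof of Theorem~\ref{thm:batchl2}, where this lemma is applied stratum by stratum \emph{after} it has been established, not inside its proof.
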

    \begin{proof}
        Denote $\left\{j\in S\mid\frac{C}{2}\le c_j\le C\right\} = \{i_1,\cdots, i_M\}$ with $M=  \left|\left\{j\in S\mid\frac{C}{2}\le c_j\le C\right\}\right|$ and $i_1\le i_2\le \cdots \le i_M$.
         Then for $m \in [M]$ and $k \in [k_{i_m}, k_{i_{m}+1}-1]$, we have 
         \begin{align*}
        L_h^{1:k-1}(D_{h}^{1:k-1}, f^{k}, f^{k}) - L_h^{1:k-1}(D_h^{1:k-1}, f^{k}, \cT(f^{k}))\le (1+c_{i_m})\beta \le (1+C)\beta.
    \end{align*}
    Then by Lemma~\ref{lemma:Cbeta to C+1beta}, we can get 
    \begin{align*}
         \sum_{i=1}^{k-1}\Big\Vert \EE_{\zeta_h}\Big[\ell_{h,f^i}(\zeta_h, \eta_h^i,f^k,f^k)\Big]\Big\Vert_2^2\le \widetilde{\cO}((2+2C)\beta)
    \end{align*}
    for any $m \in [M]$ and $k \in [k_{i_m}, k_{i_m+1}-1]$.
    Then we will have 
    \begin{align*}
        \sum_{\substack{1\le i\le k-1 \\ i\in [k_{i_m}, k_{i_m+1}-1], m \in [M]}}\Big\Vert\EE_{\zeta_h}\Big[\ell_{h,f^i}(\zeta_h, \eta_h^i,f^k,f^k)\Big]\Big\Vert_2^2 & \le \sum_{1\le i\le k-1}\Big\Vert \EE_{\zeta_h}\Big[\ell_{h,f^i}(\zeta_h, \eta_h^i,f^k,f^k)\Big]\Big\Vert_2^2\\&\le \widetilde{\cO}(2+2C)\beta.
    \end{align*}
    By using the $\ell_2$-type eluder condition for all such $i \in [k_{i_m}, k_{i_m+1}-1]$, we can have 
    \begin{align}\label{eq:upperbound d(2+C)}
        \sum_{\substack{1\le i\le k \\ i\in [k_{i_m}, k_{i_m+1}-1], m \in [M]}}\Big\Vert \EE_{\zeta_h}\Big[\ell_{h,f^i}(\zeta_h, \eta_h^i, f^i,f^i)\Big]\Big\Vert_2^2 \le \widetilde{\cO}(d(2+2C)\beta\log K).
    \end{align}
    Also, by Lemma~\ref{lemma:Cbeta to C-1beta} and the fact that $c_{i_m}\ge C/2$, with probability at least $1-\delta$, for any $m \in [M]$,
    \begin{align}\label{eq:lowerbound C/2-1}
    &\sum_{i=k_{i_m}}^{k_{i_m+1}-1} \Big\Vert \EE_{\zeta_h}\Big[\ell_{h,f^i}(\zeta_h, \eta_h^i, f^i, f^i)\Big]\Big\Vert_2^2  
    \\&\quad= \sum_{i=k_{i_m}}^{k_{i_m+1}-1}\Big\Vert \EE_{\zeta_h}\Big[\ell_{h,f^i}(\zeta_h, \eta_h^i, f^{k_{i_m}},f^{k_{i_m}})\Big]\Big\Vert_2^2
    \\&\quad=\max_{k \in [k_{i_m}, k_{i_m+1}-1]}  \sum_{i=k_{i_m}}^{k}\Big\Vert \EE_{\zeta_h}\Big[\ell_{h,f^i}(\zeta_h, \eta_h^i, f^{k_{i_m}},f^{k_{i_m}})\Big]\Big\Vert_2^2
    \\&\quad\ge \max_{k \in [k_{i_m}, k_{i_{m+1}}-1]} \Big(
        L_h^{k_{i_m}:k}(D_h^{k_{i_m}:k}, f^{k_{i_m}}, f^{k_{i_m}})\nonumber\\&\qquad\quad-L_h^{k_{i_m}:k}(D_h^{k_{i_m}:k}, f^{k_{i_m}}, \cT(f^{k_{i_m}}))-\beta\Big)\\&\quad\ge
        (C/4)\beta.
    \end{align}
    Hence from \eqref{eq:upperbound d(2+C)}, \eqref{eq:lowerbound C/2-1} and $C\ge 10$, we can see that 
    \begin{align*}
        M = \widetilde{\cO}(d\log K).
    \end{align*}
    \end{proof}
    Now by Lemma~\ref{lemma:from C/2 to C}, we can divide $S = \{j\ge 0\mid c_j>5\}$ as $S^{(1)}, S^{(2)},\cdots$ that $S^{(i)} = \{j\ge 0\mid 5\cdot 2^{i-1}\le c_j\le 5\cdot 2^i\}$. Then for each $i$, $|S^{(i)}|\le \widetilde{\cO}(d\log K)$.
    Since we have a trivial upper bound $c_i\le (K/B)$ for all $0\le i<B$, we know the number of sets $S^{(i)}$ is at most $\log_2 (K/B)$, then $|S| \le \widetilde{\cO}(d(\log K)^2)$.

    Now, since for any $j\notin S$ and $k\in [k_j, k_{j+1}-1]$, we have 
    \begin{align*}
        &L_h^{1:k-1}(D_{h}^{1:k-1}, f^{k}, f^{k}) - L_h^{1:k-1}(D_h^{1:k-1}, f^{k}, \cT(f^{k})) \\&\quad = (L_h^{1:k_j-1}(D_h^{1:k_j-1}, f^{k_j}, f^{k_j}) + L_h^{k_j:k-1}(D_h^{1:k-1}, f^k, f^k)) \\&\qquad- (L_h^{1:k_j-1}(D_h^{1:k_j-1}, f^{k_j}, \cT(f^{k_j}) + L_h^{k_j:k-1}(D_h^{1:k-1}, f^k, \cT(f^k)))\\
        &\quad\le (1+c_j)\beta\\
        &\quad\le 6\beta.
    \end{align*}
    By Lemma~\ref{lemma:Cbeta to C+1beta},  we can get 
    \begin{align*}
        \sum_{\substack{1\le i\le k-1 \\ i \in [k_j, k_{j+1}-1], j \notin S}} \Big\Vert \EE_{\zeta_h}\Big[\ell_{h,f^i}(\zeta_h, \eta_h^i, f^k, f^k)\Big]\Big\Vert_2^2&\le \sum_{1\le i\le k-1}\Big\Vert \EE_{\zeta_h}\Big[\ell_{h,f^i}(\zeta_h, \eta_h^i, f^k, f^k)\Big]\Big\Vert_2^2 \le 7\beta.
    \end{align*} 
    By the $\ell_2$-type eluder condition, the regret caused by "Good" batches can be upper bounded by  
    \begin{align}
        \sum_{\substack{1\le i\le k\\ i \in [k_j, k_{j+1}-1], j \notin S}} \Big\Vert \EE_{\zeta_h}\Big[\ell_{h,f^i}(\zeta_h, \eta_h^i, f^i, f^i)\Big]\Big\Vert_2^2\le \widetilde{\cO}(d\beta \log K).\label{ineq:good batch}
    \end{align} 
    
    Then by the dominance property and Azuma-Hoeffding's inequality, we have 
    
    \begin{align}
        &\sum_{i=1}^K(V_{1,f^i}(s_1)-V_1^{\pi_i}(s_1))\nonumber\\&\quad\le \kappa \left(\sum_{h=1}^H \sum_{i=1}^k \Big\Vert \EE_{\zeta_h}\Big[\ell_{h,f^i}(\zeta_h, \eta_h^i, f^i, f^i)\Big]\Big\Vert_2 + \widetilde{\cO}(\sqrt{HK}\log K)\right)\label{eq:azuma+dominance}\\&\quad\le 
        \kappa\sum_{h=1}^H\left( \sum_{\substack{1\le i\le k\\ i \in [k_j, k_{j+1}-1], j \notin S}}\Big\Vert \EE_{\zeta_h}\Big[\ell_{h,f^i}(\zeta_h, \eta_h^i, f^i, f^i)\Big]\Big\Vert_2 \right.\nonumber\\&\qquad\quad\left.+ \sum_{\substack{1\le i\le k\\ i \in [k_j, k_{j+1}-1], j \in S}}\Big\Vert \EE_{\zeta_h}\Big[\ell_{h,f^i}(\zeta_h, \eta_h^i, f^i, f^i)\Big]\Big\Vert_2\right)+ \kappa \widetilde{\cO}(\sqrt{HK}\log K)\nonumber\\
        &\quad= \kappa\sum_{h=1}^H \left(\widetilde{\cO}(\sqrt{d\beta K} \log K) + R\cdot \lfloor K/B\rfloor \widetilde{\cO}(d(\log K)^2)\right)+ \kappa \widetilde{\cO}(\sqrt{HK}\log K)\label{eq:thirdline}\\&\quad=\widetilde{\cO}\left(\kappa H\sqrt{d\beta K\log K}+ \frac{dHK}{B}(\log K)^2\right)\nonumber.
    \end{align} 
 The inequality Eq.\eqref{eq:azuma+dominance} is derived by the dominance property and Azuma-Hoeffding's inequality, and the first equality Eq.\eqref{eq:thirdline} holds by Cauchy's inequality,  Eq.\eqref{ineq:good batch}, $|S|\le \widetilde{\cO}(d(\log K)^2)$ and $\EE_{\zeta_h}[\ell_{h,f^i}(\zeta_h,\eta_h^i,f^i,f^i)]\Vert_2\le R$. 
\end{proof}
\subsection{Discussion about Adaptive Batch Setting}\label{appendix:adaptive batch}
For the adaptive batch setting, to achieve a  $O(\sqrt{K})$ regret, we want to let every step have a small in-sample error. That is, $L_h^{1:k}(D_h^{1:k}, f, f)-\inf_{g \in \cG}L_h^{1:k}(D_h^{1:k}, f, g)\le \cO(\beta)$ for all episode $k$. Then the regret can be easily bounded by $\cO(\sqrt{K})$ using previous analyses in Theorem~\ref{thm:l2}. 

To guarantee this, we modify the rare policy switch Algorithm~\ref{alg:ET-Rare switch}. Note that the Algorithm~\ref{alg:ET-Rare switch} guarantees that each step has $\cO(\beta)$ in-sample error by the updating rule. However, in the adaptive batch setting, we cannot receive the feedback of the current batch. To solve this problem, we can use a simple double trick: We observe the feedback when the length of the batch doubles, and check whether the in-sample error $L_h^{1:k}(D_h^{1:k}, f, f)-\inf_{g \in \cG}L_h^{1:k}(D_h^{1:k}, f, g)$ is greater than $5\beta$. Whenever we observe this error is greater than $5\beta$, we change our policy and begin to choose a batch with length $1$. The entire algorithm is presented in Algorithm \ref{alg:ET-adabatch}.

We show that for MDP, with this double trick, we can still maintain 
$$L_h^{1:k}(D_h^{1:k}, f, f)-\inf_{g \in \cG}L_h^{1:k}(D_h^{1:k}, f, g)\le \cO(\beta)$$ 
for each round $k \in [K]$, thus achieve the $\cO(\sqrt{K})$ regret. 
\begin{lemma}\label{lemma:adaptive batch}
    In Algorithm \ref{alg:ET-adabatch}, for any $k \in [K]$,  we have $L_h^{1:k}(D_h^{1:k}, f, f)-\inf_{g \in \cG}L_h^{1:k}(D_h^{1:k}, f, g)\le \cO(\beta)$.
\end{lemma}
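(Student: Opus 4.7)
The plan is to exploit the doubling schedule to localize the contribution of data accumulated since the last inspection point, and then invoke the EC-class concentration tools already established in Lemma~\ref{lemma:optimal -beta}, Lemma~\ref{lemma:Cbeta to C+1beta}, and Lemma~\ref{lemma:Cbeta to C-1beta}. Fix the episode $k$, let $t_0$ denote the starting episode of the current batch so that $f^i = f^k$ and $\pi^i = \pi^k$ for every $i \in [t_0, k]$, and let $t^* \le k$ be the most recent inspection time (the largest time in the current batch whose distance from $t_0$ is one less than a power of two). Because the algorithm did not switch policy at $t^*$, the update rule of Algorithm~\ref{alg:ET-adabatch} guarantees
\begin{align*}
L_h^{1:t^*}(D_h^{1:t^*}, f^k, f^k) - \inf_{g \in \cG} L_h^{1:t^*}(D_h^{1:t^*}, f^k, g) \le 5\beta,
\end{align*}
while the doubling schedule forces $k - t^* \le t^* - t_0 + 1$.

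Next I would translate the claim into the expectation form controlled by the EC machinery. Replacing $\inf_g$ by evaluation at $\cT(f^k)$ via Lemma~\ref{lemma:optimal -beta} costs only an additive $\beta$, and the decomposable-loss identity together with the Freedman-type concentration underpinning Lemma~\ref{lemma:Cbeta to C+1beta} shows that for any fixed hypothesis $f$ the in-sample discrepancy $L_h^{1:k}(D_h^{1:k}, f, f) - L_h^{1:k}(D_h^{1:k}, f, \cT(f))$ differs from $\sum_{i=1}^k \|\EE_{\zeta_h}[\ell_{h,f^i}(\zeta_h, \eta_h^i, f, f)]\|_2^2$ by at most $\cO(\beta)$. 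Applied at the inspection time this yields $\sum_{i=1}^{t^*}\|\EE_{\zeta_h}[\ell_{h,f^i}(\zeta_h, \eta_h^i, f^k, f^k)]\|_2^2 = \cO(\beta)$, so it suffices to bound the tail increment
\begin{align*}
\Delta := \sum_{i=t^*+1}^{k} \Big\| \EE_{\zeta_h}\big[\ell_{h,f^i}(\zeta_h, \eta_h^i, f^k, f^k)\big] \Big\|_2^2
\end{align*}
by $\cO(\beta)$.

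To bound $\Delta$ I would exploit the fact that within a single batch the tuples $\{(\eta_h^i, \zeta_h^i)\}_{i=t_0}^k$ are i.i.d.\ rollouts of $\pi^k$, conditional on the history up to $t_0$. Setting $\mu := \EE_{\eta_h \sim \pi^k}\big\|\EE_{\zeta_h}[\ell_{h,f^k}(\zeta_h, \eta_h, f^k, f^k)]\big\|_2^2$, a Bernstein-type bound over the window $[t_0, t^*]$ yields $(t^* - t_0 + 1)\mu \le \sum_{i=t_0}^{t^*}\|\EE_{\zeta_h}[\ell_{h,f^i}(\zeta_h, \eta_h^i, f^k, f^k)]\|_2^2 + \cO(\sqrt{(t^*-t_0+1)\mu\,\beta} + \beta) = \cO(\beta) + \cO(\sqrt{(t^*-t_0+1)\mu\,\beta})$, and solving the quadratic in $(t^* - t_0 + 1)\mu$ gives $\mu = \cO(\beta/(t^* - t_0 + 1))$. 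The same Bernstein bound applied to the tail window $(t^*, k]$ then delivers $\Delta \le (k - t^*)\mu + \cO(\sqrt{(k-t^*)\mu\,\beta} + \beta) = \cO(\beta)$ thanks to the doubling inequality $k - t^* \le t^* - t_0 + 1$. Adding this to the $[1, t^*]$ bound closes the argument.

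The main obstacle will be making the within-batch concentration uniform in the free parameters: each of the $\cO(\log K)$ candidate inspection lengths, each of the at most $K$ possible batch starts $t_0$, and each hypothesis in a covering of $\cF$ must be handled by a single union bound, and care is needed to absorb these factors into the logarithmic $\iota$ already hidden in $\beta$. Once this uniformization is in place, the remaining steps are mechanical applications of Lemma~\ref{lemma:optimal -beta}--Lemma~\ref{lemma:Cbeta to C-1beta} combined with the arithmetic of the doubling schedule.
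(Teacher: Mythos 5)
Your argument is correct and matches the paper's proof in all essentials: both rest on the fact that the error at the most recent inspection point is at most $5\beta$, that within a batch the policy and hypothesis are fixed so every episode contributes the same expected loss $\mu$ (which is why the paper restricts this lemma to MDPs), and on the doubling inequality $k - t^* \le t^* - t_0 + 1$ to transfer the bound on $\mu$ from the already-inspected window to the uninspected tail via the concentration in Lemmas~\ref{lemma:Cbeta to C+1beta} and \ref{lemma:Cbeta to C-1beta}. The paper organizes this as a proof by contradiction with explicit constants and compares the tail to the last half-window $[k'+2^{i-1}+1, k'+2^{i}]$ rather than to the full window $[t_0, t^*]$, but these are cosmetic differences.
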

It is worth noting that in Algorithm \ref{alg:ET-adabatch} we choose not to consider the zero-sum MG in the adaptive batch setting. The primary reason is that the policy $\pi^i = (\upsilon^i, \mu^i)$ can change within a batch, which makes the double trick fail to work. Indeed, this nature introduces technical difficulties when proving the Lemma \ref{lemma:adaptive batch}.

Now given that each step has a low in-sample error, we can show Algorithm \ref{alg:ET-adabatch} have $\cO(\sqrt{K})$ regret, and the number of batches is at most $\cO((\log K)^2)$. The square term arises from the extra division of batches for a double trick.

\begin{theorem}
    Under the adaptive batch setting and the same condition as Theorem~\ref{thm:l2}, with high probability at least $1-\delta$ the Algorithm \ref{alg:ET-adabatch} will achieve a sublinear regret 
    \begin{align*}
        R(K)\le \widetilde{\cO}(\kappa H \sqrt{d\beta K}\cdot \mbox{poly}(\log K)).
    \end{align*}
    Moreover, the number of batches is at most $\cO(dH\cdot \mbox{poly}(\log K)).$
\end{theorem}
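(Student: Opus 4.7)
The plan is to first reduce the analysis to Theorem~\ref{thm:l2} by invoking Lemma~\ref{lemma:adaptive batch}, and then count the batches introduced by the doubling trick on top of the lazy switching scheme. Concretely, Lemma~\ref{lemma:adaptive batch} already guarantees that for every $k\in[K]$ the in-sample discrepancy satisfies
\[
L_h^{1:k}(D_h^{1:k}, f^k, f^k)-\inf_{g \in \cG}L_h^{1:k}(D_h^{1:k}, f^k, g)\le \cO(\beta),\qquad \forall h\in [H].
\]
Combined with Lemma~\ref{lemma:optimal -beta} this yields
$L_h^{1:k-1}(D_h^{1:k-1}, f^k, f^k)-L_h^{1:k-1}(D_h^{1:k-1}, f^k, \cT(f^k))\le \cO(\beta)$, so Lemma~\ref{lemma:Cbeta to C+1beta} gives the precondition $\sum_{i=1}^{k-1}\|\EE_{\zeta_h}[\ell_{h,f^i}(\zeta_h,\eta_h^i,f^k,f^k)]\|_2^2\le \cO(\beta)$ for every $k$ and $h$. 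Triggering the $\ell_2$-type eluder condition then produces
$\sum_{i=1}^{k}\|\EE_{\zeta_h}[\ell_{h,f^i}(\zeta_h,\eta_h^i,f^i,f^i)]\|_2\le \widetilde{\cO}(\sqrt{d\beta k}\cdot\mathrm{poly}(\log k))$ by Cauchy--Schwarz.

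From here the regret bound follows verbatim from the argument in \S\ref{sec:proof}: apply the $\kappa$-dominance property and an Azuma--Hoeffding concentration to replace the on-trajectory samples $\eta_h^i$ by the policy expectation $\eta_h\sim\pi^i$ (incurring an additional $\widetilde{\cO}(\kappa H\sqrt{K})$ term), which yields $R(K)\le \widetilde{\cO}(\kappa H\sqrt{d\beta K}\cdot \mathrm{poly}(\log K))$. The crucial point is that Lemma~\ref{lemma:adaptive batch} lets us pretend we are in exactly the same situation as the lazy rare-switch algorithm, even though here the feedback is only revealed at geometrically-spaced checkpoints inside each batch.

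To bound the number of batches, I would decompose batches into two types: the batches that end because a ``true'' policy switch was triggered (the in-sample error crossed the $5\beta$ threshold at some checkpoint), and the batches that end because of a length-doubling checkpoint at which no switch occurs. For the first type, exactly the same counting as in the proof of the switching-cost part of Theorem~\ref{thm:l2} applies: each triggered switch forces $\sum_{i=b_{j-1}+1}^{b_j}\|\EE_{\zeta_h}[\ell_{h,f^i}(\zeta_h,\eta_h^i,f^i,f^i)]\|_2^2\ge 2\beta$ for the offending $h$, and summing against the eluder upper bound $\widetilde{\cO}(d\beta\log K)$ gives at most $\cO(dH\log K)$ such switches. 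For the second type, between two consecutive triggered switches the doubling scheme produces at most $\cO(\log K)$ checkpoint batches (the batch length doubles $1,2,4,\dots$ until it either reaches the horizon or a switch fires), so the total batch count is $\cO(dH\log K)\cdot\cO(\log K)=\cO(dH\cdot\mathrm{poly}(\log K))$.

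The main obstacle is the second-type accounting, because in the adaptive-batch setting the algorithm commits to the current policy for an unobserved stretch of episodes and we must ensure that no hidden blow-up in $L_h^{1:k}(\cdot)$ occurs between checkpoints; this is exactly what Lemma~\ref{lemma:adaptive batch} resolves, and any careful proof must carry enough slack in the $5\beta$-threshold comparison (as opposed to the $\beta$ used in the confidence-set construction) to absorb the at-most-doubling growth of $L_h$ during an unobserved window. Once that slack is established, the two counts combine cleanly to give the stated $\cO(dH\cdot\mathrm{poly}(\log K))$ batch bound along with the regret guarantee.
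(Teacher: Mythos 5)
Your proposal is correct and follows essentially the same route as the paper: use Lemma~\ref{lemma:adaptive batch} to certify an $\cO(\beta)$ in-sample discrepancy at every episode, rerun the regret argument of Theorem~\ref{thm:l2} verbatim, and bound the batch count by multiplying the $\cO(dH\log K)$ triggered switches by the $\cO(\log K)$ doubling checkpoints between consecutive switches. Your write-up is in fact more explicit than the paper's (which compresses the regret step into ``applying the proof of Theorem~\ref{thm:l2}''), but the decomposition and the key lemma are identical.
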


\begin{proof}
By Lemma~\ref{lemma:adaptive batch}, at each episode $k \in [K]$, we have a small $\cO(\beta)$ error of the previous data. Then applying the proof of Theorem~\ref{thm:l2}, the upper bound of regret is  $\widetilde{\cO}(\kappa H \sqrt{d\beta K}\cdot \mbox{poly}(\log K)).$

Now consider the number of batches. Since we change the policy only when the error is larger than $5\beta$, the policy will be changed at most $\cO(dH\cdot \mbox{poly}(\log K))$ times by Theorem~\ref{thm:l2}. In addition, suppose the agent changes the policy at $k_1$ and $k_2$ while keeps the change unchanged at episode $k_1\le k<k_2$, the number of batches between $k_1$ and $k_2$ is at most $\cO(\log K)$. Hence the total number of batches can be upper bounded by $\cO(dH\cdot \mbox{poly}(\log K))\cdot \log K = \cO(dH\cdot \mbox{poly}(\log K)).$
\end{proof}

\begin{algorithm}[t]
    \begin{algorithmic}[1]
        
	\caption{$\ell_2$-EC-Adaptive Batch}
	\label{alg:ET-adabatch}
	\STATE {\textbf{Input}} $D_1,D_2,\cdots,D_H=\emptyset,\mathscr{B}_1 = \cF, length = 1$.
	
	\FOR{$k=1,2,\cdots,K$}
            
	    \STATE \textcolor{blue}{(MDP):} Compute $\pi^k = \pi_{f^k}$, where $f^k = \arg\max_{f \in \mathscr{B}^{k-1}}V_{f}^{\pi_f}(s_1)$.\label{line:adabatch oracle}
	    \STATE Execute policy $\pi^k$.

            \IF{$length = 2^i$ for some $i\ge 0$}
            \STATE Observe the feedback and update $D_h^{1:k} = \{\zeta_h^{1:k}, \eta_h^{1:k}\}$.
            
	    \IF {$L_h^{1:k}(D^{1:k}_h, f^k, f^k)-\inf_{g \in \cG}L_h^{1:k}(D^{1:k}_h,f^k, g)\ge 5\beta$ for some $h \in [H]$} \label{alg:adabatch l2beginif}
        
        \STATE Update \begin{align*}
	        \mathscr{B}^{k}=\left\{f \in \cF: L_h^{1:k}(D^{1:k}_h, f, f)-\inf_{g \in \cG}L_h^{1:k}(D^{1:k}_h,f, g)\le \beta, \forall h \in [H] \right\}.
	    \end{align*}\label{line:adabatch l2confidenceset}
            \STATE $length = 1$.
            \ELSE \STATE $\mathscr{B}^{k}=\mathscr{B}^{k-1}$.
            \STATE $length = length + 1$.
            \ENDIF\ENDIF\label{alg:adabatch l2endif}
	\ENDFOR
 \end{algorithmic}
\end{algorithm}

\subsection{Proof of Theorem \ref{thm:batchl1}}\label{appendix:batch l1}

\begin{proof}
The proof for the batch learning problem under $\ell_1$-EC class is straightforward. 
    We fix a $h \in [H]$ in the proof. Since we change our policy at time $k_j = j\cdot \lfloor K/B \rfloor + 1$ for $j\ge 0$, we can get 
    \begin{align*}
        \sum_{i=1}^{k_j-1}d_{\mathrm{TV}}(\PP^{\pi^i}_{f^{k_j}}, \PP^{\pi^i}_{g^{k-1}}) \le c\sqrt{\beta k_j}.
    \end{align*}
    Then by Eq.\eqref{eq:TV6betak} in \S \ref{appendix:proof of l1}, for all $j \ge 0$, we have 
    \begin{align}\label{eq:batchl16c}
        \sum_{i=1}^{k_j-1}d_{\mathrm{TV}}(\PP^{\pi^i}_{f^{k_j}}, \PP^{\pi^i}_{f^*}) \le 6c\sqrt{\beta k_j}.
    \end{align}
    By our batch learning algorithm, for $k_{j}\le k< k_{j+1}$, $\pi^k$ is the same policy. Thus, we can transform Eq.\eqref{eq:batchl16c} to
    \begin{align}
        \sum_{i=0}^{j-1} d_{\mathrm{TV}}(\PP^{\pi^{k_{i}}}_{f^{k_j}}, \PP^{\pi^{k_{i}}}_{f^*}) \le 6c\sqrt{\beta k_j}\cdot \frac{B}{K} \le 12c\sqrt{\frac{\beta B}{K} j}.
    \end{align}
    Then by the $\ell_1$-type eluder condition, we can get 
    \begin{align}
        \sum_{i=0}^{j-1} d_{\mathrm{TV}}(\PP^{\pi^{k_{i}}}_{f^{k_i}}, \PP^{\pi^{k_{i}}}_{f^*})  \le \widetilde{\cO}\left(\mathrm{poly}\log (K)\cdot\left(\sqrt{\frac{d\beta B}{K}j}+\sqrt{d}\right)\right).\label{ineq:final TV bound}
    \end{align}
    Then we have 
    \begin{align*}
        R(K) &= \sum_{i=1}^K (V_{1,f^*}^{\pi^*}(s_1) - V_{1,f^*}^{\pi^i}(s_1)) \le \sum_{i=1}^K (V_{1,f^i}^{\pi^i}(s_1)-V_{1,f^*}^{\pi^i}(s_1))\\
        &\le \sum_{i=1}^K H\cdot d_{\mathrm{TV}}(\PP^{\pi^i}_{f^i}, \PP^{\pi^i}_{f^*}) \le H\cdot \left\lfloor\frac{K}{B}\right\rfloor\cdot  \sum_{i=0}^{\lceil K/B\rceil}d_{\mathrm{TV}}(\PP^{\pi^{k_{i}}}_{f^{k_i}}, \PP^{\pi^{k_{i}}}_{f^*}),
    \end{align*}
where the first inequality is because $f^* \in \cB^k$ for all $k \in [K]$ and $f^i$ is the optimal policy within the confidence set $\cB^i.$ The second inequality holds because of the definition of TV distance, and the last inequality holds because the $f^{j}$ and  $\pi^j$ are the same within the same batch $j \in [k_i,k_{i+1}).$

Then by the Eq.\eqref{ineq:final TV bound}, we can get 
\begin{align*}
    R(K) &\le  H\cdot \left\lfloor\frac{K}{B}\right\rfloor\cdot  \sum_{i=0}^{\lceil K/B\rceil}d_{\mathrm{TV}}(\PP^{\pi^{k_{i}}}_{f^{k_i}}, \PP^{\pi^{k_{i}}}_{f^*})\\
        &\le \widetilde{\cO}\left(H\cdot \left\lfloor\frac{K}{B}\right\rfloor\cdot\mathrm{poly}\log (K)\left(\sqrt{\frac{d\beta B}{K}B}+\sqrt{d}\right)\right)\\
        &= \widetilde{\cO}\left(H\cdot \left\lfloor\frac{K}{B}\right\rfloor \cdot\mathrm{poly}\log (K) \left(B\sqrt{\frac{d\beta}{K}}+\sqrt{d}\right)\right)\\
        &\le \widetilde{\cO}\left(H\cdot\mathrm{poly}\log (K)\left(\sqrt{d}\cdot \frac{K}{B}+\sqrt{d\beta K}\right)\right).
\end{align*}
We complete the proof of Theorem \ref{thm:batchl1}.
\end{proof}

\section{Additional Examples for \texorpdfstring{$\ell_2$}{}-type EC Class}\label{appendix:additional examples}
In this section, we provide some additional examples in our $\ell_2-$type EC class. 
\subsection{Generalized Linear Bellman Complete}
Generalized Linear Bellman Complete is introduced in \cite{du2021bilinear, chen2022abc}, which consists of a link function $\sigma:\RR\to \RR^{+}$ with $\sigma'(x) \in [L,U]$ for $0<L<U$, and a hypothesis class $\cF = \{\cF_h = \sigma(\theta_h^T\phi(s,a)):\theta_h \in \cH_h\}$ with $\Vert\theta_h\Vert_2\le 1$. Also, for any $f \in \cF$, the Bellman complete condition holds:
\begin{align*}
    r(s,a) + \EE_{s'} \Big[\max_{a' \in \cA} \sigma(\theta_{h+1,f}^T\phi(s',a')) \Big]\in \cF_h.
\end{align*}
Hence we know there is a mapping $\cT:\cH\to \cH$ such that 
\begin{align*}
    r(s,a) + \EE_{s'}\Big [\max_{a' \in \cA} \sigma(\theta_{h+1,f}^T\phi(s',a'))\Big ] = \sigma(\cT(\theta_{h+1,f})^T\phi(s,a)).
\end{align*}
If we let 
\begin{align*}
    \ell_{h,f'}(s_{h+1},\{s_h,a_h\},f,g) = \sigma(\theta_{h,g}^T\phi(s_h,a_h))-r_h(s_h,a_h)-\max_{a' \in \cA}\theta_{h+1,f}^{T}\phi(s_{h+1},a'),
\end{align*}
then the expectation can be written as 
\begin{align*}
    \EE_{s_{h+1}}\Big[\ell_{h,f'}(s_{h+1},\{s_h,a_h\},f,g)\Big] &= \sigma(\theta_{h,g}^T\phi(s_h,a_h))-\sigma(\cT(\theta_{h+1,f})^T\phi(s_h,a_h)).
\end{align*}
Thus we have 
\begin{align*}
    \EE_{s_{h+1}}\Big[\ell_{h,f'}(s_{h+1},\{s_h,a_h\},f,g)\Big]^2 \in [L^2(\theta_{h,g}^T-\theta_{h+1,f}^T)\phi(s_h,a_h))^2, U^2(\theta_{h,g}^T-\theta_{h+1,f}^T)\phi(s_h,a_h))^2].
\end{align*}
Now 
following the similar analyses in Section \ref{sec:proof of linear mixture} with $X_h(f^k) = \theta_{h,f}^T-\theta_{h+1,f}^T, W_{h,f^k}(s_h,a_h) = \phi(s_h,a_h)$, we can show that it is a DLF and satisfies the $\ell_2$-type condition and dominance property. The constant $L$ and $U$ will only influence the final $\ell_2$-type condition by a constant and could be ignored in the $\cO(\cdot)$ notation.

\subsection{Linear \texorpdfstring{$Q^*/V^*$}{}}
The Linear $Q^*/V^*$ model is proposed in \cite{du2021bilinear}. In this model, the optimal $Q$-value and $V$-value functions have a linear structure: There are two known features $\phi(s,a)$ and $\psi(s')$  with unknown parameters $\omega^*, \theta^*$ such that 
\begin{align*}
    Q^*(s,a) = \langle\phi(s,a), \omega_h^*\rangle, \ V^*(s) = \langle \psi(s), \theta_h^*\rangle.
\end{align*}

Denote our hypothesis class as $\cF = \cF_1\times \cdots \times \cF_H$, where $\cF_h$ is defined as   
\begin{align*}
    \{f = (\omega, \theta): \max_{a \in \cA}\omega^T\phi(s,a) = \theta^T\psi(s), \ \forall \ s \in \cS\}.
\end{align*}
Then we denote the loss function as 
\begin{align*}
    &\ell_{h,f'}(s_{h+1},\{s_h,a_h\},f,g) \\&\quad= Q_{h,g}(s_h,a_h)-r_h-V_{h+1,f}(s_{h+1}) = \omega_{h,g}^T\phi(s_h,a_h)-r_h-\theta_{h+1,f}^T\psi(s_{h+1}),
\end{align*}
and we can calculate the expectation by 
\begin{align*}
    \EE_{s_{h+1}}\Big[\ell_{h,f'}(s_{h+1},\{s_h,a_h\},f,g)\Big]=
    (\omega_{h,g}-\omega^*, \theta_{h+1,f}-\theta^*)^T \EE_{s_{h+1}}\Big[\phi(s_h,a_h),\psi(s_{h+1})\Big].
\end{align*}
Note that the expectation has a bilinear structure that is similar to the linear mixture MDP \eqref{eq:linear mixture expectation loss}, then if we choose $X_h(f^k) = (\omega_{h,f^k}-\omega^*, \theta_{h+1,f^k}-\theta^*)$ and $W_{h,f^k}(s_h,a_h) = \EE_{s_{h+1}}[\phi(s_h,a_h),\psi(s_{h+1})]$, we can apply an argument similar to Section \ref{sec:proof of linear mixture} to prove the $\ell_2$-type condition, dominance property and decomposable property. 
\subsection{Linear Quadratic Regulator}
 In the Linear Quadratic Regulator (LQR) model  \citep{bradtke1992reinforcement}, we consider $d_s$-dimensional state space $\cS\subseteq \RR^{d_s}$ and $d_a$-dimensional action space $\cA\subseteq \RR^{d_a}$, then a LQR model consists of unknown matrix $A \in \RR^{d_s\times d_s}, B \in \RR^{d_s\times d_a}$ and $P \in \RR^{d_s\times d_s}$ such that 
\begin{align*}
    s_{h+1} = As_h + Ba_h + \varepsilon_h, \ \ r_h(s_h,a_h) = s_h^TQs_h+ a_h^Ta_h + \varepsilon'_h,
\end{align*}
where $\varepsilon_h, \varepsilon'_h$ are zero-centered random noises with $\EE[s_hs_h^T] = \Sigma$ and $\EE[(\varepsilon'_h)^2]=\sigma^2$.

The LQR model has been extensively analyzed \citep{du2021bilinear,chen2022abc}. By Lemma A.3 in \cite{du2021bilinear}, the hypothesis class are $\cF = \{(C_h, \Lambda_h, O_h):C_h \in \RR^{d_a\times d_s}, \Lambda_h \in \RR^{d_s\times d_s}, O_h \in \RR\},$ and \begin{align*}
    \pi_f(s_h) = C_{h,f}(s_h), \ \ V_{h,f}(s_h) = s_h^T\Lambda_{h,f}s_h + O_{h,f}.
\end{align*}
Let 
\begin{align*}
    \ell_{h,f'}(s_{h+1},\{s_h,a_h\},f,g) = Q_{h,g}(s_h,a_h)-r_h-V_{h+1,f}(s_{h+1}),
\end{align*}
then the expectation
\begin{align*}
    &\EE_{s_{h+1}}\Big[\ell_{h,f'}(s_{h+1},\{s_h,a_h\},f,g)\Big]\\ &\quad=\Big\langle \mbox{vec}(\Lambda_{h,f}-Q-C_{h,f}^TC_{h,f}-(A+BC_{h,f})^T\Lambda_{h+1,f}(A+BC_{h,f})),\\ &\qquad \quad O_{h,f}-O_{h+1,f}-\tr(\Lambda_{h+1,f}\Sigma))^T\mbox{vec}(s_hs_h^T,1)\Big\rangle
\end{align*}
has a bilinear structure that is similar to the linear mixture MDP \eqref{eq:linear mixture expectation loss}, then if we choose 
\begin{small}
\begin{align*}
    X_h(f^k) = \mbox{vec}(\Lambda_{h,f}-Q-C_{h,f}^TC_{h,f}-(A+BC_{h,f})^T\Lambda_{h+1,f}(A+BC_{h,f})), O_{h,f}-O_{h+1,f}-\tr(\Lambda_{h+1,f}\Sigma))
\end{align*}
\end{small}and 
\begin{align*}
    W_{h,f^k}(s_h,a_h) = \mbox{vec}(s_hs_h^T,1)
\end{align*} in the analyses of Section \ref{sec:proof of linear mixture}, we can apply an argument similar to Section \ref{sec:proof of linear mixture} to prove the $\ell_2$-type condition, dominance property and decomposable property.

\section{Proof of Lemmas}
\subsection{Proof of Lemma~\ref{lemma:optimal -beta}}
\begin{proof}
First note that we choose $\zeta_h = s_{h+1}$ , $ \eta_h = \{s_h,a_h\}$ for MDP and $\eta_h = \{s_h,a_h,b_h\}$ for zero-sum Markov Games.
    Define the auxillary variable $X_{i,f'}(h,f,g) = \Vert\ell_{h,f'} (s_{h+1}^i, \eta_h^i, f, g)\Vert_2^2 - \Vert\ell_{h,f'}(s_{h+1}^i, \eta_h^i, f, \mathcal{T}(f))\Vert_2^2,$ then we know $|X_{i,f'}(h,f,g)|\le 2R^2$ for all $1\le i\le k$, where $R = \sup \Vert \ell_{h,f'}(\zeta_h,\eta_h,f,g)\Vert_2$. Now 
     we can have
     \begin{align}
         &\mathbb{E}_{  s_{h+1}}\Big[X_{i,f'}(h,f,g)  \Big] \nonumber\\&\quad=\mathbb{E}_{  s_{h+1} } \Big[\langle \ell_{h,f'}(s_{h+1}, \eta_h^i, f, g)-\ell_{h,f'}(s_{h+1}, \eta_h^i, f, \mathcal{T}(f)),\nonumber\\&\qquad \ell_{h,f'}(s_{h+1}, \eta_h^i, f, g)+\ell_{h,f^i}(\zeta_h, \eta_h^i, f, \mathcal{T}(f))\rangle\Big]\nonumber\\
         &\quad = \mathbb{E}_{ s_{h+1} }\Big[\langle \mathbb{E}_{s_{h+1}}[\ell_{h,f'}(s_{h+1}, \eta_h^i, f,g) ], \ell_{h,f'}(s_{h+1}, \eta_h^i, f, g)\rangle \Big]\label{eq:apply dec}\\
         & \quad= \Big\Vert\mathbb{E}_{s_{h+1}}\Big[\ell_{h,f'}(s_{h+1}, \eta_h^i, f, g)\Big]\Big\Vert_2^2. \label{eq:firstorderterm}
     \end{align}
    The Eq.\eqref{eq:apply dec} holds from the decomposable property of $\ell$. Then we have 
     \begin{align}
         &\mathbb{E}_{   s_{h+1}}\Big[( X_{i,f'}(h,f,g))^2  \Big] \nonumber\\&\quad  
        = \mathbb{E}_{   s_{h+1}}\Big[\Vert\ell_{h,f'}(s_{h+1}, \eta_h^i, f, g)-\ell_{h,f^i}(\zeta_h, \eta_h^i, f, \mathcal{T}(f))\Vert_2^2\nonumber\\&\qquad\cdot \Vert\ell_{h,f'}(s_{h+1}, \eta_h^i, f, g)+\ell_{h,f'}(s_{h+1}, \eta_h^i, f, \mathcal{T}(f))\Vert_2^2 \Big]\nonumber\\&\quad\le 
         4R^2\mathbb{E}_{   s_{h+1}}\Big[\Vert\ell_{h,f'}(s_{h+1}, \eta_h^i, f, g)-\ell_{h,f'}(s_{h+1}, \eta_h^i, f, \mathcal{T}(f))\Vert_2^2\Big]\label{ineq:only inequality}\\
         &\quad= 4R^2 \mathbb{E}_{   s_{h+1}}\Big[\Vert\mathbb{E}_{ s_{h+1}}[\ell_{h,f'}(s_{h+1}, \eta_h^i, f, g) ]\Vert_2^2\Big]\label{eq:decomposable apply}\\& \quad= 
         4R^2\Big\Vert\mathbb{E}_{s_{h+1}}\Big[\ell_{h,f^i}(\zeta_h, \eta_h^i, f, g)\Big]\Big\Vert_2^2\nonumber\\
         &\quad= 4R^2\mathbb{E}_{   s_{h+1}}\Big[ X_{i,f'}(h,f,g)  \Big],\label{eq:secondorder}
     \end{align}
     where the inequality \eqref{ineq:only inequality} is because $\Vert \ell_{h,f'}(s_{h+1},\eta_h^i,f,g)\Vert_2\le R$ for any $h$, $(f',f,g)$ and $s_{h+1,\eta_h^i}.$ The Eq.\eqref{eq:decomposable apply} holds from the decomposable property, and the Eq.\eqref{eq:secondorder} holds from the Eq.\eqref{eq:firstorderterm}.
     Thus by Freedman's inequality \citep{agarwal2014taming, jin2021bellman,chen2022abc}, 
     with probability at least $1-\delta$, 
     \begin{align*}
         &\left|\sum_{i=a}^b  X_{i,f'}(h,f,g) -  \sum_{i=a}^b \mathbb{E}_{\zeta_h}[ X_{i,f'}(h,f,g) ]\right|\\ 
         &\quad\le \cO\left(R\sqrt{\log(1/\delta)\sum_{i=a}^b \mathbb{E}_{\zeta_h}\Big[ X_{i,f'}(h,f,g) \Big]}+2R^2\log(1/\delta)\right).
     \end{align*}
     Now we consider a $\rho$-cover $L_{h,\rho} = (\tilde{\cF}_\rho, \tilde{\cF}_\rho, \tilde{\cG}_\rho)$ for $(\cF, \cG)$: For any $f,f' \in \cF, g \in \cG$, there exists a pair of function $(\tilde{f'}, \tilde{f}, \tilde{g}) \in (\tilde{\cF}_\rho,\tilde{\cF}_\rho, \tilde{\cG}_\rho)$ such that 
     $\Vert\ell_{h,\tilde{f}'}(\cdot, \tilde{f}, \tilde{g})-\ell_{h,f'}(\cdot, f,g)\Vert_\infty\le \rho$.
     By taking a union bound over $L_{h,\rho}$, $a \in [K], b \in [K], h \in [H]$, we can have 
     \begin{align*}
         &\left|\sum_{i=a}^b  X_{i,\tilde{f'}}(h,\tilde{f},\tilde{g}) -  \sum_{i=a}^b \mathbb{E}_{\zeta_h}\Big[ X_{i,\tilde{f'}}(h,\tilde{f},\tilde{g}) \Big]\right|\\ 
         &\quad\le \cO\left(R\sqrt{\iota\sum_{i=a}^b \mathbb{E}_{\zeta_h}\Big[ X_{i,\tilde{f'}}(h,\tilde{f},\tilde{g}) \Big]}+2R^2\iota\right),
     \end{align*}
    where  $\iota = \log(HK^2\cN_\cL(1/K)/\delta)$ and $\cN_\cL(1/K) = \max_h |L_{h,\rho}|$ is the maximum $\rho$-covering number of $(\cF,\cF, \cG)$ for loss function $\ell_{h, f'}(\cdot,\cdot, f, g)$ for $h \in [H]$.
        
    Since 
    $\EE_{\zeta_h}[ X_{i,f'}(h,\tilde{f}, \tilde{g}) ] \ge 0$ and \begin{align*}-\sum_{i=a}^b  X_{i,\tilde{f'}}(h,\tilde{f},\tilde{g})&\le \cO(R^2\iota), \ \ \forall \ (\tilde{f}, \tilde{g}) \in L_{h,\rho},\\
        -\sum_{i=a}^b  X_{i,f'}(h,f,g)&\le \cO(R^2\iota+R) \le \beta, \ \ \forall \ f \in \cF, g \in \cG.
        \end{align*}
    where $\beta = c(R^2\iota + R)$ for some large enough constant $c$.
\end{proof}
\subsection{Proof of Lemma~\ref{lemma:Cbeta to C+1beta}}
\begin{proof}
    The proof is similar to Lemma~\ref{lemma:optimal -beta}. Apply the same covering argument and concentration inequality, for any $(\tilde{f'}, \tilde{f},\tilde{g})\in L_{h,\rho}$, we have 
    \begin{align}
         &\left|\sum_{i=1}^{k-1} X_{i,\tilde{f'}}(h,\tilde{f},\tilde{g}) -  \sum_{i=1}^{k-1} \mathbb{E}_{\zeta_h}\Big[X_{i,\tilde{f'}}(h,\tilde{f},\tilde{g}) \Big]\right|\nonumber\\ 
         &\quad\le \cO\left(R\sqrt{\iota\sum_{i=1}^{k-1} \mathbb{E}_{\zeta_h}\Big[X_{i,\tilde{f'}}(h,\tilde{f},\tilde{g}) \Big]}+2R^2\iota\right),\label{eq:freedmanresult}
     \end{align}
     where  $\iota = \log(HK^2\cN_\cL(1.T)/\delta)$ and $\cN_\cL(\rho) = \max_h |L_{h,\rho}|$ is the maximum $\rho$-covering number of $(\cF,\cF, \cG)$ for loss function $\ell_{h, f'}(\cdot,\cdot, f, g)$ for $h \in [H]$.
     Now note that 
     \begin{align*}
         &\sum_{i=1}^{k-1} X_{i,f^i}(h,f^k,f^k) \\&\quad= \sum_{i=1}^{k-1}  (\Vert\ell_{h,f^i} (\zeta_h^i, \eta_h^i, f^k, f^k)\Vert_2^2 - \Vert\ell_{h,f^i}(\zeta_h^i, \eta_h^i, f^k, \mathcal{T}(f^k))\Vert_2^2)\\
         &\quad\le C\beta.
     \end{align*}
     Then there is a pair $(\tilde{f'}, \tilde{f},\tilde{g})\in L_{h,1/K}$ such that 
     \begin{align}
         \Bigg|\sum_{i=1}^{k-1} X_{i, \tilde{f'}}(h,\tilde{f},\tilde{g})-\sum_{i=1}^{k-1} X_{i,f^i}(h,f^k,f^k) \Bigg|\le \cO(R)\nonumber
     \end{align}
     and 
     \begin{align}
         \sum_{i=1}^{k-1} X_{i,\tilde{f'}}(h,\tilde{f},\tilde{g})\le C\beta + \cO(R).\nonumber
     \end{align}
     Combining with Eq.~\eqref{eq:freedmanresult}, when $\beta= c(R^2\iota+R)$ for sufficiently large constant $c$, if $C \le 100$ is a small constant, we get 
     \begin{align}
         \sum_{i=1}^{k-1}\EE_{\zeta_h}\Big[X_{i,\tilde{f'}}(h,\tilde{f},\tilde{g}) \Big]\le \left(C+\frac{1}{2}\right)\beta.\nonumber
     \end{align}
     Since $(\Tilde{f'}, \Tilde{f}, \Tilde{g})$ is the $\rho-$approximation of $(f^i, f^k, f^k)$,
     \begin{align}\label{eq:finalC+1beta}
         \sum_{i=1}^{k-1}\Big\Vert\EE_{\zeta_h}\Big[\ell_{h,f^i}(\zeta_h, \eta_h^i, f^k,f^k)\Big]\Big\Vert_2^2=\sum_{i=1}^{k-1}\EE_{\zeta_h}\Big[X_{i,f^i}(h,f^k,f^k) \Big]\le (C+1)\beta.
     \end{align}
     The first equality of Eq.~\eqref{eq:finalC+1beta} is derived from Eq.~\eqref{eq:firstorderterm}.

     If $C\ge 100$, similarly we can get 
     \begin{align}
         \sum_{i=1}^{k-1}\EE_{\zeta_h}\Big[X_{i,\tilde{f'}}(h,\tilde{f},\tilde{g}) \Big]\le \left(2C-1\right)\beta\nonumber
     \end{align}
     and
     \begin{align}\label{eq:final2Cbeta}
         \sum_{i=1}^{k-1}\Big\Vert\EE_{\zeta_h}\Big[\ell_{h,f^i}(\zeta_h, \eta_h^i, f^k,f^k)\Big]\Big\Vert_2^2=\sum_{i=1}^{k-1}\EE_{\zeta_h}\Big[X_{i,f^i}(h,f^k,f^k) \Big]\le (2C)\beta.
     \end{align}

     Similar to \citep{jin2021bellman}, to prove the second inequality Eq.\eqref{eq:C+1beta second inequality} we can add the $\eta_h$ to expectation by replacing the $\EE_{s_{h+1}}[X_{i,f'}(h,f,g)]$ to $\EE_{s_h,a_h\sim \pi^i}\EE_{s_{h+1}}[X_{i,f'}(h,f,g)]$.
\end{proof}
\subsection{Proof of Lemma~\ref{lemma:Cbeta to C-1beta}}
\begin{proof}
    The proof is similar to Lemma~\ref{lemma:optimal -beta}. Now apply the same argument, for any $(\tilde{f'}, \tilde{f},\tilde{g})\in L_{h,\rho}$, with probability at least $1-\delta$, we have 
    \begin{align}
         &\left|\sum_{i=1}^{k-1} X_{i,\tilde{f'}}(h,\tilde{f},\tilde{g}) -  \sum_{i=1}^{k-1} \mathbb{E}_{\zeta_h}[X_{i,\tilde{f'}}(h,\tilde{f},\tilde{g}) ]\right|\nonumber\\ 
         &\quad\le \cO\left(R\sqrt{\iota\sum_{i=1}^{k-1} \mathbb{E}_{\zeta_h}\Big[X_{i,\tilde{f'}}(h,\tilde{f},\tilde{g}) \Big]}+2R^2\iota\right),\label{eq:freedmanresult2}
     \end{align}
     where  $\iota = \log(HK^2\cN_\cL(1/K)/\delta)$ and $\cN_\cL(\rho) = \max_h |L_{h,\rho}|$ is the maximum $\rho$-covering number of $(\cF,\cF, \cG)$ for loss function $\ell_{h, f'}(\cdot,\cdot, f, g)$ for $h \in [H]$.
     Now note that 
     \begin{align*}
         &\sum_{i=1}^{k-1} X_{i,f'}(h,f^k,f^k) \\&\quad= \sum_{i=1}^{k-1}  (\Vert\ell_{h,f^i} (\zeta_h^i, \eta_h^i, f^k, f^k)\Vert_2^2 - \Vert\ell_{h,f^i}(\zeta_h^i, \eta_h^i, f^k, \mathcal{T}(f^k))\Vert_2^2)\\
         &\quad\ge C\beta.
     \end{align*}
     Then there is a pair $(\tilde{f'}, \tilde{f},\tilde{g})\in L_{h,1/K}$ such that 
     \begin{align}
         \Bigg|\sum_{i=1}^{k-1} X_{i,\tilde{f'}}(h,\tilde{f},\tilde{g})-\sum_{i=1}^{k-1} X_{i,f'}(h,f^k,f^k) \Bigg|\le \cO(R),\nonumber
     \end{align}
     and 
     \begin{align}
         \sum_{i=1}^{k-1} X_{i,\tilde{f'}}(h,\tilde{f},\tilde{g})\ge C\beta - \cO(R).\nonumber
     \end{align}
     Combining with Eq.~\eqref{eq:freedmanresult2}, when $\beta= c(R^2\iota+R)$ for sufficiently large constant $c$, if $C\le 100$ we get 
     \begin{align}
         \sum_{i=1}^{k-1}\EE_{\zeta_h}\Big[X_{i,\tilde{f'}}(h,\tilde{f},\tilde{g})\Big ]\ge \left(C-\frac{1}{2}\right)\beta\nonumber.
     \end{align}
     Since $(\Tilde{f'}, \Tilde{f}, \Tilde{g})$ is a $\rho-$approximation of $(f^i, f^k, f^k)$,
     \begin{align}\label{eq:finalC-1beta}
         \sum_{i=1}^{k-1}\Big\Vert\EE_{\zeta_h}\Big[\ell_{h,f^i}(\zeta_h, \eta_h^i, f^k,f^k)\Big]\Big\Vert_2^2&=\sum_{i=1}^{k-1}\EE_{\zeta_h}\Big[X_{i,f^i}(h,f^k,f^k) \Big]\nonumber\\&\ge \left(C-\frac{1}{2}\right)\beta - \cO(R)\nonumber\\&\ge (C-1)\beta.
     \end{align}
        the first equality of Eq.~\eqref{eq:finalC-1beta} is derived from Eq.~\eqref{eq:firstorderterm}.

        Similarly, if $C\ge 100$, we can get \begin{align}
         \sum_{i=1}^{k-1}\EE_{\zeta_h}\Big[X_{i,\tilde{f'}}(h,\tilde{f},\tilde{g})\Big ]\ge \left(C/2+1\right)\beta\nonumber
     \end{align}
     and
     \begin{align}\label{eq:finalC/2beta}
         \sum_{i=1}^{k-1}\Big\Vert\EE_{\zeta_h}\Big[\ell_{h,f^i}(\zeta_h, \eta_h^i, f^k,f^k)\Big]\Big\Vert_2^2&=\sum_{i=1}^{k-1}\EE_{\zeta_h}\Big[X_{i,f^i}(h,f^k,f^k) \Big]\nonumber\\&\ge \left(C/2+1\right)\beta - \cO(R)\nonumber\\&\ge (C/2)\beta.
     \end{align}

        Similar to Lemma~\ref{lemma:Cbeta to C+1beta}, to prove the second inequality Eq.\eqref{eq:C-1beta second inequality}, we can add the $\eta_h$ to expectation by replacing the $\EE_{s_{h+1}}[X_{i,f'}(h,f,g)]$ to $\EE_{\eta_h\sim \pi^i}\EE_{s_{h+1}}[X_{i,f'}(h,f,g)]$.
\end{proof}
\subsection{Proof of Lemma~\ref{lemma:adaptive batch}}
Assume that at episode $k$ we have $L_h^{1:k}(D^{1:k}_h, f^k, f^k)-\inf_{g \in \cG}L_h^{1:k}(D^{1:k}_h,f^k, g)\ge 24\beta$, and it belongs to a batch with length $2^i$. First by Lemma~\ref{lemma:optimal -beta}, $$L_h^{1:k}(D^{1:k}_h, f^k, f^k)-L_h^{1:k}(D^{1:k}_h,f^k, \cT(f^k))\ge23\beta.$$
Suppose $i=0$ and this batch starts at $k'$, thus at episode $k'$ we have
\begin{align*}
L_h^{1:k'}(D^{1:k'}_h, f^{k}, f^{k})-L_h^{1:k'}(D^{1:k}_h,f^{k}, \cT(f^k))&\le L_h^{1:k'}(D^{1:k'}_h, f^{k}, f^{k})-\inf_{g \in \cG}L_h^{1:k'}(D^{1:k}_h,f^{k}, g)\\&\le \beta,
\end{align*}
where the second inequality is the property of optimistic confidence set. Then by these two inequalities, 
\begin{align*}
    L_h^{k'+1:k}(D^{k'+1:k}_h, f^k, f^k)-L_h^{k'+1:k}(D^{k'+1:k}_h,f^k, \cT(f^k))\ge 22\beta.
\end{align*}
This is impossible because $i = 0$ and thus $k = k' + 2^i = k' + 1$, and $\beta\ge 2R^2\ge L_h^{k'+1:k}(D^{k'+1:k}_h, f^k, f^k)-L_h^{k'+1:k}(D^{k'+1:k}_h,f^k, \cT(f^k))$, where $R$ is the upper bound of the norm of loss function $\ell$.

Now suppose $i \ge 1$, then 
\begin{align*}
L_h^{1:k'}(D^{1:k'}_h, f^{k}, f^{k})-L_h^{1:k'}(D^{1:k}_h,f^{k}, \cT(f^k))&\le L_h^{1:k'}(D^{1:k'}_h, f^{k}, f^{k})-\inf_{g \in \cG}L_h^{1:k'}(D^{1:k}_h,f^{k}, g)\\&\le 5\beta,
\end{align*}
where the second inequality is derived by the updating rule with $i\neq 0$, and the fact that $f^k = f^{k'}.$
Then we can also get 
\begin{align*}
    L_h^{k'+1:k}(D^{k'+1:k}_h, f^k, f^k)-L_h^{k'+1:k}(D^{k'+1:k}_h,f^k, \cT(f^k))\ge 24\beta - 5\beta = 19\beta.
\end{align*}
Note that we only consider the MDP problem here, then $\pi^i, f^i$ are the same for $i \in [k'+1,k]$.
By combining this fact and  Lemma \ref{lemma:Cbeta to C-1beta}, we can get 
\begin{align}
    (k-k'-1) \cdot \EE_{\eta_h \sim \pi^k}\Big\Vert\EE_{\zeta_h}\Big[\ell_{h,f^k}(\zeta_h, \eta_h, f^k, f^k)\Big]\Big\Vert_2^2 &=  \sum_{i=k'+1}^{k} \EE_{\eta_h \sim \pi^i}\Big\Vert\EE_{\zeta_h}\Big[\ell_{h,f^i}(\zeta_h, \eta_h, f^k, f^k)\Big]\Big\Vert_2^2 \nonumber\\&\ge 19\beta - \beta \nonumber\\&= 18\beta.\label{eq:greater12beta}
\end{align}
However, since $k \in [ k' + 2^i, k'+2^{i+1})$ with $i \ge 1$, we know for $t \in [k'+2^{i-1}, k'+2^i)$, $f^t = f^k$ because at episode $k'+2^{i-1}$, the agent does not change the policy by the definition of $k'$. Thus, denote $k_1 = k' + 2^{i-1}+1$, $k_2 = k'+2^i$, we have 
\begin{align*}
    L_h^{k_1:k_2}(D_h^{k_1:k_2}, f^{k_2}, f^{k_2}) - L_h^{k_1:k_2}(D_h^{k_1:k_2}, f^{k_2}, \cT(f^{k_2})) \le 5\beta,
\end{align*}
and 
\begin{align}
    (k_2-k_1)\cdot \EE_{\eta_h \sim \pi^k}\Big\Vert\EE_{\zeta_h}\Big[\ell_{h,f^k}(\zeta_h, \eta_h, f^{k}, f^{k})\Big] \Big\Vert_2^2& = \sum_{i=k_1}^{k_2} \EE_{\eta_h \sim \pi^i}\Big\Vert\EE_{\zeta_h}\Big[\ell_{h,f^i}(\zeta_h, \eta_h, f^{k_2}, f^{k_2})\Big]\Big\Vert_2^2 \nonumber\\&\le 6\beta. \label{eq:lessthan6beta}
\end{align}
Note that $3\cdot (k_2-k_1) = 3\cdot (2^{i-1}-1) > 2^i \ge k-k'-1$, we know Eq.\eqref{eq:greater12beta} and Eq.\eqref{eq:lessthan6beta} cannot both hold. Hence we have done the proof by contradiction.

\subsection{Proof of Lemma~\ref{lemma:BEdim l2}}

First, we introduce the definition of $D_\Delta$-type Bellman eluder dimension.
\begin{definition}[Bellman eluder dimension]
Given a function class $\cF$, the $D_\Delta$ Bellman eluder dimension $d(\cF, D_\Delta, \varepsilon)$ is the length $n$ of longest sequence $((s_h^1, a_h^1), \cdots, (s_h^n, a_h^n))$ such that for some $\varepsilon'\ge \varepsilon$ and any $j \in [n]$, there exists a $f^j \in \cF$ and $\sqrt{\sum_{i=1}^{j-1} \cE_h(f^j, s_h^i, a_h^i)^2}\le \varepsilon$ and $\cE_h(f^j, s_h^j, a_h^j)>\varepsilon.$ The term $\cE_h(f,s,a)$ is the Bellman error $\cE_h(f,s,a)= (f_h - \cT(f_{h+1}))(s,a)$.
\end{definition}
\begin{proof}
Now we begin to prove the Lemma~\ref{lemma:BEdim l2}. 
First, we restate the Proposition 43 in \cite{jin2021bellman} with $\Pi = D_\Delta$.
\begin{proposition}[Proposition 43 in \cite{jin2021bellman} with $\Pi = D_\Delta$]
    Given a function class $\Phi$ defined on $\cX$,  suppose given sequence $\{\phi_i\}_{1\le i\le K}\subset \Phi$  and sequences $\{(s_h^i,a_h^i)\}_{i \le [K]}$ such that for all $k \in [K]$, $\sum_{i=1}^k (\EE_{\mu_i}[\phi_k])^2\le \beta$, then for all $k \in [K]$,
    \begin{align*}
        \sum_{i=1}^k \II\{|\EE_{\mu_i}[\phi_i]|>\varepsilon\}\le \left(\frac{\beta}{\varepsilon^2}+1\right)d_{\mathrm{BE}}(\Phi, D_\Delta, \varepsilon). 
    \end{align*}
\end{proposition}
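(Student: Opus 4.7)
The statement is the classical eluder pigeonhole bound (Russo--Van Roy style, adapted to the BE setting by Jin et al.), so the plan is to reproduce that greedy grouping argument directly in our notation. Let $L := \sum_{i=1}^k \II\{|\EE_{\mu_i}[\phi_i]|>\varepsilon\}$, list the indices on which the indicator fires as $i_1<i_2<\cdots<i_L$, and write $d:=d_{\mathrm{BE}}(\Phi,D_\Delta,\varepsilon)$. The goal is to produce a partition $\{i_1,\dots,i_L\}=B_1\sqcup\cdots\sqcup B_m$ with two properties: (a) each $B_t$, read in increasing order of its indices, forms an $\varepsilon$-independent sequence (in the sense of the BE-dimension definition), and (b) $m\le \beta/\varepsilon^2+1$. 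Together with $|B_t|\le d$ from (a) and the definition of $d_{\mathrm{BE}}$, this yields $L=\sum_t|B_t|\le m d \le (\beta/\varepsilon^2+1)d$.

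The construction is greedy: process $i_1,i_2,\dots,i_L$ in order, and for each new $i_\ell$, try to append it to $B_1$, then $B_2$, etc.; append to the first $B_t$ such that $B_t\cup\{i_\ell\}$ remains an $\varepsilon$-independent sequence with respect to $\phi_{i_\ell}$, and otherwise open a new bucket $B_{m+1}=\{i_\ell\}$. Appending succeeds iff $\sum_{j\in B_t}(\EE_{\mu_j}[\phi_{i_\ell}])^2\le \varepsilon^2$ and $|B_t|<d$; note that since $|\EE_{\mu_{i_\ell}}[\phi_{i_\ell}]|>\varepsilon$, whenever the sum-of-squares condition holds the extension is automatically $\varepsilon$-independent with witness $\varepsilon'=\varepsilon$. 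Property (a) then holds by construction.

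For (b), observe that if appending $i_\ell$ to $B_t$ fails because $|B_t|=d$, then in fact we must already have $\sum_{j\in B_t}(\EE_{\mu_j}[\phi_{i_\ell}])^2>\varepsilon^2$, because otherwise $B_t\cup\{i_\ell\}$ would be an $\varepsilon$-independent sequence of length $d+1$, contradicting the definition of $d_{\mathrm{BE}}$. So in every failure mode we have $\sum_{j\in B_t}(\EE_{\mu_j}[\phi_{i_\ell}])^2>\varepsilon^2$. When a brand new bucket $B_{m+1}$ is opened at index $i_\ell$, all $m$ existing buckets have failed, so summing over $t$ and using disjointness of the $B_t$ gives
\[
m\varepsilon^2 \;<\; \sum_{t=1}^m \sum_{j\in B_t}(\EE_{\mu_j}[\phi_{i_\ell}])^2 \;\le\; \sum_{j=1}^{i_\ell-1}(\EE_{\mu_j}[\phi_{i_\ell}])^2 \;\le\; \beta,
\]
where the last step uses the hypothesis $\sum_{j=1}^{i_\ell}(\EE_{\mu_j}[\phi_{i_\ell}])^2\le\beta$. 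Hence the bucket count just before opening the new one is at most $\beta/\varepsilon^2$, i.e.\ $m+1\le\beta/\varepsilon^2+1$ throughout, giving (b).

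The only subtle step, and the one I expect to require the most care, is the equivalence between ``extension fails'' and the sum-of-squares lower bound $>\varepsilon^2$, because the BE-dimension definition quantifies over \emph{some} $\varepsilon'\ge\varepsilon$ rather than fixing $\varepsilon'=\varepsilon$. The way I plan to handle this is to note that any $\varepsilon$-independent extension with witness $\varepsilon'\ge\varepsilon$ requires in particular $\sum_{j\in B_t}(\EE_{\mu_j}[\phi_{i_\ell}])^2\le (\varepsilon')^2$, and since $|\EE_{\mu_{i_\ell}}[\phi_{i_\ell}]|>\varepsilon$ we may always take $\varepsilon'=\varepsilon$ as a valid witness whenever the sum-of-squares is $\le\varepsilon^2$; conversely, if the sum-of-squares exceeds $\varepsilon^2$, no $\varepsilon'\ge\varepsilon$ can simultaneously dominate the square root of that sum and be dominated by $|\EE_{\mu_{i_\ell}}[\phi_{i_\ell}]|$. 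Once this dichotomy is nailed down, the two counting estimates (a)--(b) combine to give the claimed bound $L\le(\beta/\varepsilon^2+1)\,d_{\mathrm{BE}}(\Phi,D_\Delta,\varepsilon)$.
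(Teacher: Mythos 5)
Your argument is correct. Note, however, that the paper itself never proves this proposition: it is imported as a black box (``Proposition 43 in \cite{jin2021bellman} with $\Pi = D_\Delta$'') inside the proof of Lemma~\ref{lemma:BEdim l2}, so there is no in-paper argument to compare against. What you have written is a faithful, self-contained reconstruction of the standard Russo--Van~Roy pigeonhole argument on which the cited result rests: greedily partition the indices where $|\EE_{\mu_i}[\phi_i]|>\varepsilon$ into buckets that are $\varepsilon$-independent sequences, bound each bucket's size by $d_{\mathrm{BE}}(\Phi,D_\Delta,\varepsilon)$, and bound the number of buckets by observing that opening a new bucket forces every existing bucket $B_t$ to satisfy $\sum_{j\in B_t}(\EE_{\mu_j}[\phi_{i_\ell}])^2>\varepsilon^2$, which after summing over disjoint buckets and invoking the hypothesis $\sum_{j\le i_\ell}(\EE_{\mu_j}[\phi_{i_\ell}])^2\le\beta$ caps the bucket count at $\beta/\varepsilon^2+1$. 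The one point you flag as delicate --- the quantifier ``for some $\varepsilon'\ge\varepsilon$'' in the dimension definition --- you resolve correctly: you only ever need $\varepsilon'=\varepsilon$ as the witness scale, both to certify that your buckets are admissible sequences (so $|B_t|\le d$) and to derive the contradiction in the $|B_t|=d$ failure case; the ``converse'' direction you worry about (whether some other witness function or scale could rescue independence when the sum of squares of $\phi_{i_\ell}$ exceeds $\varepsilon^2$) is never actually needed, since your append rule is defined operationally in terms of that specific sum of squares. Two cosmetic remarks: the hypothesis as stated sums to $k$ rather than $k-1$, which only strengthens it and you use it correctly; and the paper's transcription of the $D_\Delta$-BE dimension definition garbles $\varepsilon$ versus $\varepsilon'$, so your reading (a common $\varepsilon'\ge\varepsilon$ for the whole sequence) is the intended one from \cite{jin2021bellman}.
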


Now,  we first fixed a $h \in [H],$ then choosing $\Pi = D_\Delta$ , $\phi_i = \cE_h(f_i, s,a)$ and $\mu_i = \II\{(s,a) = (s_h^i, a_h^i)\}$ in proposition 43, based on since $\sum_{i=1}^{k-1}\cE_h(f^k, s_h^i, a_h^i)\le \beta$ for all $h, k$, we have 
\begin{align*}
    \sum_{i=1}^k \II\{\cE_h(f^i, s_h^i, a_h^i)^2>\varepsilon^2\}\le \left(\frac{\beta}{\varepsilon^2}+1\right)d_{\mathrm{BE}}(\cF, D_\Delta, \varepsilon).
\end{align*}
Then by replacing $\varepsilon^2$ to $\varepsilon$,
\begin{align*}
    \sum_{i=1}^k \II\{\cE_h(f^i, s_h^i, a_h^i)^2>\varepsilon\}\le \left(\frac{\beta}{\varepsilon}+1\right)d_{\mathrm{BE}}(\cF, D_\Delta, \sqrt{\varepsilon}).
\end{align*}
Now sort the sequence $\{\cE_h(f^1, s_h^1, a_h^1)^2, \cE_h(f^2, s_h^2, a_h^2)^2,\cdots, \cE_h(f^k, s_h^k, a_h^k)^2\}$ in a decreasing order and denote them by $\{e_1,\cdots,e_k\}$, for any $\omega$ we can have 
\begin{align*}
    \sum_{i=1}^k e_i = \sum_{i=1}^k e_i\II\{e_i\le \omega\}+\sum_{i=1}^k e_i \II\{e_i> \omega\} \le k\omega + \sum_{i=1}^k e_i \II\{e_i> \omega\}.
\end{align*}
Assume $e_t>\omega$ and there exists a parameter $\alpha \in (\omega, e_t)$, then 
\begin{align*}
    t\le \sum_{i=1}^k \II\{e_t>\alpha\}\le \left(\frac{\beta}{\alpha}+1\right)d_{\mathrm{BE}}(\cF, D_\Delta, \sqrt{\alpha})\le \left(\frac{\beta}{\alpha}+1\right)d_{\mathrm{BE}}(\cF, D_\Delta, \sqrt{\omega}).
\end{align*}
Now denote $d = d_{\mathrm{BE}}(\cF, D_\Delta, \sqrt{\omega})$, we can get $\alpha\le \frac{d\beta}{t-d}$. Now, $e_t<\alpha\le \frac{d\beta}{t-d}$. Also, recall that $e_t\le R^2$, we can get
\begin{align*}
    \sum_{i=1}^k e_i\II\{e_i>\omega\} \le \min\{d,k\}R^2 + \sum_{i=d+1}^k\left(\frac{d\beta}{t-d}\right) \le \min\{d,k\}R^2 + 2d\beta \log K = \cO(d\beta\log K),
\end{align*}
where the last equality derived from the condition $\beta \ge R^2.$
By choosing $\omega = 1/K$, the equation Eq.~\eqref{eq:BEdiml2} holds.
\end{proof}

\subsection{Proof of Lemma~\ref{lemma:linear mixture l2}}\label{sec:proof of linear mixture}
\begin{proof}

In this subsection, we prove that linear mixture MDP belongs to $\ell_2$-type EC class with $\cT(f) = f^*$ for any $f \in \cF$ and loss function \begin{align*}
    &\ell_{h,f'}(s_{h+1}, \{s_h,a_h\}, f, g) \\&\quad= \theta_{h,g}^T\left[\psi(s_h,a_h)+ \sum_{s'}\phi(s_h, a_h, s')V_{h+1, f'}(s')\right]-r_h-V_{h+1,f'}(s_{h+1}).
\end{align*}
 It is easy to show that the loss function above is bounded. The expectation of the loss function can be calculated by 
\begin{align*}
    \EE_{s_{h+1}}\Big[\ell_{h,f'}(s_{h+1}, \{s_h,a_h\}, f, g)\Big] = (\theta_{h,g}-\theta_h^*)^T \left[\psi(s_h,a_h)+ \sum_{s'}\phi(s_h, a_h, s')V_{h+1,f'}(s')\right].
\end{align*}
Now we prove the loss function satisfies the dominance, decomposable property and $\ell_2$-type condition.
\paragraph{1. Dominance}
\begin{align*}
    &\sum_{i=1}^k (V_{1,f^i}(s_1) - V_{\pi^i}(s_1))\\&\quad\le \sum_{h=1}^H \sum_{i=1}^k \EE_{s_h, a_h \sim \pi^i}[Q_{h,f^i}(s_h, a_h) - r_h-V_{h+1,f^i}(s_{h+1})]\\
    &\quad= \sum_{h=1}^H \sum_{i=1}^k \EE_{s_h, a_h \sim \pi^i}\Big[(\theta_{h,f^i}-\theta_h^*)^T\Big[\psi(s_h, a_h) + \sum_{s'}\phi(s_h,a_h,s')V_{h+1, f^i}(s')\Big]\Big]\\&\quad=
    \sum_{h=1}^H \sum_{i=1}^k\EE_{s_h, a_h \sim \pi^i,s_{h+1}}\Big[\ell_{h,f^i}(s_{h+1}, \{s_h,a_h\}, f^i, f^i)\Big].
\end{align*}
\paragraph{2. Decomposable Property}
\begin{align*}
    &\ell_{h,f'}(s_{h+1}, \{s_h,a_h\}, f, g) - \EE_{s_{h+1}}\Big[\ell_{h,f'}(s_{h+1}, \{s_h,a_h\}, f, g)\Big] \\&\quad= (\theta_h^*)^T\left[\psi(s_h,a_h)+ \sum_{s'}\phi(s_h, a_h, s')V_{h+1}(s')\right]\\
    &\quad= \ell_{h,f'}(s_{h+1}, \{s_h,a_h\}, f, f^*).
\end{align*}
\paragraph{3. $\ell_2$-type Eluder Condition}
First, for any $h$ and $\eta_h =\{s_h, a_h\}$, we have 

\begin{align*}
    &\sum_{i=1}^{k-1}\Big\Vert\EE_{s_{h+1}}\Big[\ell_{h,f^i}(s_{h+1}, \eta_h^i, f^k, f^k)\Big]\Big\Vert_2^2 
    \\&\quad=\sum_{i=1}^{k-1}\left((\theta_{h,f^k}-\theta_h^*)^T \left[\psi(s_h^i,a_h^i)+ \sum_{s'}\phi(s_h^i, a_h^i, s')V_{h+1, f^i}(s')\right]\right)^2.
\end{align*}
Denote $\psi(s_h^i,a_h^i)+ \sum_{s'}\phi(s_h^i, a_h^i, s')V_{h+1, f^k}(s') = W_{h,f^i}(s_h^i, a_h^i)$, $(\theta_{h,f^i}-\theta_h^*) = X_h(f^i)$ 
 and $$\Sigma_k =  I + \sum_{i=1}^{k-1}W_{h,f^i}(s_h^i, a_h^i)W_{h,f^i}(s_h^i, a_h^i)^T,$$ then 
\begin{align*}
    \Vert \theta_{h,f^k}-\theta_{h}^* \Vert_{\Sigma_i}^2 = \Vert X_h(f^k)\Vert_{\Sigma_i}^2 &= \sum_{i=1}^{k-1}\Big\Vert\EE_{s_{h+1}}\Big[\ell_{h,f^i}(s_{h+1}, \eta_h^i, f^k, f^k)\Big]\Big\Vert_2^2  + 4\\
    &\le \beta + 4,
\end{align*}
where $\Vert \theta\Vert_2\le 1$. Now note that $\Vert\psi(s,a)\Vert_2\le 1$ and $\Vert\sum_{s'}\phi(s,a,s')V_{h+1,f}(s')\Vert_2 \le 1$, we can get 
\begin{align*}
    &\sum_{i=1}^k \Big\Vert \EE_{s_{h+1}}\Big[\ell_{h,f^i}(s_{h+1}, \eta_h^i, f^i, f^i)\Big]\Big\Vert_2^2 \\&\quad= \sum_{i=1}^k \left((\theta_{h,f^i}-\theta_h^*)^T \left[\psi(s_h^i,a_h^i)+ \sum_{s'}\phi(s_h^i, a_h^i, s')V_{h+1, f^i}(s')\right]\right)^2\\
    &\quad= \sum_{i=1}^k 4\wedge\left((\theta_{h,f^i}-\theta_h^*)^T \left[\psi(s_h^i,a_h^i)+ \sum_{s'}\phi(s_h^i, a_h^i, s')V_{h+1, f^i}(s')\right]\right)^2\\
    &\quad\le \sum_{i=1}^k 4\wedge \Vert X_h(f^i)\Vert_{\Sigma_i}^2 \Vert W_{h,f^i}(s_h^i, a_h^i)\Vert_{\Sigma_i^{-1}}^2\\
    &\quad\le \sum_{i=1}^k 4\wedge (\beta + 4)\Vert W_{h,f^i}(s_h^i, a_h^i)\Vert_{\Sigma_i^{-1}}^2\\
    &\quad\le (\beta + 4)\sum_{i=1}^k \left(1 \wedge \Vert W_{h,f^i}(s_h^i, a_h^i)\Vert_{\Sigma_i^{-1}}^2\right).
\end{align*}
By the Elliptical Potential Lemma \citep{dani2008stochastic,abbasi2011improved}, 
\begin{align*}
    \sum_{i=1}^k \left(1 \wedge \Vert W_{h,f^i}(s_h^i, a_h^i)\Vert_{\Sigma_i^{-1}}^2\right)&\le \sum_{i=1}^k 2\log(1+\Vert W_{h,f^i}(s_h^i, a_h^i)\Vert_{\Sigma_i^{-1}}^2)\\
    & \le 2\log \frac{\det (\Sigma_{k+1})}{\det (\sigma_0)}.
\end{align*}
Now note that $\det(\Sigma_{k+1})\le \left(\frac{\tr(\Sigma_{k+1})}{d}\right)^{d}$, then 
\begin{align*}
    2\log \frac{\det \Sigma_{k+1}}{\det \Sigma_0} &= 2\log \det(\Sigma_{k+1})-2\log \det(\Sigma_0)\\
    &\le 2d\log\left(1+ \frac{ \sum_{i=1}^k \tr(W_{h,f^i}(s_h^i, a_h^i)W_{h,f^i}(s_h^i, a_h^i)^T)}{d}\right)\\
    & \le 2d\log\left(1 + \frac{\sum_{i=1}^k\Vert W_{h,f^i}(s_h^i, a_h^i)\Vert_2^2}{d}\right)\\
    &\le 2d\log\left(1 + \frac{4k}{d}\right).
\end{align*}
So we can get
\begin{align*}
    \sum_{i=1}^k \Big\Vert \EE_{s_{h+1}}\Big[\ell_{h,f^i}(s_{h+1}, \eta_h^i, f^i, f^i)\Big]\Big\Vert_2^2 \le 2d(\beta + 4)\log \left(1 + \frac{4k}{d}\right) = \cO(d\beta \log k),
\end{align*}
where we ignore all the terms that are independent with $k$.
\end{proof}
\subsection{Proof of Lemma~\ref{lemma:decouped markov l2 type}}
In this subsection, we prove that decoupled Markov Games belong to $\ell_2$-type EC class with $\cT_h Q_{h+1}(s,a,b) = r_h(s,a,b)+\EE_{s'\mid \PP_h(s'\mid s,a,b)}\max_{\upsilon}\min_{\mu}Q_h(s',\upsilon,\mu)$.

\paragraph{1. Dominance}
With probability at least $1-\delta$,
\begin{align}
    &\sum_{i=1}^k (V_{1,f^i}(s_1)-V_{\pi^i}(s_1))\nonumber\\&\quad= \sum_{h=1}^H \sum_{i=1}^k \EE_{s_{h+1}}\EE_{s_h\sim \pi^i}\Big[V_{h,f^i}(s_h)-r_h-V_{h+1,f^i}(s_{h+1})\Big]\nonumber\\
    &\quad= \sum_{h=1}^H \sum_{i=1}^k \EE_{s_{h+1}}\EE_{\pi^i}\Big[\min_{\mu}\PP_{\upsilon^i, \mu}Q_{h,f^i}(s_h, \upsilon^i, \mu)-r_h-V_{h+1,f^i}(s_{h+1})\Big]\label{eq:best response}\\
    &\quad\le \sum_{h=1}^H \sum_{i=1}^k \EE_{s_{h+1}}\EE_{ \pi^i}\Big[\PP_{\upsilon^i, \mu^i}Q_{h,f^i}(s_h, \upsilon^i, \mu^i)-r_h-V_{h+1,f^i}(s_{h+1})\Big]\nonumber\\
    &\quad=\sum_{h=1}^H \sum_{i=1}^k \EE_{s_{h+1}}\EE_{\pi^i}\Big[Q(s_h,a_h,b_h)-r_h-V_{h+1,f^i}(s_{h+1})\Big]\label{eq: expectation ab}\\
    &\quad=\sum_{h=1}^H \sum_{i=1}^k \EE_{s_{h+1}}\Big[Q_{h,f^i}(s_h^i,a_h^i,b_h^i)-r_h-V_{h+1,f^i}(s_{h+1})\Big] + \cO(\sqrt{KH}\log (KH/\delta))\label{ineq:azuma2}\\
    &\quad= \sum_{h=1}^H \sum_{i=1}^k \EE_{s_{h+1}}\Big[\ell_{h,f^i}(s_{h+1}, \{s_h^i,a_h^i,b_h^i\}, f^i, f^i)\Big] + \cO(\sqrt{KH}\log (KH/\delta))\nonumber,
\end{align}
where the Eq.\eqref{eq:best response} holds because the greedy policy $\upsilon^i = \upsilon_{f^i}$ satisfies that $$V_{h,f^i}(s_h) = \min_\mu \PP_{\upsilon^i,\mu}Q_{h,f^i}(s_h,\upsilon^i,\mu).$$ Eq.\eqref{eq: expectation ab} holds because $\pi_h^i = (\upsilon^i, \mu^i)$, and Eq.\eqref{ineq:azuma2} follows from Azuma-Hoeffding's inequality. 
\paragraph{2. Decomposable Property}

\begin{align*}
    &\ell_{h,f'}(\zeta_h, \eta_h,f,g) - \EE_{\zeta_h}\Big[\ell_{h,f'}(\zeta_h, \eta_h,f,g)\Big]\\ &\quad= \Big[Q_{h,g}(s_h,a_h, b_h) - r(s_h,a_h, b_h) - V_{h+1,f}(s_{h+1})\Big]\\&\qquad\quad-\Big[Q_{h,g}(s_h,a_h, b_h) - (\cT_hV_{h+1,f})(s_h,a_h, b_h)\Big]\\
    &\quad=\Big[(\cT_hV_{h+1,f})(s_h,a_h, b_h)-r(s_h,a_h, b_h) - V_{h+1,f}(s_{h+1})\Big]\\
    &\quad=\ell_{h,f'}(\zeta_h, \eta_h, f, \cT(f)).
\end{align*}

\paragraph{3. $\ell_2$-type eluder Condition}
Note that $\EE_{\zeta_h}[\ell_{h,f'}(\zeta_h,\eta_h,f,g)]=[f_h - \cT_h f_{h+1}](s_h^i, a_h^i,b_h^i)$ is the Bellman residual of Markov Games. The proof can be derived similarly to  Lemma~\ref{lemma:BEdim l2} by replacing $\{a_h\}$ and the Bellman operator for single-agent MDP to $\{a_h,b_h\}$ and the Bellman operator for the two-player zero-sum MG.

\subsection{Proof of Lemma~\ref{lemma:KNRl2}}
\begin{proof}
We proof this Lemma by the classical $\ell_2$ eluder argument. First, denote $U_{h,f,j}$ with $j \in [d_s]$ as the $j$-th row of $U_{h,f}$, then 
\begin{align*}
    \Vert(U_{h,f}-U^*_h)\phi(s,a)\Vert_2^2 = \sum_{j=1}^{d_s}\Vert(U_{h,f,j}-U_{h,j}^*)\phi(s,a)\Vert_2^2.
\end{align*}
Then, denote $\Sigma_k =  \sum_{i=1}^{k-1} \phi(s_h^i, a_h^i)\phi(s_h^i, a_h^i)^T + \lambda I $, we can get
\begin{align*}
    \sum_{j=1}^{d_s}\Vert U_{h,f^k,j}-U_{h,j}^*\Vert_{\Sigma_k}^2 &= \sum_{i=1}^{k-1}\sum_{j=1}^{d_s}((U_{h,f^k,j}-U_{h,j}^*)\phi(s_h^i, a_h^i))^2 + \lambda \sum_{j=1}^{d_s}\Vert U_{h,f^k,j}-U_{h,j}^*\Vert_2^2\\
    &\le \beta + \lambda \cdot d_sR^2,
\end{align*}
where $\Vert U_{h,f,j}\Vert_2^2\le \Vert U_{h,f}\Vert_2^2\le R^2$ for any $f \in \cF$.
Now, recall that $\Vert \phi(s,a)\Vert_2\le 1$, then $\Vert (U_{h,f^i}-U_{h}^*)\phi(s,a)\Vert_2^2\le 4R^2$. Hence, choosing $\lambda = \frac{4}{d_s}$,
\begin{align*}
    \sum_{i=1}^k \Vert (U_{h,f^i}-U_{h}^*)\phi(s_h^i, a_h^i)\Vert_2^2 &=\sum_{i=1}^k \left(\Vert (U_{h,f^i}-U_{h}^*)\phi(s_h^i, a_h^i)\Vert_2^2\wedge 4R^2\right) \\
    & =  \sum_{i=1}^k\left(\left(\sum_{j=1}^{d_s} \Vert (U_{h,f^i,j}-U_{h,j}^*)\phi(s_h^i,a_h^i)\Vert_2^2\right) \wedge 4R^2 \right)\\
    &\le \sum_{i=1}^k\left(\left(\sum_{j=1}^{d_s} \Vert (U_{h,f^i,j}-U_{h,j}^*)\Vert_{\Sigma_i}^2 \Vert \phi(s_h^i, a_h^i)\Vert_{\Sigma_i^{-1}}^2\right) \wedge 4R^2 \right)\\
    &\le \sum_{i=1}^k \left(\left(\Vert \phi(s_h^i, a_h^i)\Vert_{\Sigma_i^{-1}}^2\sum_{j=1}^{d_s} \Vert U_{h,f^k,j}-U_{h,j}^*\Vert_{\Sigma_k}^2 \right) \wedge 4R^2 \right)\\
    &\le \sum_{i=1}^k \left(\left(\Vert \phi(s_h^i, a_h^i)\Vert_{\Sigma_i^{-1}}^2(\beta + 4R^2) \right) \wedge 4R^2 \right)\\
    &\le \sum_{i=1}^k (\beta + 4R^2)\left(1\wedge \Vert \phi(s_h^i, a_h^i)\Vert_{\Sigma_i^{-1}}^2\right).
\end{align*}
By the Elliptical Potential Lemma \citep{dani2008stochastic,abbasi2011improved}, we have 
\begin{align*}
    \sum_{i=1}^k\left(1\wedge \Vert \phi(s_h^i, a_h^i)\Vert_{\Sigma_i^{-1}}^2\right) & \le \sum_{i=1}^k 2\log (1+\Vert \phi(s_h^i, a_h^i)\Vert_{\Sigma_i^{-1}}^2)\\
    &\le 2\log \frac{\det \Sigma_{k+1}}{\det \Sigma_0}.
\end{align*}
Now note that $\det(\Sigma_{k+1}) \le \left(\frac{\tr(\Sigma_{k+1})}{d_\phi}\right)^{d_\phi}$, then 
\begin{align*}
    2\log \frac{\det \Sigma_{k+1}}{\det \Sigma_0} &= 2\log \det(\Sigma_{k+1})-2\log \det(\Sigma_0)\\
    &\le 2d_\phi\log\left(\lambda + \frac{ \sum_{i=1}^k \tr(\phi(s_h^i, a_h^i)\phi(s_h^i, a_h^i)^T)}{d_\phi}\right)\\
    & \le 2d_\phi\log\left(\lambda + \frac{\sum_{i=1}^k\Vert \phi(s_h^i, a_h^i)\Vert_2^2}{d_\phi}\right)\\
    &\le 2d_\phi\log\left(\lambda + \frac{k}{d_\phi}\right).
\end{align*}
Thus 
\begin{align*}
    \sum_{i=1}^k \Vert (U_{h,f^i}-U_{h}^*)\phi(s_h^i, a_h^i)\Vert_2^2 \le 2d_\phi(\beta + 4R^2)\log \left(\frac{4}{d_s}+\frac{k}{d_\phi}\right) =\cO(d_\phi \beta \log k),
\end{align*}
where we ignore all terms independent with $k$, and regard $R$  as a  constant.
\end{proof}


}

\end{document}